\title{ReVar: Strengthening Policy Evaluation via Reduced Variance Sampling}
\author[1]{Subhojyoti Mukherjee}
\author[2]{Josiah P. Hanna}
\author[1]{Robert Nowak}
\affil[1]{%
    Department
of Electrical and Computer Engineering\\
    University of Wisconsin-Madison\\
    USA
}
\affil[2]{%
    Computer Sciences Department\\
    University of Wisconsin-Madison\\
    USA
}
\begin{document}
\maketitle

\begin{abstract}
    This paper studies the problem of data collection for policy evaluation in Markov decision processes (MDPs). In policy evaluation, we are given a \textit{target} policy and asked to estimate the expected cumulative reward it will obtain in an environment formalized as an MDP. We develop theory for optimal data collection within the class of tree-structured MDPs by first deriving an oracle data collection strategy that uses knowledge of  the  variance of the reward distributions. We then introduce the \textbf{Re}duced \textbf{Var}iance Sampling (\rev\!) algorithm that approximates the oracle strategy when the reward variances are unknown a priori and bound its sub-optimality compared to the oracle strategy. Finally, we empirically validate that \rev leads to policy evaluation with mean squared error comparable to the oracle strategy and significantly lower than simply running the target policy.
\end{abstract}

\section{Introduction}
\label{sec:intro}


In reinforcement learning (RL) applications, there is often a need for policy evaluation to determine (or estimate) the expected return (future cumulative reward) of a given policy.
Policy evaluation is also required in other sequential decision-making settings outside of RL.
For example, testing an autonomous vehicle stack or ad-serving system can be seen as policy evaluation applications.
Accurate and data efficient policy evaluation is critical for safe and trust-worthy deployment of autonomous systems.

This paper studies data collection for low mean squared error (MSE) policy evaluation in sequential decision-making tasks formalized as Markov decision processes (MDPs).
    %
The objective of policy evaluation is to estimate the expected return that will be obtained by running a \textit{target policy} which is a given probabilistic mapping from states to actions. 

To evaluate the target policy, we require data from the environment in which it will be deployed. Collecting data requires running a (possibly non-stationary) \textit{behavior} policy to generate state-action-reward trajectories.
%
Our goal is to find a behavior policy that leads to a minimum MSE evaluation of the target policy.

The most natural choice is \textit{on-policy sampling} in which we use the target policy as the behavior policy.
However, we show that in some cases this choice is far from optimal (e.g.,  \Cref{app:fig-expt} in our empirical analysis) as it fails to actively take actions from which the expected return is uncertain.
Instead, an optimal behavior policy should take actions in any given state to reduce uncertainty in the current estimate of the expected return from that state.



%


Our paper makes the following main contributions. 
We first derive an optimal ``oracle"  behavior policy for finite tree-structured MDPs \textit{assuming oracle access to the MDP transition probabilities and variances of the reward distributions}.  Sampling trajectories according to the oracle behavior policy minimizes the MSE of the estimator of the target policy's expected.
As a special case (depth 1 tree MDPs), we recover the optimal behavior policy for multi-armed bandits \cite{carpentier2015adaptive}.

We then introduce a practical algorithm, \textbf{Re}duced \textbf{Var}iance Sampling (\rev\!), that adaptively learns the optimal behavior policy by observing rewards and adjusting the policy to select actions that reduce the MSE of the estimator. 
The main idea of \rev\! is to plug-in upper-confidence bounds on the reward distribution variances to approximate the oracle behavior policy.
We define a notion of policy evaluation regret compared to the oracle behavior policy, and bound the regret of \rev\!. The regret converges rapidly to $0$ as the number of sampled episodes grows, theoretically guaranteeing that \rev\! quickly matches the performance of the oracle policy.
Finally, we implement \rev and show it leads to low MSE policy evaluation in both a tree-structured and a general finite-horizon MDP.
%
Taken together, our contributions provide a theoretical foundation towards  optimal data collection for policy evaluation in MDPs.



The remainder of the paper is organized as follows. In \Cref{sec:bandit} we reformulate our problem in the bandit setting and discuss related bandit works. In \Cref{sec:tree-mdp} we extend the bandit formulation to the tree MDP. Finally we introduce the more general Directed Acyclic Graph (DAG) MDP in \Cref{sec:dag-mdp} and discuss some limitations of our sampling behavior. We show numerical experiments in \Cref{sec:expt} and conclude in \Cref{sec:conc}.

\section{Background}
\label{sec:background}

In this section, we introduce notation, define the policy evaluation problem, and discuss the prior literature.

\subsection{Notation}


A finite-horizon Markov Decision Process, $\M$, is the tuple $(\S, \A, P, R, \gamma, d_0, L)$, where $\S$ is a finite set of states, $\A$ is a finite set of actions, $P: \S \times \A \times \S \rightarrow [0,1]$ is a state transition function, $R$ is the reward distribution (formalized below), $\gamma \in[0,1)$ is the discount factor, $d_{0}$ is the starting state distribution, and $L$ is the maximum episode length.
A (stationary) policy, $\pi: \S \times \A \rightarrow [0,1]$, is a probability distribution over actions conditioned on a given state.
%
%
We assume data can only be collected through episodic interaction: an agent begins in state $S_0 \sim d_0$ and then at each step $t$ takes an action $A_t \sim \pi(\cdot | S_t)$ and proceeds to state $S_{t+1} \sim P(\cdot | S_t, A_t)$.
Interaction terminates in at most $L$ steps.
Each time the agent takes action $a_t$ in state $s_t$ it observes a reward $R_t \sim R(s_t,a_t)$.
We assume $R(s,a) = \Pw(\mu(s, a),\sigma^2(s, a))$, where $\Pw$ denotes a parametric distribution with mean $\mu(s,a)$ and variance $\sigma^2(s, a)$. 
The entire interaction produces a trajectory $H\coloneqq\{(S_t, A_t, R_t)\}_{t=1}^{L}$.
We assume $d_0$ is known but $P$ and the reward distributions are unknown. We define the value of a policy as:
$
v(\pi) \coloneqq \E_\pi[\sum_{t=1}^{L} \gamma^{t-1} R_t]$, where $\E_\pi$ is the expectation w.r.t. trajectories sampled by following $\pi$.

We will make use of the fact that the value of a policy can be written as:
$
v(\pi) = \E[v^\pi_0(S_0) | S_0 \sim d_0]
$
where, 
\[
v^\pi_t(s) \coloneqq \sum_a \pi(a|s) \mu(s,a) + \gamma \sum_{s'} P(s' | s,a) v^\pi_{t+1}(s')
\]
for $t \leq L$ and $v^\pi_t(s) = 0$ for $t > L$. 


\subsection{Policy Evaluation}\label{sec:policy-evaluation}

We now formally define our objective.
We are given a target policy, $\pi$, for which we want to estimate $v(\pi)$.
To estimate $v(\pi)$ we will generate a set of $K$ trajectories where each trajectory is generated by following some policy.
Let $H^k\coloneqq\{s^k_t, a^k_t, R^k_t(s^k_t, a^k_t)\}_{t=1}^L$ be the trajectory collected in episode $k$ and let $b^k$ be the policy ran to produce $H^k$.
The entire set of collected data is given as $\D\coloneqq\{H^k, b^k\}_{k=1}^K$.

Once $\D$ is collected, we estimate $v(\pi)$ with a certainty-equivalence estimate \cite{sutton1988learning}. 
Suppose $\D$ consists of $n = KL$ state-action transitions. We define the random variable representing the estimated future reward from state $s$ at time-step $t$ as:
\[
Y_n(s,t) \coloneqq \sum_a \pi(a|s) \wmu(s,a) + \gamma \sum_{s'} \wP(s'|s,a) Y_n(s',t+1),
\]
where $Y_n(s,t+1) \coloneqq 0$ if $t\geq L$, $\wmu(s,a)$ is an estimate of $\mu(s,a)$ and $\wP(s'|s,a)$ is an estimate of $P(s'|s,a)$, both computed from $\mathcal{D}$.
Finally, the estimate of $v(\pi)$ is computed as $Y_n \coloneqq \sum_{s} d_0(s) Y_n(s,0)$.
%
In the policy evaluation literature, the certainty-equivalence estimator is also known as the direct method \cite{jiang2016doubly} and, in tabular settings, can be shown to be equivalent to batch temporal-difference estimators \cite{sutton1988learning,pavse2020reducing}.
Thus, it is representative of two types of policy evaluation estimators that often give strong empirical performance \cite{voloshin2019empirical}.

Our objective is to determine the sequence of behavior policies that minimize error in estimation of $v(\pi)$.
Formally, we seek to minimize mean squared error which is defined as: 
    $\E_{\mathcal{D}}\left[\left(Y_n - v(\pi)\right)^{2}\right]$ 
where the expectation is over the collected data set $\D$.

\subsection{Related Work}

Our paper builds upon work in the bandit literature for optimal data collection for estimating a weighted sum of the mean reward associated with each arm.
%
%
\cite{antos2008active} study estimating the mean reward of each arm equally well and show that the optimal solution is to pull each arm proportional to the variance of its reward distribution.
Since the variances are unknown a priori, they introduce an algorithm that pulls arms in proportion to the empirical variance of each reward distribution.
\cite{carpentier2015adaptive} extend this work by introducing a weighting on each arm that is equivalent to the target policy action probabilities in our work.
They show that the optimal solution is then to pull each arm proportional to the product of the standard deviation of the reward distribution and the arm weighting.
Instead of using the empirical standard deviations, they introduce an upper confidence bound on the standard deviation and use it to select actions.
Our work is different from these earlier works in that we consider more general tree-structured MDPs of which bandits are a special case. 
%


In RL and MDPs, exploration is widely studied with the objective of finding the optimal policy.
Prior work attempts to balance exploration to reduce uncertainty with exploitation to converge to the optimal policy.
Common approaches are based on reducing uncertainty \citep{osband2016deep,o2018uncertainty} or incentivizing visitation of novel states \citep{barto2013intrinsic,pathak2017curiosity,burda2018exploration}.
These works differ from our work in that we focus on evaluating a fixed policy rather than finding the optimal policy.
In our problem, the trade-off becomes balancing taking actions to reduce uncertainty with taking actions that the target policy is likely to take.

Our work is similar in spirit to work on adaptive importance sampling \citep{rubinstein2013cross} which aims to lower the variance of Monte Carlo estimators by adapting the data collection distribution.
Adaptive importance sampling was used by \cite{hanna2017data} to lower the variance of policy evaluation in MDPs.
It has also been used to lower the variance of policy gradient RL algorithms \citep{bouchard2016online,ciosek2017offer}.
%
%
AIS methods attempt to find a single optimal sampling distribution whereas our approach attempts to reduce uncertainty in the estimated mean rewards. 
In a similar spirit, \citet{talebi2019learning} adapt the behavior policy to minimize error in estimating the transition model $P$.


\section{Optimal Data Collection in Multi-armed Bandits}
\label{sec:bandit}


Before we address optimal data collection for policy evaluation in MDPs, we first revisit the problem in the bandit setting as addressed by earlier work \cite{carpentier2015adaptive}.
%
%
%
The bandit setting provides intuition for how a good data collection strategy should select actions, though it falls short of an entire solution for MDPs.

Observe that the policy value in a bandit problem is defined as $v(\pi) \coloneqq \sum_{a=1}^A\pi(a)\mu(a)$ where the bandit consist of a single state $s$ and $A$ actions indexed as $a=1,2,\ldots,A$. 
In this setting, the horizon $L=1$ so we return to the same state after taking an action $a$ at time $t$. Hence, we drop the state $s$ from our standard notation.

Suppose we have a budget of $n$ samples to divide between the arms and let $T_n(1), T_n(2), \ldots, T_n(A)$ be the number of samples allocated to actions $1, 2, \ldots, A$ at the end of $n$ rounds. 
We define the estimate:
\begin{align}
    Y_n \coloneqq 
    \sum_{a=1}^A\dfrac{\pi(a)}{T_n(a)}\sum_{h=1}^{T_n(a)} R_{h}(a) = \sum_{a=1}^A \pi(a) \wmu(a). \label{eq:bandit-wmu} 
\end{align}
where, $R_h(a)$ is the $h^\text{th}$ reward received after taking action $a$.
Note that, once all actions where $\pi(a)>0$ have been tried, $Y_n$ is an unbiased estimator of $v(\pi)$ since $\wmu(a)$ is an unbiased estimator of $\mu(a)$. 
Thus, reducing MSE requires allocating the $n$ samples to reduce variance.
%
As shown by \citet{carpentier2015adaptive}, the minimal-variance allocation is given by pulling each arm with the proportion $b^\star(a) \propto \pi(a)\sigma(a)$. 
Though this result was previously shown, we prove it for completeness in \Cref{prop:bandit} in \Cref{app:prop:bandit}. 
%
Intuitively, there is more uncertainty about the mean reward for actions with higher variance reward distributions. Selecting these actions more often is needed to offset higher variance. The optimal proportion also takes $\pi$ into account as a high variance mean reward estimate for one action can be acceptable if $\pi$ would rarely take that action.

%
Note that sampling according to \cref{eq:bandit-optimal-prop} introduces unnecessary variance compared to deterministically selecting actions to match the optimal proportion.
%
%
Since the variances are typically unknown, a number of works in the bandit community propose different approaches to estimate the variances for both basic bandits and several related extensions \citep{antos2008active, carpentier2011finite, carpentier2012minimax, carpentier2015adaptive, neufeld2014adaptive}. Finally, note that incorporating variance aware techniques has been studied in multi-armed bandits \citep{audibert2009exploration, mukherjee2018efficient}. However, these works tend to focus on regret minimization, whereas we focus on MSE reduction.
However, none of of these works address the fundamental challenge that MDPs bring -- action selection must account for both immediate variance reduction in the current state as well as variance reduction in future states visited.
In the next section, we begin to address this challenge by deriving minimal-variance action proportions for tree-structured MDPs.  

\vspace*{-1em}
\section{Optimal Data Collection in Tree MDPS}
\vspace*{-1em}
\label{sec:tree-mdp}


In this section, we derive the optimal action proportions for tree-structured MDPs assuming the variances of the reward distributions are known, introduce an algorithm that approximates the optimal allocation when the variances are unknown, and bound the finite-sample MSE of this algorithm.
Tree MDPs are a straightforward extension of the multi-armed bandit model to capture the fact that the optimal allocation for each action in a given state must consider the future states that could arise from taking that action.

\begin{figure}[!tbh]
    \centering
    \includegraphics[scale=0.2]{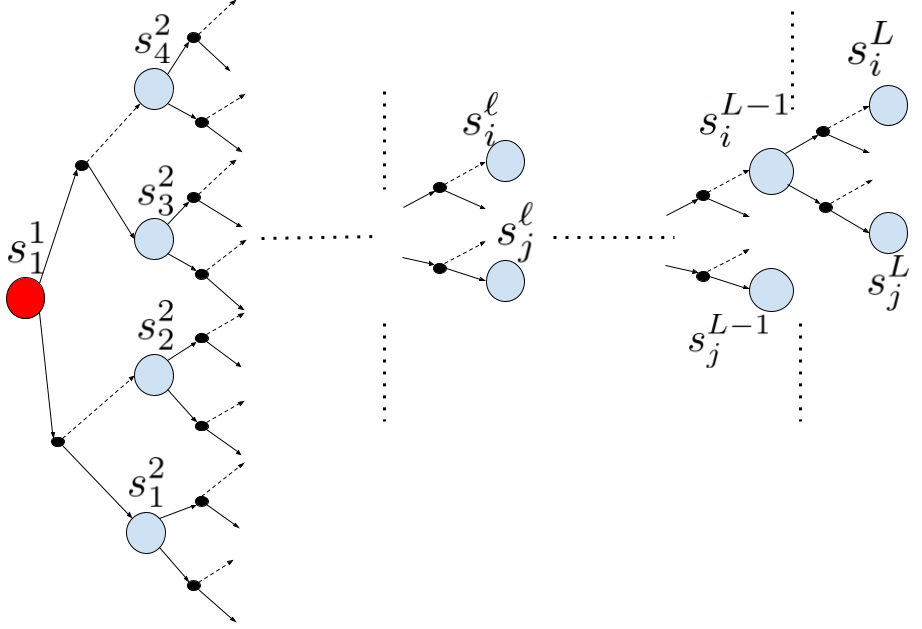}
    \caption{An $L$-depth tree with $2$ actions at each state.}
    \label{fig:L-step-tree}
\end{figure}
We first define a discrete tree MDP as follows:
\begin{definition}\textbf{(Tree MDP)}
\label{def:tree-mdp}
An MDP is a discrete tree MDP $\T\subset \M$ (see \Cref{fig:L-step-tree}) if the following holds:
\par
\textbf{(1)} There are $L$ levels indexed by $\ell$ where $\ell = 1,2,\ldots,L$. 
\par
\textbf{(2)} Every state is represented as $s^\ell_i$ where $\ell$ is the level of the state $s$ indexed by $i$. 
\par
\textbf{(3)} The transition probabilities are such that one can only transition from a state in level $\ell$ to one in level $\ell + 1$ and each non-initial state can only be reached through one other state and only one action in that state. Formally, $\forall s'$, $P(s' | s,a) \neq 0$ for only one state-action pair $s,a$ and if $s'$ is in level $\ell + 1$ then $s$ is in level $\ell$. Finally, $P(s^{L+1}_j|s^L_i, a) = 0, \forall a$.
\par
\textbf{(4)} For simplicity, we assume that there is a single starting state $s^1_1$ (called the root). It is easy to extend our results to multiple starting states with a starting state distribution, $d_0$, by assuming that there is only one action available in the root that leads to each possible start state, $s$, with probability $d_0(s)$. The leaf states are denoted as $s^L_i$. 
\par
\textbf{(5)} The interaction stops after $L$ steps in state $s^L_i$ after taking an action $a$ and observing the reward $R_L(s^L_i,a)$.
\end{definition}
Note that, because we assume a single initial state, $s_1^1$, we have that estimating $v(\pi)$ is equivalent to estimating $v(s_1^1)$.
A similar Tree MDP model has been previously used in theoretical analysis by \cite{jiang2016doubly}; our model is slightly more general as we consider per-step stochastic rewards whereas \cite{jiang2016doubly} only consider deterministic rewards at the end of trajectories.

\subsection{Oracle Data Collection}

We first consider an oracle data collection strategy which knows the variance of all reward distributions and knows the state transition probabilities.
After observing $n$ state-action-reward tuples, the oracle computes the following estimate of $v^{\pi}_{}(s^1_1)$ (or equivalently $v(\pi)$):
\begin{align}
    &Y_n(s^1_1) \coloneqq \sum_{a=1}^A\pi(a|s^1_1)\bigg(\dfrac{1}{T_n(s^1_1,a)}\sum_{h=1}^{T_n(s^1_1,a)}R_{h}(s^1_1, a)\nonumber\\
    &\quad + \gamma\sum_{s^{\ell+1}_j} P(s^{\ell+1}_j|s^1_1, a)Y_{n}(s^2_j)\bigg) \nonumber \\
    &= \!\sum_{a=1}^A\pi(a|s^1_1)\!\bigg(\!\wmu(s^1_1,a)
    \!+\! \gamma\!\!\sum_{s^{\ell+1}_j}\!\! P(s^{\ell+1}_j|s^1_1, a)Y_{n}(s^2_j)\!\bigg) \label{eq:tree-Yestimate}
\end{align}
where $T_n(s,a)$ denotes the number of times that the oracle took action $a$ in state $s$. Note that in \Cref{sec:background} we define $Y_n(s,t)$ but now we use $Y_n(s)$ as timestep is implicit in the layer of the tree. 
Also \eqref{eq:tree-Yestimate} differs from the estimator defined in \Cref{sec:policy-evaluation} as it uses the true transition probabilities, $P$, instead of their empirical estimate, $\widehat{P}$.
The MSE of $Y_n$ is:
\begin{align}
    \E_{\D}&[\left(Y_n(s^1_1) - v^{\pi}_{}(s^1_1)\right)^2] \nonumber\\
    &= \Var(Y_n(s^1_1)) + \bias^2(Y_n(s^1_1)). \label{eq:tree-mse}
\end{align}
The bias of this estimator becomes zero once all $(s,a)$-pairs with $\pi(a|s)>0$ have been visited a single time, thus we focus on reducing $\Var(Y_n(s^1_1))$.
%
Before defining the oracle data collection strategy, we first state an assumption on $\D$. 
\begin{assumption}
\label{assm:unbiased}
The data $\D$ collected over $n$ state-action-reward samples has at least one observation of each state-action pair, $(s,a)$, for which $\pi(a|s) > 0$.
\end{assumption}
\Cref{assm:unbiased} ensures that $Y_n$ is an unbiased estimator of $v(\pi)$ so that reducing MSE is equivalent to reducing variance.
Before stating our main result, we provide intuition with a lemma that gives the optimal proportion for each action in a $2$-depth tree.
\begin{lemma}
\label{lemma:main-tree}
Let $\T$ be a $2$-depth stochastic tree MDP as defined in \Cref{def:tree-mdp} (see \Cref{fig:mdp-3-state} in \Cref{app:thm:3state-sample}). 
Let $Y_n(s^1_1)$ be the estimated return of the starting state $s^1_1$ after observing $n$ state-action-reward samples. 
Note that $v^{\pi}_{}(s^1_1)$ is the expectation of $Y_n(s^1_1)$ under \Cref{assm:unbiased}. Let $\D$ be the observed data over $n$ state-action-reward samples. 
Minimal MSE, $\E_{\D}[(Y_{n}(s^1_1) - v^{\pi}_{}(s^1_1))^2]$,  is obtained by taking actions in each state in the following proportions:
\begin{gather*}
    b^*(a | s^2_j) \propto \pi(a | s^2_j) \sigma(s^2_j,a) \\
    b^*(a|s^1_1) \!\propto\!\! \sqrt{\pi^2(a|s^{1}_{1})\bigg[\sigma^2(s^1_1,a) \!+\!\gamma^2 \sum_{s^{2}_j}P(s^{2}_j|s^1_1,a)B^2(s^2_j)\bigg]},
\end{gather*}
where, $B(s^{2}_j)= \sum_a \pi(a|s^{2}_j) \sigma(s^{2}_j, a)$.
\end{lemma}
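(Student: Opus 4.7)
The plan is to write $\mathrm{MSE}(Y_n(s^1_1))$ in closed form as a function of the sampling counts and then minimize it by two successive applications of Cauchy--Schwarz, working from the leaves up to the root. By \Cref{assm:unbiased} the estimator is unbiased, so MSE equals variance, and by the tree property the reward observations at distinct state--action pairs are mutually independent, giving a clean additive variance decomposition.

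First I would expand $Y_n(s^1_1)$ via \eqref{eq:tree-Yestimate} and take variances. Since the subtrees rooted at distinct $s^2_j$ are disjoint and their reward samples are independent of the root samples, all cross terms vanish and
\begin{align*}
\Var(Y_n(s^1_1)) &= \sum_a \pi^2(a|s^1_1)\frac{\sigma^2(s^1_1,a)}{T_n(s^1_1,a)} \\
&\quad + \gamma^2\! \sum_a \pi^2(a|s^1_1) \sum_j P^2(s^2_j|s^1_1,a) \sum_{a'} \pi^2(a'|s^2_j)\frac{\sigma^2(s^2_j,a')}{T_n(s^2_j,a')}.
\end{align*}
Rewriting in proportions, $T_n(s^1_1,a) = n\,b(a|s^1_1)$ at the root and, since by the tree property each $s^2_j$ has a unique parent action $a_j$, $T_n(s^2_j,a') = n\,b(a_j|s^1_1)P(s^2_j|s^1_1,a_j)b(a'|s^2_j)$, which cancels one power of $P$ in the second sum.

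I then optimize sequentially. For each leaf $j$ separately, minimizing $\sum_{a'}\pi^2(a'|s^2_j)\sigma^2(s^2_j,a')/b(a'|s^2_j)$ under $\sum_{a'} b(a'|s^2_j)=1$ is exactly the bandit subproblem of \Cref{prop:bandit}: Cauchy--Schwarz yields $b^*(a'|s^2_j) \propto \pi(a'|s^2_j)\sigma(s^2_j,a')$ with minimum value $B^2(s^2_j)$. Plugging this back collapses the variance to
\[
\Var(Y_n(s^1_1)) = \frac{1}{n}\sum_a \frac{\pi^2(a|s^1_1)}{b(a|s^1_1)}\Bigl[\sigma^2(s^1_1,a) + \gamma^2 \sum_j P(s^2_j|s^1_1,a) B^2(s^2_j)\Bigr],
\]
and a second Cauchy--Schwarz under $\sum_a b(a|s^1_1)=1$ delivers the claimed root proportion $b^*(a|s^1_1)\propto \sqrt{\pi^2(a|s^1_1)[\sigma^2(s^1_1,a)+\gamma^2 \sum_j P(s^2_j|s^1_1,a)B^2(s^2_j)]}$.

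The main technical obstacle is the count bookkeeping at the leaves: because transitions are stochastic, $T_n(s^2_j)$ is random rather than equal to $n\,b(a_j|s^1_1)P(s^2_j|s^1_1,a_j)$. I would handle this by first computing the conditional variance given the realized counts, so that the $1/T_n$ terms above are literal, and then observing that at the relevant scale $\E[1/T_n(s^2_j,a')] \to 1/\E[T_n(s^2_j,a')]$, so the first-order conditions for the optimal allocation are driven by the expected counts and the stated proportions are recovered. Everything else reduces to independence of reward samples across $(s,a)$ pairs and two textbook applications of Cauchy--Schwarz.
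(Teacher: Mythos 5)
Your proposal is correct and follows essentially the same route as the paper: the same unbiasedness argument, the same conditional variance decomposition carrying the $P^2(s^2_j|s^1_1,a)$ factor, and the same substitution $T_n(s^2_j,a')/n = P(s^2_j|s^1_1,a)\,b(a|s^1_1)\,b(a'|s^2_j)$ exploiting the unique-parent structure of the tree (the paper likewise only treats the realized leaf counts at the level of expected counts). The only difference is that you solve the resulting constrained minimization by Cauchy--Schwarz applied leaf-first and then at the root, whereas the paper runs Lagrange multipliers/KKT on the joint problem; these are equivalent here because the objective factorizes so that the optimal leaf proportions do not depend on the root proportions.
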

\textbf{Proof (Overview):} We decompose the MSE into its variance and bias terms and show that $Y_n$ is unbiased under \Cref{assm:unbiased}. Next note that the reward in the next state is conditionally independent of the reward in the current state given the current state and action. 
Hence we can write the variance in terms of the variance of the estimate in the initial state and the variance of the estimate in the final layer.
We then rewrite the total samples of a state-action pair i.e $T_n(s^\ell_i, a)$ in terms of the proportion of the number of times the action was sampled in the state i.e $b(a|s^\ell_i)$. 
%
To do so, we take into account the tree structure to derive the expected proportion of times that action $a$ is taken in each state in layer $2$ as follows:
\begin{align*}
    b(a|s^2_i) & =\dfrac{T_n(s^2_i, a)}{\sum_{a'}T_n(s^2_i, a')}
    \overset{(a)}{=} \dfrac{T_n(s^2_i, a)/n}{P(s^2_i|s^1_1,a)T_n(s^1_1,a)/n} 
\end{align*}
where in $(a)$ the action $a$ is used to transition to state $s^2_j$ from $s^1_1$ and so $\sum_{a}T_n(s^2_i, a) = P(s^2_i|s^1_1,a)T_n(s^1_1,a)$. 
We next substitute the $b(a|s^\ell_i)$ for each state-action pair into the variance expression and determine the $b$ values that minimize the expression subject to $\forall s, \sum_a b(a|s) = 1$ and $\forall s, b(a|s) > 0$.
The full proof is given in \Cref{app:thm:3state-sample}. $\blacksquare$ \par

Note that the optimal proportion in the leaf states, $b^*(a|s^2_j)$, is the same as in \citet{carpentier2011finite} (see \Cref{prop:bandit}) as terminal states can be treated as bandits in which actions do not affect subsequent states.
The key difference is in the root state, $s^1_1$, where the optimal action proportion, $b^*(a|s^1_1)$ depends on the expected leaf state normalization factor $B(s^2_j)$ where $s^2_j$ is a state sampled from $P(\cdot|s^1_1,a)$. 
%
The normalization factor, $B(s^2_i)$, captures the total contribution of state $s^2_i$ to the variance of $Y_n$ and thus actions in the root state must be chosen to 1) reduce variance in the immediate reward estimate and to 2) get to states that contribute more to the variance of the estimate.
We explore the implications of the oracle action proportions in \Cref{lemma:main-tree} with the following two examples.



\begin{example}\textbf{(Child Variance matters)}
Consider a $2$-depth, $2$-action tree MDP $\T$ with deterministic $P$, i.e., $P(s^2_2|s^1_1,2) = P(s^2_1|s^1_1,1) = 1$ and $\gamma=1$ (see \Cref{app:fig} (Left) in \Cref{app:three-state-det}). 
%
%
Suppose the target policy is the uniform distribution in all states so that $\forall (s,a), \pi(a|s) = \frac{1}{2}$. 
The reward distribution variances are given by $\sigma^2(s^1_1, 1) = 400$, $\sigma^2(s^1_1, 2) = 600$, $\sigma^2(s^2_1, 1) = 400$, $\sigma^2(s^2_1, 2) = 400$, $\sigma^2(s^2_2, 1) = 4$, and $\sigma^2(s^2_2, 2) = 4$. 
So the right sub-tree at $s^1_1$ has higher variance (larger $B$-value) than the left sub-tree. 
Following the sampling rule in \Cref{lemma:main-tree} we can show that $b^*(1|s^1_1) > b^*(2|s^1_1)$ (the full calculation is given in \Cref{app:three-state-det}). 
Hence the right sub-tree with higher variance will have a higher proportion of pulls which allows the oracle to get to the high variance $s^2_1$.
Observe that treating $s^1_1$ as a bandit leads to choosing action 2 more often as $\sigma^2(s^1_1, 2) > \sigma^2(s^1_1, 1)$. 
However, taking action 2 leads to state $s^2_2$ which contributes much less to the total variance.
Thus, this example highlights the need to consider the variance of subsequent states.
\end{example}

\begin{example}\textbf{(Transition Model matters)}
Consider a $2$-depth, $2$-action tree MDP $\T$ in which we have $P(s^2_1|s^1_1,1) = p$, $P(s^2_1|s^1_1,1) = 1-p$, $P(s^2_3|s^1_1,2) = p$, and  $P(s^2_4|s^1_1,2) = 1-p$. 
This example is shown in \Cref{app:fig} (Right) in \Cref{app:three-state-det}. 
Following the result of \Cref{lemma:main-tree} if $p\gg (1-p)$ it can be shown that the variances of the states $s^2_1$ and $s^2_3$ have greater importance in calculating the optimal sampling proportions of $s^1_1$.
The calculation is shown in \Cref{app:sampling-with-model}.
Thus, less likely future states have less importance for computing the optimal sampling proportion in a given state.
\end{example}

Having developed intuition for minimal-variance action selection in a 2-depth tree MDP, we now give our main result that extends \Cref{lemma:main-tree} to an $L$-depth tree. 
%
%
\begin{customtheorem}{1}
\label{thm:L-step-tree}
Assume the underlying MDP is an $L$-depth tree MDP as defined in \Cref{def:tree-mdp}. 
Let the estimated return of the starting state $s^1_1$ after $n$ state-action-reward samples be defined as $Y_{n}(s^1_1)$. 
Note that the $v^{\pi}_{}(s^1_1)$ is the expectation of $Y_n(s^1_1)$ under \Cref{assm:unbiased}. 
Let $\D$ be the observed data over $n$ state-action-reward samples. To minimize MSE $\E_{\D}[(Y_n(s^1_1)) - \mu(Y_{n}(s^1_1)))^2]$ the optimal sampling proportions for any arbitrary state is given by:
{\small
\begin{align*}
    b^*(a|s^{\ell}_i) \!&\propto
    \!\! \sqrt{\! \pi^2(a|s_i^{\ell})\! \bigg[\sigma^2(s^{\ell}_i, a)\!  +\!    \gamma^2\!\!  \sum\limits_{s^{\ell+1}_j}\!\!P(s^{\ell + 1}_j|s_i^{\ell}, a) B^2(s^{\ell+1}_j)\! \bigg]}, 
\end{align*}
}
where, $B(s^{2}_j)$ is the normalization factor defined as follows:
{\small
\begin{align}
    B(s^{\ell}_{i}) \!\!=\!\! 
    \sum\limits_a\!\! \sqrt{\!\!\pi^2(a|s^{\ell}_{i})\!\left(\!\!\sigma^2(s^{\ell}_{i}, a) \!+\! \gamma^2\!\sum\limits_{s^{\ell+1}_j}\!\!P(s^{\ell+1}_j\!|\!s^{\ell}_i, a) B^2(s^{\ell+1}_j)\!\!\right)}
    \label{eq:B-def}
\end{align}
}
\end{customtheorem}
\textbf{Proof (Overview):} We prove \Cref{thm:L-step-tree} by induction. \Cref{lemma:main-tree} proves the base case of estimating the sampling proportion for level $L-1$ and $L$.
Then, for the induction step, we assume that all the sampling proportions from level $L$ till some arbitrary level $\ell+1$ can be subsequently built up using dynamic programming starting from level $L$.
For states in level $L$  to the states in level $\ell+1$ we can compute $b^*(a|s^{\ell+1}_i)$ by repeatedly applying \Cref{lemma:main-tree}. Then we show that at the level $\ell$ we get a similar recursive sampling proportion as stated in the theorem statement. The proof is given in \Cref{app:tree-mdp-behavior}. $\blacksquare$

\vspace*{-1em}
\subsection{MSE of the Oracle}
\vspace*{-1em}
%
In this subsection, we derive the MSE that the oracle will incur when matching the action proportions given by \Cref{thm:L-step-tree}.
The oracle is run for $K$ episodes where each episode consist of $L$ length trajectory of visiting state-action pairs.
So the total budget is $n=KL$. At the end of the $K$-th episode the MSE of the oracle is estimated which is shown in  \Cref{prop:oracle-loss}. Before stating the proposition we introduce additional notation which we will use throughout the remainder of the paper. Let
\begin{align}
    T_{t}^{k}(s, a)=\sum_{i=0}^{k-1} \mathbb{I}\left\{\left(s_{t}^{i}, a_{t}^{i}\right)=(s, a)\right\}, \forall t, s, a \label{eq:T-s-a}
\end{align}
denote the total number of times that $(s,a)$ has been observed in $\D$ (across all trajectories) up to time $t$ in episode $k$ and $\mathbb{I}\{\cdot\}$ is the indicator function. Similarly let 
\begin{align}
    \!\!T_{t}^{k}(s, a, s')\!=\!\!\sum_{i=0}^{k-1} \mathbb{I}\!\!\left\{\!\left(s_{t}^{i}, a_{t}^{i}, s^i_{t+1}\right)\!=\!(s, a, s')\!\right\}\!\!, \!\forall t\!, s, a, s'\!\! \label{eq:T-s-a-s1}
\end{align}
denote the number of times action $a$ is taken in $s$ to transition to $s'$. Finally we define the state sample $T^k_t(s) = \sum_a T^k_t(s,a)$ as the total number of times any state is visited and an action is taken in that state. 
\begin{customproposition}{2}
\label{prop:oracle-loss}
Let there be an oracle which knows the state-action variances and transition probabilities of the $L$-depth tree MDP $\T$. Let the oracle take actions in the proportions given by \Cref{thm:L-step-tree}. Let $\D$ be the observed data over $n$ state-action-reward samples such that $n=KL$. Then the oracle suffers an MSE of
\begin{align}
    &\L^*_n =  \sum_{\ell=1}^{L}\bigg[\dfrac{B^{2}(s^{\ell}_i)}{T_L^{*,K}(s^\ell_i)} \nonumber\\
    &+  \gamma^{2}\sum_{a} \pi^2(a|s^\ell_i)\sum_{s^{\ell+1}_j}P(s^{\ell+1}_j|s^{\ell}_i,a) \dfrac{B^{2}(s^{\ell+1}_j)}{T_L^{*,K}(s^{\ell+1}_j)} \bigg]. \label{eq:oracle-loss}
\end{align}
where, $T^{*,K}_{L}(s^\ell_i)$ denotes the optimal state samples of the oracle at the end of episode $K$.
\end{customproposition}
The proof is given in \Cref{app:oracle-loss}. From \Cref{prop:oracle-loss} we see that the MSE of the oracle goes to $0$ as the number of episodes $K\!\rightarrow\! \infty$, and  $T_L^{*,K}(s^{\ell}_i)\!\rightarrow\! \infty$ simultaneously for all $s^{\ell}_i \in\S$. Observe that if for every state $s$ the total state counts $T_L^{*,K}(s) = cn$ for some constant $c>0$ then the loss of the oracle goes to $0$ at the rate $O(1/n)$.

\subsection{Reduced Variance Sampling}

The oracle data collection strategy provides intuition for optimal data collection for minimal-variance policy evaluation, however, it is \textit{not} a practical strategy itself as it requires $\sigma$ and $P$ to be known.
%
%
We now introduce a practical data collection algorithm -- \textbf{Re}duced \textbf{Var}iance Sampling (\revpar) --
that is agnostic to $\sigma$ and $P$.
%
Our algorithm follows the proportions given by \Cref{thm:L-step-tree} with the true reward variances replaced with an upper confidence bound and the true transition probabilities replaced with empirical frequencies.
Formally, we define the desired proportion for action $a$ in state $s^\ell_i$ after $t$ steps as $\wb^k_{t+1}(a|s^{\ell}_{i}) \propto $ 
{\small
\begin{align}\label{eq:tree-bhat}
    \!\!& \!\!
    \!\!\sqrt{\!\!\pi^2(a|s_{i}^{\ell})\!\bigg[\usigma^{(2),k}_{t}\!\!(s^{\ell}_{i}, a) \!+\!  \gamma^2\!\!\!\sum\limits_{s^{\ell+1}_j} \!\!\wP^{k}_{t}(s^{\ell + 1}_{j}|s_{i}^{\ell}, a) \wB^{(2),k)}_{t}\!(s^{\ell+1}_{j}\!)\!\!\bigg]},
\end{align}
}
The upper confidence bound on the variance  $\sigma^2(s^{\ell}_{i},a)$, denoted by $\usigma^{(2),k}_{t-1}(s^{\ell}_{i},a) = (\usigma^{k}_{t}(s^{\ell}_{i},a))^2$, is defined as:
\begin{align}
    \hspace*{-0.7em}\usigma^{k}_{t}(s^{\ell}_{i},a) \!\coloneqq\! \wsigma^k_{t}(s^{\ell}_{i},a) \!+\! 2c\sqrt{\dfrac{\log(SAn(n\!+\!1)/\delta)}{T^k_{t}( s^{\ell}_{i},a)}} \label{eq:tree-ucb}
\end{align}
where, $\wsigma^k_{t}(s^{\ell}_{i},a)$ is the plug-in estimate of the standard deviation $\sigma(s^{\ell}_{i},a)$, $c \!>\!0$ is a constant depending on the boundedness of the rewards to be made explicit later, and $n=KL$ is the total budget of samples. 
%
%
Using an upper confidence bound on the reward standard deviations captures our uncertainty about $\sigma(s^\ell_i, a)$ needed to compute the true optimal proportions.
The state transition model is estimated as:
\begin{align}
    \wP^k_t(s^{\ell+1}_j|s^{\ell}_i,a) = \dfrac{T^k_t(s^{\ell}_i,a,s^{\ell+1}_j)}{T^k_t(s^{\ell}_i,a)} \label{eq:tree-P-hat}
\end{align}
where, $T^k_t(s^{\ell}_i,a,s^{\ell+1}_j)$ is defined in \eqref{eq:T-s-a-s1}. Further in \eqref{eq:tree-bhat}, $\wB^k_t(s^{\ell+1}_{j})$ is the plug-in estimate of $B(s^{\ell+1}_{j})$. Observe that for all of these plug-in estimates we use all the past history till time $t$ in episode $k$ to estimate these statistics.

Eq. (\ref{eq:tree-bhat}) allows us to estimate the optimal proportion for all actions in any state.
To match these proportions, rather than sampling from $\wb^k_{t+1}(a|s^{\ell}_{i})$, \rev takes action $I^k_{t+1}$ at time $t+1$ in episode $k$ according to:
\begin{align}
    I^k_{t+1} = \argmax_{a}\bigg\{\dfrac{\wb^k_{t}(a|s^{\ell}_{i})}{T^k_{t}(s^{\ell}_{i}, a)}\bigg\}. \label{eq:tree-sampling-rule}
\end{align}
This action selection rule ensures that the ratio $\wb^k_{t}(a|s^{\ell}_{i})/T_{t}(s^{L}_{i}, a) \approx 1$.
It is a deterministic action selection rule and thus avoids variance due to simply sampling from the estimated optimal proportions.
Note that in the terminal states, $s^L_i$, the sampling rule becomes
\begin{align*}
    I^k_{t+1} = \argmax_{a}\bigg\{\dfrac{\pi(a|s^L_i)\usigma^k_{t}(s^L_i,a)}{T^k_{t}(s^{L}_{i}, a)}\bigg\} 
\end{align*}
which matches the bandit sampling rule of \citet{carpentier2011finite, carpentier2012minimax}. 
%


%
We give pseudocode for \rev in \Cref{alg:track}.
%
The algorithm proceeds in episodes. In each episode we generate a trajectory from the starting state $s^1_1$ (root) to one of the terminal state $s^L_j$ (leaf). At episode $k$ and time-step $t$ in some arbitrary state $s^{\ell}_{i}$ the next action $I_{t+1}$ is chosen based on  \eqref{eq:tree-sampling-rule}. The trajectory generated is added to the dataset $\D$. At the end of the episode we update the model parameters, i.e. we estimate the $\wsigma^k_{t}(s^{\ell}_i, a)$, and $\wP^k_{t}(s^{\ell+1}_i|s^{\ell}_j,a)$ for each state-action pair. Finally, we update $\wb^{k+1}_1(a|s^i_{\ell})$ for the next episode using \cref{eq:tree-ucb}.

\begin{algorithm}
\caption{\textbf{Re}duced \textbf{Var}iance Sampling (\rev)}
\label{alg:track}
\begin{algorithmic}[1]
\State \textbf{Input:} Number of trajectories to collect, $K$.
\State \textbf{Output:} Dataset $\D$.
\State Initialize $\D = \emptyset$, $\wb^{0}_1(a|s^{\ell}_i)$ uniform over all actions in each state.
\For{$k\in 0, 1, \ldots, K$} 
\State Generate trajectory $H^k \coloneqq \{S_{t},I_{t},R(I_{t})\}_{t=1}^L$ by selecting $I_t$ according to \eqref{eq:tree-sampling-rule}.
\State $\D \leftarrow \D \cup \{(H^k, \wb^k_L)\}$
\State Update model parameters and estimate $\wb^{k+1}_{1}(a|s^{\ell}_i)$ for each $(s^{\ell}_i, a)$. 
\State Update $\wb^{k+1}_{1}(a|s^{\ell}_i)$ from level $L$ to $1$ following \eqref{eq:tree-bhat}.
\EndFor
\State \textbf{Return} Dataset $\D$ to evaluate policy $\pi$.
\end{algorithmic}
\end{algorithm}

\subsection{Regret Analysis}

We now theoretically analyze \rev by bounding its regret with respect to the oracle behavior policy.
We  analyze \rev under the assumption that $P$ is known and so we are only concerned with obtaining accurate estimates of the reward means and variances.
This assumption is only made for the regret analysis and is \textit{not} a fundamental requirement of \revpar.
Though somewhat restrictive, the case of known state transitions is still interesting as it arises in practice when state transitions are deterministic or we can estimate $P$ much easier than we can estimate the reward means.

%
%
We first define the notion of regret of an algorithm compared to the oracle MSE $\L^*_n$ in \eqref{eq:oracle-loss} as follows:
\begin{align*}
    \mathcal{R}_n = \L_{n} - \L^*_n
\end{align*}
where, $n$ is the total budget, and $\L_n$ is the MSE at the end of episode $K$ following the sampling rule in \eqref{eq:tree-bhat}. 
We make the following assumption that rewards are bounded:
\begin{assumption}
\label{assm:bounded}
The reward from any state-action pair has bounded range, i.e., $R_t(s,a)\in[-\eta, \eta]$ almost surely at every time-step $t$ for some fixed $\eta>0$.
\end{assumption}
Note that this is a common assumption in the RL literature \citep{munos2005error, agarwal2019reinforcement}. The reward can also be multi-modal as long as it is bounded.
Then the regret of \rev over a $L$-depth deterministic tree is given by the following theorem. 
\begin{customtheorem}{2}
\label{thm:regret-rev}
Let the total budget be $n=KL$ and $n \geq 4SA$. Then the total regret in a deterministic $L$-depth $\T$ at the end of $K$-th episode when taking actions according to  \eqref{eq:tree-bhat} is given by
\begin{align*}
    \mathcal{R}_n &\leq \widetilde{O}\left(\dfrac{B^2_{s^{1}_1}\sqrt{\log(SAn^{\nicefrac{11}{2}})}}{n^{\nicefrac{3}{2}}b^{*,\nicefrac{3}{2}}_{\min}(s^{1}_1)} \right.\\
    &\left. + \gamma\sum_{\ell=2}^L\max_{s^{\ell}_j,a}\pi(a|s^{1}_1)P(s^{\ell}_j|s^{1}_1,a)\dfrac{B^2_{s^{\ell}_j}\sqrt{\log(SAn^{\nicefrac{11}{2}})}}{n^{\nicefrac{3}{2}}b^{*,\nicefrac{3}{2}}_{\min}(s^{\ell}_j)} \right)
\end{align*}
where, the $\widetilde{O}$ hides other lower order terms and $B_{s^\ell_i}$ is defined in \eqref{eq:B-def} and $b^*_{\min}(s)= \min_{a}b^*(a|s)$.
\end{customtheorem}

Note that if $L=1$, $|\mathcal{S}|=1$, we recover the bandit setting and our regret bound matches the bound in \citet{carpentier2011finite}.
Note that MSE using data generated by any policy decays at a rate no faster than $O(n^{-1})$, the parametric rate. The key feature of \rev is that it converges to the oracle policy. This means that asymptotically, the MSE based on \rev will match that of the oracle. \Cref{thm:regret-rev} shows that the regret scales like $O(n^{-3/2})$ if we have the $b^*_{\min}(s)$ over all states $s\in\S$ as some reasonable constant $O(1)$.  In contrast, suppose we sample trajectories from a suboptimal policy, i.e., a policy that produces an MSE worse than that of the oracle for every $n$.  This MSE gap never diminishes, so the regret cannot decrease at a rate faster than the oracle rate of $O(n^{-1})$.
Finally, note that the regret bound in \Cref{thm:regret-rev} is a problem dependent bound as it involves the parameter $b^*_{\min}(s)$.

\textbf{Proof (Overview):} We decompose the proof into several steps. We define the good event $\xi_{\delta}$ based on the state-action-reward samples $\D$ that holds for all episode $k$ and time $t$ such that $|\wsigma^k_{t}(s,a) - \sigma(s,a)|\leq \epsilon$ for some $\epsilon > 0$ with probability $1-\delta$ made explicit in \Cref{corollary:conc} . Now  observe that MSE of \rev is 
\begin{align}
    \L_{n} \!&=\! \E_{\D}\left[\left(Y_n(s^1_1)) -v^{\pi}_{}(s^1_1))\right)^{2}\right]\nonumber\\
    &\!=\! \E_{\D}\left[\left(Y_n(s^1_1)) \!-\! v^{\pi}_{}(s^1_1))\right)^{2} \mathbb{I}\{\xi_\delta\}\right] \nonumber\\ 
&\quad + \E_{\D}\left[\left(Y_n(s^1_1)) -v^{\pi}_{}(s^1_1))\right)^{2} \mathbb{I}\left\{\xi^{C}_\delta\right\}\right]\label{eq:loss-revar}
\end{align}
Note that here we are considering a known transition function $P$.
The first term in \eqref{eq:loss-revar} can be bounded using 
\begin{align*}
\E_{\D}&\left[\left(Y_n(s^1_1)) -v^{\pi}_{}(s^1_1))\right)^{2} \mathbb{I}\{\xi_\delta\}\right]  = \Var[Y_{n}(s^{1}_{1})]\E[T^k_n(s^1_1)]\nonumber\\
&\leq \sum_a\pi^2(a|s^{1}_{1}) \bigg[\dfrac{ \sigma^2(s^{1}_{1}, a)}{\underline{T}^{(2),k}_n(s^{1}_{1}, a)}\bigg]\E[T^k_n(s^1_1,a)]\\
& + \gamma^2\sum_{a}\pi^2(a|s^{1}_{1})\sum_{s^{2}_j}P^2(s^2_j|s^1_1,a)\\
&\qquad\cdot\sum_{a'}\pi^2(a'|s^2_j)\bigg[\dfrac{ \sigma^2(s^{2}_{j}, a')}{\underline{T}^{(2),k}_n(s^{2}_{j}, a')}\bigg]\E[T^k_n(s^2_j,a')] 
\end{align*}
where, $\underline{T}^{(2),k}(s^1_1,a)$ is a lower bound to $T^{(2),k}(s^1_1,a)$ made explicit in \Cref{lemma:bound-samples-level1}, and $\underline{T}^{(2),k}(s^2_j,a)$ is a lower bound to $T^{(2),k}(s^1_1,a)$ made explicit in \Cref{lemma:bound-samples-level2}. We can combine these two lower bounds and give an upper bound to MSE in a two depth $\T$ which is shown \Cref{lemma:regret-two-level}. Finally, for the $L$ depth stochastic tree we can repeatedly apply \Cref{lemma:regret-two-level} to bound the first term. For the second term we set the $\delta = n^{-2}$ and use the boundedness assumption in \Cref{assm:bounded} to get the final bound. The proof is given in \Cref{app:regret-det-tree-L}. $\blacksquare$
\par

\vspace*{-1em}
\section{Optimal Data Collection Beyond Trees}
\label{sec:dag-mdp}

The tree-MDP model considered above allows us to develop a foundation for minimal-variance data collection in decision problems where actions at one state affect subsequent states.
One limitation of this model is that, for any non-initial state, $s^\ell_i$, there is only a single state-action path that could have been taken to reach it.
In a more general finite-horizon MDP, there could be many different paths to reach the same non-initial state.
Unfortunately, the existence of multiple paths to a state introduces cyclical dependencies between states that complicate derivation of the minimal-variance data collection strategy and regret analysis.
In this section, we elucidate this difficulty by considering the class of directed acyclic graph (DAG) MDPs.

In this section we first define a DAG $\G\subset \M$. An illustrative figure of a $3$-depth $2$-action $\G$ is in \Cref{fig:dag-mdp} of \Cref{app:dag-sampling} .
\begin{definition}\textbf{(DAG MDP)}
\label{def:dag-mdp}
A DAG MDP follows the same definition as the tree MDP in \Cref{def:tree-mdp} except $P(s'|s,a)$ can be non-zero for any $s$ in layer $\ell$, $s'$ in layer $\ell+1$, and any $a$, i.e., one can now reach $s'$ through multiple previous state-action pairs.
\end{definition}


\begin{customproposition}{3}
\label{prop:dag}
Let $\G$ be a $3$-depth, $A$-action DAG defined in \Cref{def:dag-mdp}. The minimal-MSE sampling proportions $b^*(a|s^1_1), b^*(a|s^2_j)$ depend on themselves such that $b(a|s^1_1) \propto f(1/b(a|s^1_1))$ and $ b(a|s^2_j) \propto f(1/b(a|s^2_j))$ where $f(\cdot)$ is a function that hides other dependencies on variances of $s$ and its children.
\end{customproposition}

The proof technique follows the approach of \Cref{lemma:main-tree} but takes into account the multiple paths leading to the same state. The possibility of multiple paths results in the cyclical dependency of the sampling proportions in level $1$ and $2$. Note that in $\T$ there is a single path to each state and this cyclical dependency does not arise. The full proof is given in \Cref{app:dag-sampling}. Because of this cyclical dependency it is difficult to estimate the optimal sampling proportions in $\G$. 
However, we can approximate the optimal sampling proportion that ignores the multiple path problem in $\G$ by using the tree formulation in the following way: At every time $t$ during a trajectory $\tau^k$ call the \Cref{alg:revar-g} in \Cref{app:expt-details} to estimate $B_0(s)$ where $B_{t'}(s)\in\mathbb{R}^{L\times |\S|}$ stores the expected standard deviation of the state $s$ at iteration $t'$. 
After $L$ such iteration we use the value $B_0(s)$ to estimate $b(a|s)$ as follows:
\begin{align*}
    b^*(a|s) \!&\propto\! 
    \! \sqrt{\! \pi^2(a|s)\! \bigg[\sigma^2(s, a)\!  +\!    \gamma^2\!\!  \sum\limits_{s'}\!\!P(s'|s, a) B^2_0(s)\! \bigg]}.
\end{align*}
Note that for a terminal state $s$ we have the transition probability $P(s'|s, a) = 0$ and  then the $b(a|s) = \pi(a|s)\sigma(s,a)$. This iterative procedure follows from the tree formulation in \Cref{thm:L-step-tree} and is necessary in $\G$ to take into account the multiple paths to a particular state. Also observe that in \Cref{alg:revar-g} we use value-iteration for the episodic setting \citep{sutton2018reinforcement} to estimate the
the optimal sampling proportion iteratively. 

\vspace*{-1em}
\section{Empirical Study}
\vspace*{-1em}
\label{sec:expt}

We next verify our theoretical findings with simulated policy evaluation tasks in both a tree MDP and a non-tree GridWorld domain.
Our experiments are designed to answer the following questions: 1) can \rev produce policy value estimates with MSE comparable to the oracle solution? and 2) does our novel algorithm lower MSE relative to on-policy sampling of actions? Full implementation details are given in \Cref{app:expt-details}.

\begin{figure}[!ht]
\centering
\begin{tabular}{cc}
\label{fig:tree-expt}\hspace*{-1.2em}\includegraphics[scale = 0.31]{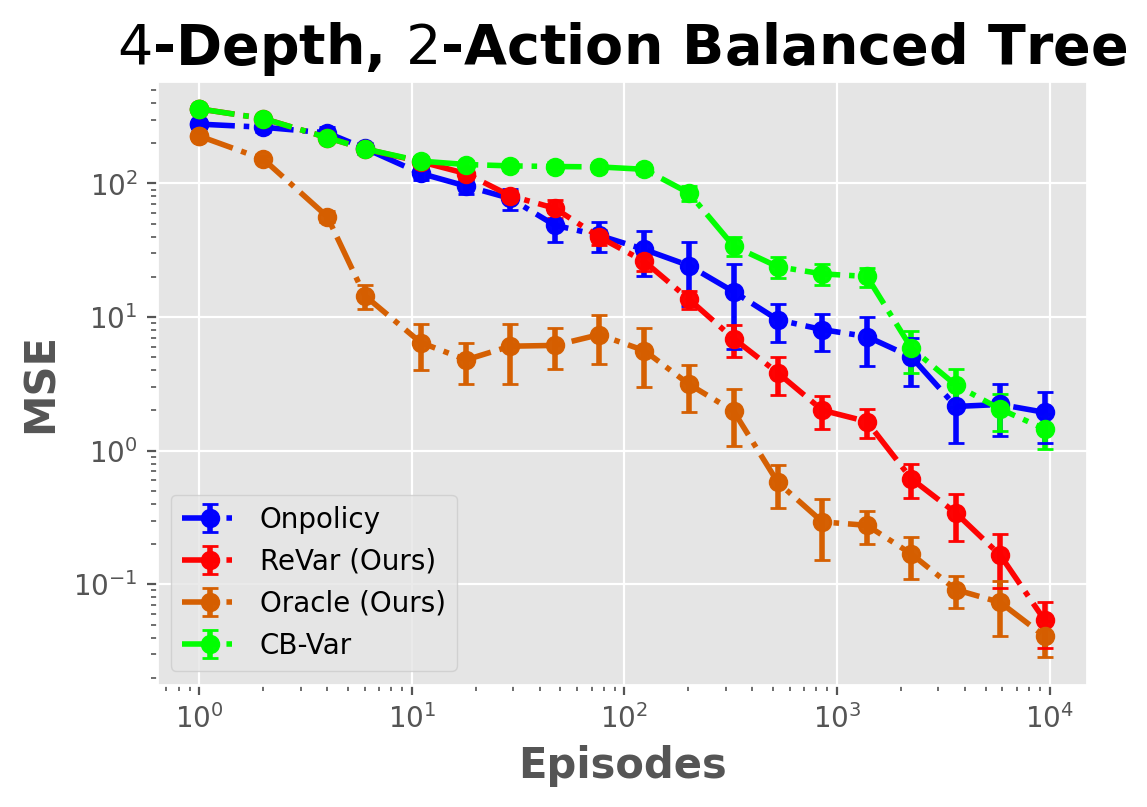} &
\label{fig:grid-expt}\hspace*{-1.2em}\includegraphics[scale = 0.31]{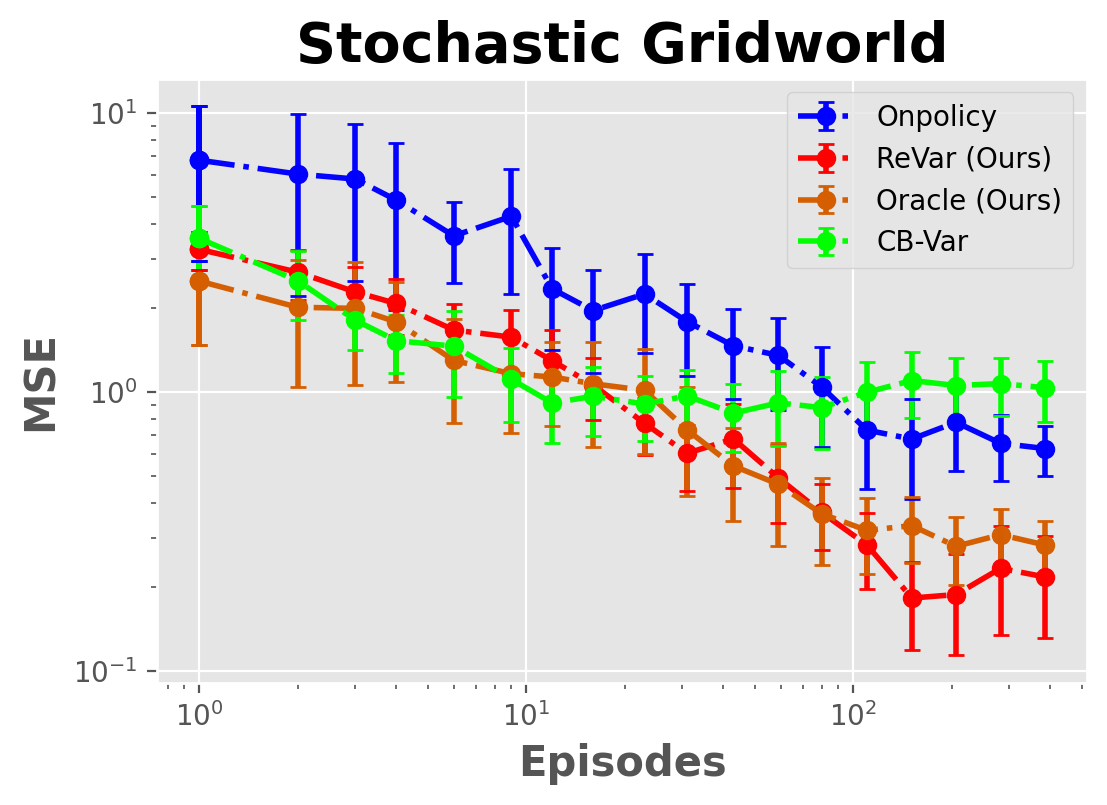} 
\end{tabular}
\caption{(Left) Deterministic $4$-depth Tree. (Right) Stochastic gridworld. The vertical axis gives MSE and the horizontal axis is the number of episodes collected. Axes use a log-scale and confidence bars show one standard error.}
\label{app:fig-expt}
\vspace{-1.0em}
\end{figure}

\textbf{Experiment 1 (Tree):} In this setting we have a $4$-depth $2$-action deterministic tree MDP $\T$ consisting of $15$ states. Each state has a low variance arm with $\sigma^2(s,1) = 0.01$ and high target probability $\pi(1|s) = 0.95$ and a high variance arm with $\sigma^2(s,1) = 20.0$ and low target probability $\pi(2|s) = 0.05$. Hence, the \onp\ sampling which samples according to $\pi$ will sample the second (high variance) arm less and suffer a high MSE. The \cbvar\ policy is a bandit policy that uses an empirical Bernstein Inequality \citep{maurer2009empirical} to sample an action without looking ahead and suffers  high MSE.
The \ora has access to the model and variances and performs the best. \rev lowers MSE comparable to \onp\ and \cbvar\ and eventually matches the oracle's MSE.

\textbf{Experiment 2 (Gridworld):} In this setting we have a $4\times 4$ stochastic gridworld consisting of $16$ grid cells. Considering the current episode time-step as part of the state, this MDP is a DAG MDP in which there are multiple path to a single state. There is a single starting location at the top-left corner and a single terminal state at the bottom-right corner. Let $\mathbf{L}, \mathbf{R}, \mathbf{D}, \mathbf{U}$ denote the left, right, down and up actions in every state. Then in each state the right and down actions have low variance arms with $\sigma^2(s,\mathbf{R}) = \sigma^2(s,\mathbf{D}) = 0.01$ and high target policy probability $\pi(\mathbf{R}|s) = \pi(\mathbf{D}|s) = 0.45$. The left and top actions have high variance arms with $\sigma^2(s,\mathbf{L}) = \sigma^2(s,\mathbf{U}) = 0.01$ and low target policy probability $\pi(\mathbf{L}|s) = \pi(\mathbf{U}|s) = 0.05$. Hence, \onp\ which goes right and down with high probability (to reach the terminal state) will sample the low variance arms more and suffer a high MSE. 
Similar to above, \cbvar\ fails to look ahead when selecting actions and thus suffers from high MSE.
%
\rev lowers MSE compared to \onp\ and \cbvar\ and actually matches and then reduces MSE compared to the \orapar.
We point out that the DAG structure of the Gridworld violates the tree-structure under which \ora and \rev were derived. Nevertheless, both methods lower MSE compared to \onp.

\vspace*{-1em}
\section{Conclusion AND Future Works}
\vspace*{-0.8em}
\label{sec:conc}

This paper has studied the question of how to take actions for minimal-variance policy evaluation of a fixed target policy.
We developed a theoretical foundation for data collection in policy evaluation by deriving an oracle data collection policy for the class of finite, tree-structured MDPs.
We then introduced a practical algorithm, \revpar, that approximates the oracle strategy by computing an upper confidence bound on the variance of the future cumulative reward at each state and using this bound in place of the true variances in the oracle strategy.
We bound the finite-sample regret (excess MSE) of our algorithm relative to the oracle strategy. We also present an empirical study where we show that \rev decreases the MSE of policy evaluation relative to several baseline data collection strategies including on-policy sampling. In the future, we would like to extend our derivation of optimal data collection strategies and regret analysis of \rev to a more general class of MDPs, in particular, relaxing the tree structure and also considering infinite-horizon MDPs.
Finally, real world problems often require function approximation to deal with large state and action spaces. This setting raises new theoretical and implementation challenges for \rev where we intend to incorporate experimental design approaches \citep{pukelsheim2006optimal, mason2021nearly, mukherjee2022chernoff}. Another interesting direction is to incorporate structure in the reward distribution of arms \cite{9413628, 9276444}. Addressing these challenges is an interesting direction for future work. 


\textbf{Acknowledgements: } The authors will like to thank Kevin Jamieson from  Allen School of Computer Science \& Engineering, University of Washington for pointing out several useful references. This work was partially supported by AFOSR/AFRL grant FA9550-18-1-0166.

\bibliography{biblio}

\appendix
\onecolumn
\section{Optimal Sampling in Bandit Setting}
\label{app:prop:bandit}

\begin{customproposition}{1}\textbf{(Restatement)}
\label{prop:bandit}
In an $A$-action bandit setting, the estimated return of $\pi$ after $n$ action-reward samples is denoted by $Y_n$ as defined in \eqref{eq:bandit-wmu}. 
Note that the expectation of $Y_n$ after each action has been sampled once is given by $v(\pi)$. 
Minimal MSE, $\E_{\D}\left[\left(Y_n - v(\pi)\right)^{2}\right]$, is obtained by taking actions in the proportion:
\begin{align}
    b^*(a) = \dfrac{\pi(a)\sigma(a)}{\sum_{a'=1}^A \pi(a')\sigma(a')}.\label{eq:bandit-optimal-prop}
\end{align}
where $b^*(a)$ denotes the optimal sampling proportion.
\end{customproposition}

\begin{proof}
Recall that we have a budget of $n$ samples and we are allowed to draw samples from their respective distributions. Suppose we have $T_n(1), T_n(2), \ldots, T_n(A)$ samples from actions $1, 2, \ldots, A$. Then we can calculate the estimator
\begin{align*}
    Y_n = \frac{1}{n}\sum_{t=1}^n Y_t = \sum_{a=1}^A\dfrac{\pi(a)}{T_n(a)}\sum_{i=1}^{T_n(a)} R_{i}(a) 
\end{align*}
where, $n = \sum_{a=1}^A T_n(a)$ samples and $R_i(a)$ is the $i^\text{th}$ reward received after taking action $a$. We collect a dataset $\D$ of $n$ action-reward samples. 
Now we use the MSE to estimate how close is $Y_n$ to $v(\pi)$ as follows:
\begin{align*}
    \E_{\D}\left[\left(Y_n-v(\pi)\right)^{2}\right] = \Var(Y_n) + \bias^2(Y_n).  
\end{align*}
Note that once we have sampled each action once, since $\E_{\D}[Y_n] = v(\pi)$ so $\bias(Y_n) = 0$. 
So we need to focus only on variance. We can decompose the variance as follows:
\begin{align*}
    \Var(Y_n) &\overset{(a)}{=} \sum_{a=1}^A\Var\left(\dfrac{\pi(a)}{T_n(a)}\sum_{i=1}^{T_n(a)} R_{i}(a) \right) \\
    &=\sum_{a=1}^A\dfrac{\pi^2(a)}{T_n^2(a)}\sum_{i=1}^{T_n(a)} \Var\left(R_{i}(a) \right) = \sum_{a=1}^A\dfrac{\pi^2(a)\sigma^2(a)}{T_n(a)}
\end{align*}
where, $(a)$ follows as $R_{i}(a)$ and $R_{i'}(a')$ are independent for every $(i,i^{'})$ and $(a,a')$ pairs.
Now we want to optimize $T_n(1), T_n(2), \ldots, T_n(A)$ so that the variance $\Var(Y_n)$ is minimized. We can do this as follows: Let's first write the variance in terms of the proportion $\bb \coloneqq \{b(1), b(2), \ldots, b(A)\}$ such that
\begin{align*}
    b(a) = \frac{T_n(a)}{\sum_{a'=1}^A T_n(a')}.
\end{align*}
We can then rewrite the optimization problem as follows:
\begin{align}
    \min_{\bb} \sum_{a=1}^A\dfrac{\pi^2(a)\sigma^2(a)}{b(a)}, \quad \textbf{s.t. }& \sum_a b(a) = 1 \nonumber\\
    & \forall a, b(a) >0. \label{eq:bandit-b}
\end{align}
Note that we use $b(a)$ to denote the optimization variable and $b^*(a)$ to denote the optimal sampling proportion. Given this optimization in \eqref{eq:bandit-b} we can get a closed form solution by introducing the Lagrange multiplier as follows:
\begin{align}
    L(\bb,\lambda) = \sum_{a=1}^A \dfrac{\pi^2(a)\sigma^2(a)}{b(a)} + \lambda\left(\sum_{a=1}^A b(a) - 1\right). \label{eq:bandit-b1}
\end{align}
Now to get the Karush-Kuhn-Tucker (KKT) condition we differentiate \eqref{eq:bandit-b1} with respect to $b(a)$ and $\lambda$ as follows:
\begin{align}
    \nabla_{b(a)} L(\bb,\lambda) &= -\dfrac{\pi^2(a)\sigma^2(a)}{b^2(a)} + \lambda \label{eq:L-pi}\\
    \nabla_{\lambda} L(\bb,\lambda) &=   \sum_a b(a) - 1 .\label{eq:L-lambda}
\end{align}
Now equating \eqref{eq:L-pi} and \eqref{eq:L-lambda} to zero and solving for the solution we obtain:
\begin{align*}
    \lambda &= \dfrac{\pi^2(a)\sigma^2(a)}{b^2(a)} \implies b(a) = \sqrt{\dfrac{\pi^2(a) \sigma^2(a)}{\lambda}}\\
    \sum_a b(a) &= 1 \implies \sum_{a=1}^A \sqrt{\dfrac{\pi^2(a)\sigma^2(a)}{\lambda}} = 1 \implies \sqrt{\lambda} = \sum_{a=1}^A \sqrt{\pi^2(a)\sigma^2(a)}.
\end{align*}
This gives us the optimal sampling proportion
\begin{align*}
    b^*(a) = \dfrac{\pi(a)\sigma(a)}{\sum_{a'=1}^A \sqrt{\pi(a')^2\sigma^2(a')}} \implies b^*(a) = \dfrac{\pi(a)\sigma(a)}{\sum_{a'=1}^A \pi(a')\sigma(a')}.
\end{align*}
Finally, observe that the above optimal sampling for the bandit setting for an action $a$ only depends on the standard deviation $\sigma(a)$ of the action.
\end{proof}

\section{Optimal Sampling in Three State Stochastic Tree MDP}\label{app:thm:3state-sample}

\begin{figure}
    \centering
    \includegraphics[scale=0.33]{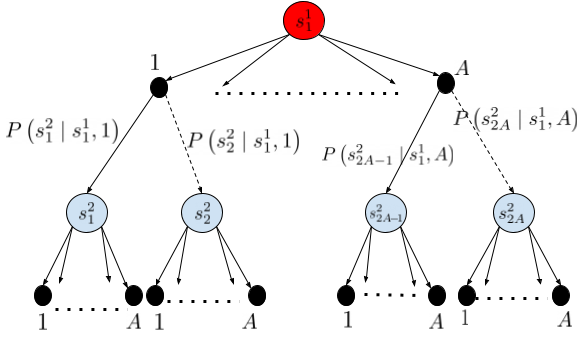}
    \caption{$2$-Depth, $A$-action Tree MDP}
    \label{fig:mdp-3-state}
\end{figure}

\begin{customlemma}{1}\textbf{(Restatement)}
Let $\T$ be a $2$-depth stochastic tree MDP as defined in \Cref{def:tree-mdp} (see \Cref{fig:mdp-3-state} in \Cref{app:thm:3state-sample}). 
Let $Y_n(s^1_1)$ be the estimated return of the starting state $s^1_1$ after observing $n$ state-action-reward samples. 
Note that $v^{\pi}_{}(s^1_1)$ is the  expectation of $Y_n(s^1_1)$ under \Cref{assm:unbiased}. Let $\D$ be the observed data over $n$ state-action-reward samples. 
To minimise MSE, $\E_{\D}[(Y_{n}(s^1_1) - v^{\pi}_{}(s^1_1))^2]$, is obtained by taking actions in each state in the following proportions:
\begin{gather*}
    b^*(a | s^2_j) \propto \pi(a | s^2_j) \sigma(s^2_j,a) \\
    b^*(a|s^1_1) \!\propto\!\! \sqrt{\pi^2(a|s^{1}_{1})\bigg[\sigma^2(s^1_1,a) \!+\!\gamma^2 \sum_{s^{2}_j}P(s^{2}_j|s^1_1,a)B^2(s^2_j)\bigg]},
\end{gather*}
where, $B(s^{2}_j)= \sum_a \pi(a|s^{2}_j) \sigma(s^{2}_j, a)$.
\end{customlemma}

\begin{proof}
We define an estimator $Y_n(s)$ that visits each state-action pair $\sum\limits_{s\in\S}\sum\limits_{a} T_n\left(s, a\right) = n$ times and then plug-ins the estimated sample mean.
For the $i^\text{th}$ state in level $\ell$ of the tree, this estimator is given as:
\begin{align*}
    Y_n(s^\ell_i) = 
    \sum_{a=1}^A\left(\underbrace{\dfrac{\pi(a|s^{\ell}_i)}{T_n(s^{\ell}_i,a)}\sum_{h=1}^{T_n(s^{\ell}_i,a)}R_{h}(s^{\ell}_i, a)}_\text{mean reward weighted by $\pi(a|s^\ell_i)$} + \gamma\pi(a|s^{\ell}_i) \sum_{s^{\ell+1}_j}P(s^{\ell+1}_j|s^{\ell}_i, a)\underbrace{Y_{n}(s^{\ell+1}_j)}_\text{Next state estimate}\right), \text{if $\ell\neq L$}
\end{align*}
where we take $Y_n(s^{l+1}_j)=0$ if $s^l_i$ is a leaf state (i.e., $l=L$). 

\textbf{Step 1 ($Y_n(s^1_1)$ is an unbiased estimator of $v(\pi)$):} We first show that $Y_n(s^1_1)$ is an unbiased estimator of $v(\pi)$. We use this fact to show that minimizing variance is equivalent to minimizing MSE. The expectation of $Y_n(s^1_1)$ is given as:
\begin{align*}
    \E[Y_n(s^1_1)] &= \E\left[\sum_{a=1}^A\bigg(\sum_{s^2_j}\dfrac{\pi(a|s^1_1)}{T_n(s^1_1,a)}\sum_{h=1}^{T_n(s^1_1,a)}R_{h}(s^1_1, a) + \gamma\pi(a|s^1_1)\sum_{s^{2}_j} P(s^{2}_j|s^1_1, a)Y_{n}(s^2_j)\bigg)\right]\\
    &\overset{(a)}{=} \sum_{a=1}^A\bigg(\sum_{s^2_j}\dfrac{\pi(a|s^1_1)}{T_n(s^1_1,a)}\sum_{h=1}^{T_n(s^1_1,a)}\E\left[R_{h}(s^1_1, a)\right] + \gamma\pi(a|s^1_1)\sum_{s^{2}_j} P(s^{2}_j|s^1_1, a)\E\left[Y_{n}(s^2_j)\right]\bigg) \\
     &\overset{}{=} \sum_{a=1}^A\bigg(\sum_{s^2_j}\dfrac{\pi(a|s^1_1)}{T_n(s^1_1,a)} T_n(s^1_1,a)\E\left[R_{h}(s^1_1, a)\right] + \gamma\pi(a|s^1_1)\sum_{s^{2}_j} P(s^{2}_j|s^1_1, a)\E\left[Y_{n}(s^2_j)\right]\bigg) \\
    &= v^{\pi}(s^1_1)
\end{align*}
where, $(a)$ follow from the linearity of expectation.
Thus, $Y_n(s^1_1)$ is a unbiased estimator of $v(\pi)$.

\textbf{Step 2 (Variance of $Y_n(s^1_1)$):} Next we look into the variance of $\Var(Y_n(s^1_1))$.
\begin{align}
    \Var(Y_n(s^1_1)) &=  \Var\left[\sum_{a=1}^A\bigg(\dfrac{\pi(a|s^1_1)}{T_n(s^1_1,a)}\sum_{h=1}^{T_n(s^1_1,a)}R_{h}(s^1_1, a) + \gamma\pi(a|s^1_1)\sum_{s^{2}_j} P(s^{2}_j|s^1_1, a)Y_{n}(s^2_j)\bigg)\right]\nonumber\\
    &\overset{(a)}{=}\sum_{a=1}^A\bigg(\dfrac{\pi^2(a|s^1_1)}{T^2_n(s^1_1,a)}\sum_{h=1}^{T_n(s^1_1,a)}\Var[R_{h}(s^1_1, a)] + \gamma^2\pi^2(a|s^1_1)\sum_{s^{2}_j} P^2(s^{2}_j|s^1_1, a)\Var[Y_{n}(s^2_j)]\bigg)\nonumber\\
    &\overset{(b)}{=}\sum_{a=1}^A\bigg(\dfrac{\pi^2(a|s^1_1)\sigma^2(s^1_1, a)}{T_n(s^1_1,a)} + \gamma^2\pi^2(a|s^1_1)\sum_{s^{2}_j} P^2(s^{2}_j|s^1_1, a)\Var[(Y_{n}(s^2_j)]\bigg) \label{eq:var-3-stoc-mdp},
\end{align}
where $(a)$ follows because the reward in next state is conditionally independent given the current state and action and $(b)$ follows from $\sigma(s,a) = \Var[R(s,a)].$

The goal is to reduce the variance $\Var(Y_n(s^1_1))$ in \eqref{eq:var-3-stoc-mdp}. We first unroll the \eqref{eq:var-3-stoc-mdp} to take into account the conditional behavior probability of each of the path from $s^1_1$ to $s^2_j$ for $j\in\{1,2,3,4\}$. This is shown as follows:
\begin{align*}
    \Var(Y_n(s^1_1))) &= \sum_a \dfrac{\pi^2(a|s^{1}_{1})\sigma^2(s^1_1,a)}{T_n(s^1_1,a)} + \sum_a \sum_{s^2_j} \sum_{a'} \dfrac{\gamma^2\pi^2(a|s^{1}_{1})P^2(s^2_j|s^1_1,a)\pi^2(a|s^2_j)\sigma^2(s^2_j,a)}{T_n(s^2_j,a)} \\
    \implies n \Var(Y_n(s^1_1))) &= \sum_a \dfrac{\pi^2(a|s^{1}_{1})\sigma^2(s^1_1,a)}{T_n(s^1_1,a)/n} + \sum_a \sum_{s^2_j} \sum_{a'} \dfrac{\gamma^2\pi^2(a|s^{1}_{1})P^2(s^2_j|s^1_1,a)\pi^2(a|s^2_j)\sigma^2(s^2_j,a')}{T_n(s^2_j,a')/n} \\
     \overset{(a)}{\implies}& \sum_a \dfrac{\pi^2(a|s^{1}_{1})\sigma^2(s^1_1,a)}{b(a|s^1_1)} + \sum_a \sum_{s^2_j} \sum_{a'} \dfrac{\gamma^2\pi^2(a|s^{1}_{1})P^2(s^2_j|s^1_1,a)\pi^2(a|s^2_j)\sigma^2(s^2_j,a')}{P(s^2_j|s^1_1,a)b(a|s^1_1)b(a'|s^2_j)} \\
\end{align*}
where, $(a)$ follows as 
\begin{align*}
    b(a'|s^2_i) &= \dfrac{T_n(s^2_i, a')}{\sum_{a}T_n(s^2_i, a)} \overset{(a)}{=} \dfrac{T_n(s^2_i, a')}{P(s^2_i|s^1_1,a)T_n(s^1_1,a)} = \dfrac{T_n(s^2_i, a')/n}{P(s^2_i|s^1_1,a)T_n(s^1_1,a)/n} \\
    &\implies T_n(s^2_i, a')/n = P(s^2_i|s^1_1,a)b(a|s^1_1)b(a'|s^2_i)
\end{align*}
where, in $(a)$ the action $a$ is used from state $s^1_1$ to transition to state $s^2_i$. 
Similarly in $(a)$ we can substitute $T_n(s^2_j,a)/n$ for all $s\in\S$. Note that this follows because of the tree MDP structure as path to state $s^{\ell+1}_j$ depends on it immediate parent state $s^{\ell}_i$ (see \textbf{(3)} in  \Cref{def:tree-mdp}). Recall that $P(s'|s,a)T_n(s'|s,a)$ is the expected times we end up in next state, not the actual number of times. We use $P$ in this formulation instead of $\wP$ as this is the oracle setting which has access to the transition model and our goal is to minimize the number of samples $n$. 

\textbf{Step 3 (Minimal Variance Objective function):} Note that we use $b(a|s)$ to denote the optimization variable and $b^*(a|s)$ to denote the optimal sampling proportion. Now, we determine the $b$ values that give minimal variance by minimizing the following objective:
\begin{align}
    \min_\bb &\sum_a \dfrac{\pi^2(a|s^{1}_{1})\sigma^2(s^1_1,a)}{b(a|s^1_1)} + \sum_a \sum_{s^2_j} \sum_{a'} \dfrac{\gamma^2\pi^2(a|s^{1}_{1})P(s^2_j|s^1_1,a)\pi^2(a|s^2_j)\sigma^2(s^2_j,a')}{b(a|s^1_1)b(a'|s^2_j)} \nonumber\\
    \quad \textbf{s.t.} \quad & \forall s, \quad \sum_a b(a|s) = 1 \nonumber\\
    &\forall s,a \quad b(a|s) > 0.\label{eq:obj-tree}
\end{align}

We can get a closed form solution by introducing the Lagrange multiplier as follows:
\begin{align*}
    L(\lambda, \bb) &= \sum_a \dfrac{\pi^2(a|s^{1}_{1})\sigma^2(s^1_1,a)}{b(a|s^1_1)} + \sum_a \sum_{s^2_j} \sum_{a'} \dfrac{\gamma^2\pi^2(a|s^{1}_{1})P(s^2_j|s^1_1,a)\pi^2(a|s^2_j)\sigma^2(s^2_j,a')}{b(a|s^1_1)b(a'|s^2_j)} \\
    &+ \sum_s \lambda_s \left( \sum_a b(a|s) - 1 \right ) 
\end{align*}
\textbf{Step 5 (Solving for KKT condition):} Now we want to get the KKT condition for the Lagrangian function $L(\lambda, \bb)$ as follows:
\begin{align}
    \nabla_{\lambda_s} L(\lambda, \bb) &= \sum_a b(a|s) - 1 \label{eq:lambda:grad}\\
    \nabla_{b(a|s^1_1)}L(\lambda, \bb) \!&=\! -\dfrac{\pi^2(a|s^{1}_{1})\sigma^2(s^1_1,a)}{b(a|s^1_1)^2} \!-\! \gamma^2\pi^2(a|s^{1}_{1})\sum_{s^2_j}\sum_{a'} \dfrac{P(s^2_j|s^1_1,a)\pi^2(a'|s^2_j)\sigma^2(s^2_j,a')}{b^2(a|s^1_1)b(a'|s^2_j)} + \lambda_{s_1^1}\label{eq:root:grad}
\end{align}
\begin{align}
    \nabla_{b(a'|s^2_j)}L(\lambda, \bb) \!&=\! - \sum_{a'}  \gamma^2\pi^2(a|s^{1}_{1}) \dfrac{P(s^2_j|s^1_1,a)\pi^2(a'|s^2_j)\sigma^2(s^2_j,a')}{b(a|s^1_1)b^2(a'|s^2_j)} + \lambda_{s^2_j}\label{eq:leaf:grad}
\end{align}
Setting \eqref{eq:leaf:grad} equal to $0$, we obtain:
\begin{align}
    \lambda_{s^2_j} =& \sum_a  \gamma^2\pi^2(a|s^{1}_{1}) \dfrac{P(s^2_j|s^1_1,a)\pi^2(a'|s^2_j)\sigma^2(s^2_j,a')}{b(a|s^1_1)b^2(a'|s^2_j)} \\
    \implies & b(a|s^2_j) = \sqrt{\sum_a  \gamma^2\pi^2(a|s^{1}_{1}) \dfrac{P(s^2_j|s^1_1,a)\pi^2(a'|s^2_j)\sigma^2(s^2_j,a')}{b(a|s^1_1)\lambda_{s^2_j}}} 
\end{align}
Finally, we eliminate $\lambda_{s^2_j}$ by setting \eqref{eq:lambda:grad} to $0$ and using the fact that $\sum_a b(a|s^2_j) = 1$:
\begin{align}
    b^*(a|s^2_j) = \dfrac{\pi(a|s^2_j)\sigma(s^2_j,a)}{\sum_{a'} \pi(a'|s^2_j) \sigma(s^2_j,a')}
\end{align}
which gives us the optimal proportion in level $2$. Similarly, setting \eqref{eq:root:grad} equal to $0$, we obtain:
\begin{align*}
    \lambda_{s^1_1} =& \dfrac{\pi^2(a|s^{1}_{1})\sigma^2(s^1_1,a)}{b^2(a|s^1_1)} + \gamma^2\pi^2(a|s^{1}_{1})\sum_{s^2_j}\sum_{a'} \dfrac{P(s^2_j|s^1_1,a)\pi^2(a'|s^2_j)\sigma^2(s^2_j,a')}{b^2(a|s^1_1)b(a'|s^2_j)} \\
    \implies& b(a|s^1_1) = \sqrt{\dfrac{\pi^2(a|s^{1}_{1})\sigma^2(s^1_1,a)}{\lambda_{s^1_1}} + \gamma^2\pi^2(a|s^{1}_{1})\sum_{s^2_j}\sum_{a'} \dfrac{P(s^2_j|s^1_1,a)\pi^2(a'|s^2_j)\sigma^2(s^2_j,a')}{\lambda_{s^1_1}b(a'|s^2_j)}} \\
    & b(a|s^1_1) = \dfrac{1}{\sqrt{\lambda_{s^1_1}}}\sqrt{\pi^2(a|s^{1}_{1})\sigma^2(s^1_1,a) + \gamma^2\pi^2(a|s^{1}_{1})\sum_{s^2_j}\sum_{a'} \dfrac{P(s^2_j|s^1_1,a)\pi^2(a'|s^2_j)\sigma^2(s^2_j,a')}{b(a'|s^2_j)}}\\
    \implies& b(a|s^1_1) \overset{(a)}{=} \sqrt{\dfrac{\pi^2(a|s^{1}_{1})\sigma^2(s^1_1,a)}{\lambda_{s^1_1}} + \gamma^2\pi^2(a|s^{1}_{1})\sum_{s^2_j}\sum_{a'} \dfrac{P(s^2_j|s^1_1,a)\pi^2(a'|s^2_j)\sigma^2(s^2_j,a')}{\lambda_{s^1_1}b(a'|s^2_j)}} \\
    & b^*(a|s^1_1) = \dfrac{1}{\sqrt{\lambda_{s^1_1}}}\sqrt{\pi^2(a|s^{1}_{1})\sigma^2(s^1_1,a) + \gamma^2\pi^2(a|s^{1}_{1})\sum_{s^2_j}\sum_{a'} P(s^2_j|s^1_1,a)B^2(s^2_j)}
\end{align*}
where, $(a)$ follows by plugging in the definition of $b(a'|s^2_j)$ and substituting $B(s^{2}_j) = \sum_{a}\pi(a|s^2_j)\sigma(s^2_j,a)$. 
This concludes the proof for the optimal sampling in the $2$-depth stochastic tree MDP $\T$.
\end{proof}

\section{Three State Deterministic Tree Sampling}
\label{app:three-state-det}

\begin{figure}[!ht]
\centering
\begin{tabular}{cc}
\label{fig:det-3-state}\hspace*{-1.2em}\includegraphics[scale = 0.38]{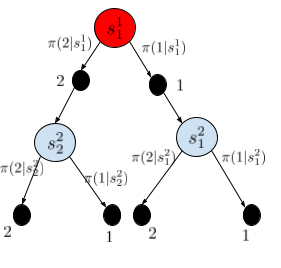} &
\label{fig:stoc-3-mdp}\hspace*{-1.2em}\includegraphics[scale = 0.31]{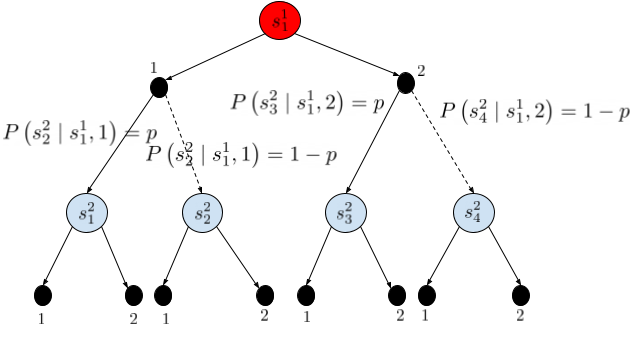} 
\end{tabular}
\caption{(Left) Deterministic $2$-depth Tree. (Right) Stochastic $2$-Depth Tree with varying model.}
\label{app:fig}
\vspace{-1.0em}
\end{figure}


Consider the $2$-depth, $2$-action deterministic tree MDP $\T$ in  \Cref{app:fig} (left) where we have equal target probabilities $\pi(1|s^1_1) = \pi(2|s^1_1) = \pi(1|s^2_1) = \pi(2|s^2_1) = \pi(1|s^2_1) = \pi(2|s^2_1) = \frac{1}{2}$. The variance is given by $\sigma^2(s^1_1, 1) = 400$, $\sigma^2(s^1_1, 2) = 600$, $\sigma^2(s^2_1, 1) = 400$, $\sigma^2(s^2_1, 2) = 400$, $\sigma^2(s^2_2, 1) = 4$, $\sigma^2(s^2_2, 2) = 4$. So the left sub-tree has lesser variance than right sub-tree. Let discount factor $\gamma=1$. Then we get the optimal sampling behavior policy as follows:
\begin{align*}
    b^*(1|s^{2}_{1}) &\propto  \pi(1|s^2_1)\sigma(1|s^2_1) =  \frac{1}{2}\cdot 20 =  10, \qquad
    b^*(2|s^{2}_{1}) \propto  \pi(2|s^2_1)\sigma(2|s^2_1)  =  \frac{1}{2}\cdot 20 = 10\\
    b^*(1|s^{2}_{2}) &\propto  \pi(1|s^2_2)\sigma(1|s^2_2)  = \frac{1}{2}\cdot 2  = 1,\qquad
    b^*(2|s^{2}_{2}) \propto \pi(2|s^2_2)\sigma(2|s^2_2) = \frac{1}{2}\cdot 2  = 1,\\
    B(s^2_1) &=\pi(1|s^2_1)\sigma(1|s^2_1)  +  \pi(2|s^2_1)\sigma(2|s^2_1)  = 20, \qquad 
    B(s^2_2) = \pi(1|s^2_2)\sigma(1|s^2_2)  +  \pi(2|s^2_2)\sigma(2|s^2_2)  = 2\\
    b^*(1|s^{1}_{1}) &\propto \sqrt{\pi^2(1|s^{1}_{1})\bigg[\sigma^2(s^{1}_{1}, 1) +  \gamma^2\sum_{s^2_j} P(s^{2}_j|s^{1}_{1}, 1) B^2(s^{2}_j)\bigg]}\\
    &= \sqrt{\pi^2(1|s^{1}_{1})\sigma^2(s^{1}_{1}, 1) +  \gamma^2\pi^2(1|s^{1}_{1}) P(s^{2}_1|s^{1}_{1}, 1) B^2(s^{2}_{1}) +  \gamma^2\pi^2(2|s^{1}_{1}) P(s^{2}_2|s^{1}_{1}, 1) B^2(s^{2}_{2})   }\\
    &\overset{(a)}{=}\sqrt{400\cdot\frac{1}{4} + \frac{1}{4}\cdot 1\cdot 400 + \frac{1}{4}\cdot0\cdot 4 } \approx 14
\end{align*}
\begin{align*}
    b^*(2|s^{1}_{1}) &\propto \sqrt{\pi^2(2|s^{1}_{1})\bigg[\sigma^2(s^{1}_{1}, 2) +  \gamma^2 \sum_{s^2_j}P(s^{2}_j|s^{1}_{1}, 2) B^2(s^{2}_j)\bigg]}\\
    &= \sqrt{\pi^2(2|s^{1}_{1})\sigma^2(s^{1}_{1}, 2) +  \gamma^2\pi^2(2|s^{1}_{1}) P(s^{2}_1|s^{1}_{1}, 2) B^2(s^{2}_{1}) + \gamma^2\pi^2(2|s^{1}_{1}) P(s^{2}_2|s^{1}_{1}, 2) B^2(s^{2}_{2})   }\\
    &\overset{(b)}{=}\sqrt{600\cdot\frac{1}{4} + \frac{1}{4}\cdot0\cdot 400 + \frac{1}{4}\cdot 1 \cdot 4 } \approx 12
\end{align*}
where, $(a)$ follows because $P(s^{2}_2|s^{1}_{1}, 1) = 0$ and $(b)$ follows $P(s^{2}_1|s^{1}_{1}, 2) = 0$. Note that $b(1|s^{1}_{1})$ and $b(2|s^{1}_{1})$ are un-normalized values. After normalization we can show that $b(1|s^{1}_{1}) >  b(2|s^{1}_{1})$. Hence the right sub-tree with higher variance will have higher proportion of pulls.

\section{Three State Stochastic Tree Sampling with Varying Model}
\label{app:sampling-with-model}

In this tree MDP $\T$ in \Cref{app:fig} (right) we have $P(s^2_1|s^1_1,1) = p$, $P(s^2_1|s^1_1,2) = 1-p$ and $P(s^2_2|s^1_1,1) = p$,  $P(s^2_2|s^1_1,2) = 1-p$. Plugging this transition probabilities from the result of \Cref{lemma:main-tree} we get
\begin{align*}
    b^*(a|s^2_j) &\propto \pi(a|s^2_j)\sigma(s^2_j,a), \quad \text{for $j\in\{1,2,3,4\}$}\\
    b^*(1|s^1_1) &\propto \sqrt{\pi^2(1|s^{1}_{1})\bigg[\sigma^2(s^1_1,1) +\gamma^2 p B^2(s^2_1) +\gamma^2 (1-p) B^2(s^2_2)\bigg]},\\
    b^*(2|s^1_1) &\propto \sqrt{\pi^2(2|s^{1}_{1})\bigg[\sigma^2(s^1_1,2) + \gamma^2 p B^2(s^2_3) + \gamma^2 (1-p) B^2(s^2_4)\bigg]}
\end{align*}
where, $B(s^{2}_j) = \sum_a \pi(a|s^2_j)\sigma(s^2_j,a)$. Now if $p\gg 1-p$, then we only need to consider the variance of state $s^2_1$ when estimating the sampling proportion for states $s^2_1$ and $s^2_3$ as 
\begin{align*}
    b^*(1|s^1_1) &\propto \sqrt{\pi^2(1|s^{1}_{1})\bigg[\sigma^2(s^1_1,1) +\gamma^2 p B^2(s^2_1)\bigg]},\qquad
    b^*(2|s^1_1) \propto \sqrt{\pi^2(2|s^{1}_{1})\bigg[\sigma^2(s^1_1,2) + \gamma^2 p B^2(s^2_3)\bigg]}.
\end{align*}

\begin{remark}\textbf{(Transition Model Matters)}
Observe that the main goal of the optimal sampling proportion in \Cref{lemma:main-tree} is to reduce the variance of the estimate of the return. However, the sampling proportion is not geared  to estimate the model $\wP$ well. An interesting extension to combine the optimization problem in \Cref{lemma:main-tree} with some model estimation procedure as in   \citet{zanette2019almost, agarwal2019reinforcement,  wagenmaker2021beyond} to derive the optimal sampling proportion.
\end{remark}

\section{Multi-level Stochastic Tree MDP Formulation}
\label{app:tree-mdp-behavior}






\begin{customtheorem}{1}\textbf{(Restatement)}
Assume the underlying MDP is an $L$-depth tree MDP as defined in \Cref{def:tree-mdp}. 
Let the estimated return of the starting state $s^1_1$ after $n$ state-action-reward samples be defined as $Y_{n}(s^1_1)$. 
Note that the $v^{\pi}_{}(s^1_1)$ is the expectation of $Y_n(s^1_1)$ under \Cref{assm:unbiased}. 
Let $\D$ be the observed data over $n$ state-action-reward samples. To minimize the MSE, $\E_{\D}[(Y_n(s^1_1)) - \mu(Y_{n}(s^1_1)))^2]$, the optimal sampling proportions for any arbitrary state is given by:
{\small
\begin{align*}
    b^*(a|s^{\ell}_i) \!&\propto
    \!\! \sqrt{\! \pi^2(a|s_i^{\ell})\! \bigg[\sigma^2(s^{\ell}_i, a)\!  +\!    \gamma^2\!\!  \sum\limits_{s^{\ell+1}_j}\!\!P(s^{\ell + 1}_j|s_i^{\ell}, a) B^2(s^{\ell+1}_j)\! \bigg]}, 
\end{align*}
}
where, $B(s^{2}_j)$ is the normalization factor defined as follows:
{\small
\begin{align*}
    B(s^{\ell}_{i}) \!\!=\!\! 
    \sum\limits_a\!\! \sqrt{\!\!\pi^2(a|s^{\ell}_{i})\!\left(\!\!\sigma^2(s^{\ell}_{i}, a) \!+\! \gamma^2\!\sum\limits_{s^{\ell+1}_j}\!\!P(s^{\ell+1}_j\!|\!s^{\ell}_i, a) B^2(s^{\ell+1}_j)\!\!\right)}
\end{align*}
}
\end{customtheorem}

\begin{proof}
\textbf{Step 1 (Base case for Level $L$ and $L-1$):} The proof of this theorem follows from induction. First consider the last level $L$ containing the leaf states. An arbitrary state in the last level is denoted by $s^{L}_{i}$. Then we have the estimate of the expected return from the state $s^{L}_{i}$ as 
\begin{align*}
    Y_n(s^1_1) &= \sum_{a=1}^A\pi(a|s^1_1)\bigg(\dfrac{1}{T_n(s^1_1,a)}\sum_{h=1}^{T_n(s^1_1,a)}R_{h}(s^1_1, a) 
    + \gamma\sum_{s^{\ell+1}_j} P(s^{\ell+1}_j|s^1_1, a)Y_{n}(s^2_j)\bigg) \nonumber \\
    &= \!\sum_{a=1}^A\pi(a|s^1_1)\!\bigg(\!\wmu(s^1_1,a)
    \!+\! \gamma\!\!\sum_{s^{\ell+1}_j}\!\! P(s^{\ell+1}_j|s^1_1, a)Y_{n}(s^2_j)\!\bigg)
\end{align*}
Observe that for the leaf-state the $Y_n(s^L_i)$ the transition probability to next states $P(s^{L+1}_j|s^L_i,a) = 0$ for any action $a$. So $Y_n(s^L_i) = \sum_{a=1}^A\left(\pi(a|s^L_i)\wmu(s^L_i,a)\right)$ which matches the bandit setting.
We define an estimator $Y_{n}(s^{\ell}_{i})$  as defined in \eqref{eq:tree-Yestimate}. Following the previous derivation in \Cref{lemma:main-tree} we can show its expectation is given as:
\begin{align*}
    \E[Y_{n}(s^{L}_{i})] &= \sum_{a}\dfrac{\pi(a|s^{L}_{i})}{T_n(s^{L}_i, a)} \sum_{h=1}^{T_n(s^{L}_i, a)} \E[R_h(s^{L}_{i}, a)] = \sum_{a}\pi(a|s^{L}_{i})\mu(s^{L}_{i}, a) = v^{\pi}(s^{L}_{i}).\\
    \Var[Y_{n}(s^{L}_{i})] &= \sum_{a}\dfrac{\pi^2(a|s^{L}_{i})}{T_n^2(s^{L}_i, a)} \sum_{h=1}^{T_n(s^{L}_i, a)} \Var[R_h(s^{L}_{i}, a)] = \sum_{a}\dfrac{\pi^2(a|s^{L}_{i}) \sigma^2(s^{L}_i, a)}{T_n(s^{L}_i, a)}
\end{align*}

Now consider the second last level $L-1$ containing the leaves. An arbitrary state in the last level is denoted by $s^{L-1}_{i}$. Then we have the expected return from the state $s^{L-1}_{i}$ as follows:
\begin{align*}
    Y_n(s^{L-1}_{i}) &= \sum_a \pi(a|s^{L-1}_{i})\left( \dfrac{1}{T_n(s^{L-1}_i, a)}\sum_{h=1}^{T_n(s^{L-1}_i,a)}R_h(s^{L-1}_{i}, a) + \gamma\sum_{s^L_j} P(s^L_j|s^{L-1}_{i},a)Y_n(s^L_j) \right)\\
    &= \sum_a \pi(a|s^{L-1}_{i})\left( \wmu(s^{L-1}_{i}, a) + \gamma\sum_{s^L_j} P(s^L_j|s^{L-1}_{i},a) Y_n(s^L_j) \right).
\end{align*}
Then for the estimator $Y_{n}(s^{L-1}_{i})$ we can show that its expectation is given as follows:
\begin{align*}
    \E[Y_{n}(s^{L-1}_{i})] &=  \sum_a\pi(a|s^{L-1}_{i})\bigg[\dfrac{1}{T_n(s^{L-1}_i, a)}\sum_{h=1}^{T_n(s^{L-1}_i, a)} \E[R_h(s^{L-1}_{i}, a)] + \gamma\sum_{s^L_j} P(s^L_j|s^{L-1}_{i},a) \E[Y_{n}(s^L_j)]\bigg] \\
    &= \sum_a\pi(a|s^{L-1}_{i})\bigg[ \mu(s^{L-1}_{i}, a) + \gamma\sum_{s^L_j} P(s^L_j|s^{L-1}_{i},a)v^{\pi}_n(Y(s^L_j))\bigg] = v^{\pi}_n(s^{L-1}_{i}).\\
    \Var[Y_{n}(s^{L-1}_{i})] &= \sum_a\pi^2(a|s^{L-1}_{i})\bigg[\dfrac{1}{T_n^2(s^{L-1}_i ,a)}\sum_{h=1}^{T_n(s^{L-1}_i, a)} \Var[R_h(s^{L-1}_{i}, a)]
    + \gamma^2\sum_{s^L_j} P^2(s^L_j|s^{L-1}_i,a)\Var[Y_{n}(s^L_j)]\bigg] \\
    &\overset{(a)}{=} \sum_a\pi^2(a|s^{L-1}_{i}) \bigg[\dfrac{ \sigma^2(s^{L-1}_i, a)}{T_n(s^{L-1}_i, a)} + \gamma^2\sum_{s^L_j} P^2(s^L_j|s^{L-1}_i,a)\Var[Y_{n}(s^L_j)]\bigg]
\end{align*}
where, $(a)$ follows as $\sum\limits_{h=1}^{T_n(s^{L-1}_i, a)} \Var[R_h(s^{L-1}_{i}, a)] = T_n(s^{L-1}_i, a) \sigma^2(s^{L-1}_i, a)$.
Observe that in state $s^{L-1}_i$ we want to reduce the variance $\Var[Y_{n}(s^{L-1}_{i})]$. Also the optimal proportion $b^*(a|s^{L-1}_i)$ to reduce variance at state $s^{L-1}_i$ cannot differ from the optimal $b^*(a|s^{L}_i)$ of level $L$ which reduces the variance of $b^*(a|s^{L}_i)$. Hence, we can follow the same optimization as done in \Cref{lemma:main-tree} and show that the optimal sampling proportion in state $s^{L-1}_i$ is given by
\begin{align*}
    b^*(a|s^L_{j}) &\overset{}{\propto} \pi(a|s^L_{j})\sigma(s^L_{j}, a)\\
    b^*(a|s^{L-1}_i) &\overset{(a)}{\propto} \sqrt{\pi^2(a|s_i^{L-1})\bigg[\sigma^2(s^{L-1}_i, a) +  \gamma^2\sum_{s^L_j} P(s^L_j|s_i^{L-1}, a) B^2_{s^L_j}\bigg]}
\end{align*}
where, in $(a)$ the $s^L_j$ is the state that follows after taking action $a$ at state $s^{L-1}_i$ and $B_{s^L_j}$ is defined in \eqref{eq:B-def}. This concludes the base case of the induction proof. Now we will go to the induction step.

\textbf{Step 2 (Induction step for Arbitrary Level $\ell$):} We will assume that all the sampling proportion till level  $\ell+1$ from $L$ which is 
\begin{align*}
    b^*(a|s^{\ell+1}_i) &\propto \sqrt{\pi^2(a|s_i^{\ell})\bigg[\sigma^2(s^{\ell}_i, a) +  \gamma^2\sum_{s^{\ell+1}_j} P(s^{\ell + 1}_j|s_i^{\ell}, a) B^2_{s^{\ell+1}_j}\bigg]}
\end{align*}
is true. For the arbitrary level $\ell+1$ we will use dynamic programming. We build up from the leaves (states $s^L_i)$ up to estimate $b^*(a|s^{\ell+1}_i)$. Then we need to show that at the previous level $\ell$ we get a similar recursive sampling proportion. We first define the estimate of the return from an arbitrary state $s^{\ell}_{i}$ in level $\ell$ after $n$ timesteps as follows:
\begin{align*}
    Y_{n}(s^{\ell}_i)
    &= \sum_a \pi(a|s^{\ell}_{i})\left( \dfrac{1}{T_n(s^{\ell}_{i}, a)}\sum_{h=1}^{T_n(s^{\ell}_{i}, a)}R_h(s^{\ell}_{i}, a) + \gamma \sum_{s^{\ell+1}_j} P(s^{\ell+1}_j|s^{\ell}_{i},a)  Y_{n}(s^{\ell+1}_j) \right)
\end{align*}
Then we have the expectation of $Y_{n}(s^{\ell}_i)$ as follows:
\begin{align*}
    \E[Y_{n}(s^{\ell}_i)] &\overset{(a)}{=} \sum_a \pi(a|s^{\ell}_{i})\left( \mu(s^{\ell}_{i}, a) + \gamma  \sum_{s^{\ell+1}_j} P(s^{\ell+1}_j|s^{\ell}_{i},a) v^{\pi}_n(Y_{n}(s^{\ell+1}_j)) \right)
\end{align*}
where, in $(a)$ the $v^{\pi}_n(Y(s^{\ell+1}_j)) = \E[Y_{n}(s^{\ell+1}_j)]$. 
Then we can also calculate the variance of  $Y_{n}(s^{\ell}_i)$ as follows:
\begin{align*}
    \Var[Y_{n}(s^{\ell}_i)] = \sum_a\pi^2(a|s^{\ell}_{i}) \bigg[\dfrac{ \sigma^2(s^{\ell}_i, a)}{T_n(s^{\ell}_i, a)} + \gamma^2 \sum_{s^{\ell+1}_j}P(s^{\ell+1}_j|s^{\ell}_i,a)\Var(Y_{n}(s^{\ell+1}_j))\bigg].
\end{align*}
Again observe that the goal is to minimize the variance $\Var[Y_{n}(s^{\ell}_i)]$. Then following the same steps in \Cref{lemma:main-tree} we can have the optimization problem to reduce the variance which results in the following optimal sampling proportion:
\begin{align*}
    b^*(a|s^{\ell}_i) &\propto \sqrt{\pi^2(a|s_i^{\ell})\bigg[\sigma^2(s^{\ell}_i, a) +  \gamma^2\sum_{s^{\ell+1}_j} P(s^{\ell + 1}_j|s_i^{\ell}, a) B^2(s^{\ell+1}_{j})\bigg]}
\end{align*}
where in the last equation we use $B_{s^{\ell+1}_j}$ which is defined in \eqref{eq:B-def}. Again we can apply \Cref{lemma:main-tree} because the optimal proportion $b^*(a|s^{\ell}_i)$ to reduce variance at state $s^{\ell}_i$ cannot differ from the optimal $b^*(a|s^{\ell+1}_i)$ of level $\ell+1$ to $L$ which reduces the variance of $b^*(a|s^{\ell+1}_j)$ to $b^*(a|s^{L}_m)$.

\textbf{Step 3 (Starting state $s^1_1$:)}  Finally we conclude by stating that the starting state $s^1_1$ we have the estimate of the return as follows:
\begin{align*}
    Y_{n}(s^{1}_{1}) &= \sum_a \pi(a|s^{1}_{1})\left( \dfrac{1}{T^K_{L}(s^1_1,a)}\sum_{h=1}^{T_n(s^1_1,a)}R_h(s^{1}_{1}, a) + \gamma\sum_{s^{2}_j}P(s^{2}_j|s^{1}_{1},a) Y_{n}(s^{2}_j) \right).
\end{align*}
Then we have the expectation of $Y_{n}(s^{1}_{1})$ as follows:
\begin{align*}
    \E[Y_{n}(s^{1}_{1})] &\overset{(a)}{=} \sum_a \pi(a|s^{1}_{1})\left( \mu(s^{1}_{1}, a) + \gamma\sum_{s^{2}_j}P(s^{2}_j|s^{1}_{1},a) v^{\pi}_n(Y_{n}(s^{2}_j))\right)
\end{align*}
where, in $(a)$ the $v^{\pi}_n(s^{1}_j) = \E[Y_{n}(s^{1}_1)]$. 
Then we can also calculate the variance of $Y_{n}(s^{1}_{1})$ as follows:
\begin{align*}
    \Var[Y_{n}(s^{1}_{1})] = \sum_a\pi^2(a|s^{1}_{1}) \bigg[\dfrac{ \sigma^2(s^{1}_{1}, a)}{T_n(s^{1}_{1}, a)} + \gamma^2\sum_{s^{2}_j}P(s^{2}_j|s^{1}_{1},a)\Var[Y_{n}(s^{2}_j)]\bigg]
\end{align*}
Then from the previous step $2$ we can show that to reduce the variance $\Var[Y_{n}(s^{1}_{1})]$ we should have the sampling proportion at $s^1_1$ as follows:
\begin{align*}
    b^*(a|s^{1}_{1}) &\propto \sqrt{\pi^2(a|s^{1}_{1})\bigg[\sigma^2(s^{1}_{1}, a) +  \gamma^2\sum_{s^2_j} P(s^{2}_j|s^{1}_{1}, a) B^2(s^{2}_j)\bigg]}
\end{align*}
where, in $(a)$ the $s^2_j$ is the state that follows after taking action $a$ at state $s^{1}_{1}$, and $B_{s^1_j}$ is defined in \eqref{eq:B-def}.
\end{proof}

\section{MSE of the Oracle in Tree MDP}
\label{app:oracle-loss}

\begin{customproposition}{2}\textbf{(Restatement)}
Let there be an oracle which knows the state-action variances and transition probabilities of the $L$-depth tree MDP $\T$. Let the oracle take actions in the proportions given by \Cref{thm:L-step-tree}. Let $\D$ be the observed data over $n$ state-action-reward samples such that $n=KL$. Then the oracle suffers a MSE of
\begin{align*}
    &\L^*_n(b) =  \sum_{\ell=1}^{L}\bigg[\dfrac{B^{2}(s^{\ell}_i)}{T_L^{*,K}(s^\ell_i)} 
    +  \gamma^{2}\sum_{a} \pi^2(a|s^\ell_i)\sum_{s^{\ell+1}_j}P(s^{\ell+1}_j|s^{\ell}_i,a) \dfrac{B^{2}(s^{\ell+1}_j)}{T_L^{*,K}(s^{\ell+1}_j)} \bigg]. 
\end{align*}
where, $T^{*,K}_{L}(s^\ell_i)$ denotes the optimal state samples of the oracle at the end of episode $K$.
\end{customproposition}


\begin{proof}
\textbf{Step 1 (Arbitrary episode $k$):} First we start at an arbitrary episode $k$. For brevity we drop the index $k$ in our notation in this step. Let $n'$ be the total number of samples collected up to the $k$-th episode. We define the estimate of the return from starting state after total of $n'$ samples as
\begin{align*}
    Y_{n'}(s^{1}_{1}) = \sum_a \pi(a|s^{1}_{1})\left(\dfrac{1}{T_{n'}(s^1_1,a)} \sum_{h=1}^{T_{n'}(s^1_1,a)}R_h(s^{1}_{1}, a) + \gamma\sum_{s^{2}_j}P(s^2_j|s^1_1, a) Y_{n'}(s^{2}_j) \right).
\end{align*}
Then we define the MSE as
\begin{align*}
     \E_{\D}\left[\left(Y_{n'}(s^{1}_{1}) -\mu(Y_{n'}(s^{1}_{1}))\right)^{2}\right]
     = \Var(Y_{n'}(s^{1}_{1})) + \bias^2(Y_{n'}(s^{1}_{1})).
\end{align*}
Again it can be shown using \Cref{thm:L-step-tree} that once all the state-action pairs are visited once we have the bias to be zero. 
So we want to reduce the variance $\Var(Y_{n'}(s^{1}_{1}))$. Note that the variance is given by
\begin{align}
    \Var[Y_{n'}(s^{1}_{1})] = \sum_a\pi^2(a|s^{1}_{1}) \bigg[\underbrace{\dfrac{ \sigma^2(s^{1}_{1}, a)}{T_{n'}(s^{1}_{1}, a)}}_{\textbf{Variance of $s^1_1$}} + \gamma^2\sum_{s^{2}_j}P(s^2_j|s^1_1,a)\underbrace{\Var[Y_{n'}(s^{2}_j)]}_{\textbf{Variance of $s^2_j$ in level $2$}}\bigg]. \label{eq:oracle-loss-var}
\end{align}
Then we can show from the result of \Cref{thm:L-step-tree} that to minimize the $\Var[Y_{n'}(s^{1}_{1})]$ the optimal sampling proportion for the level $0$ is given by:
\begin{align*}
    b^*(a|s^{1}_{1}) &= \dfrac{\sqrt{\sum_{s^2_j}\pi^2(a|s^{1}_{1})\bigg[\sigma^2(s^{1}_{1}, a) +  \gamma^2P(s^2_j|s^1_1, a) B^2_{s^{2}_j}\bigg]}}{B(s^1_1)}
\end{align*}
where, $s^2_j$ are the next states of the state $s^1_1$, and $B_{s^{1}_{1}}$ as defined in \eqref{eq:B-def}. Let the optimal number of samples of the state-action pair $(s^\ell_i,a)$ that an oracle can take in the $k$-th episode be denoted by $T^{*,K}_L(s^\ell_i,a)$. Also let the total number of samples taken in state $s^1_1$ be $T^{*,K}_L(s^1_1)$. It follows then $n' = \sum_{s^{\ell}_{j}\in\S} T^{*,k}_{n'}(s^{\ell}_{j})$. Then we have
\begin{align*}
    T^{*,k}_{n'}(s^{1}_{1},a)  = \dfrac{\sqrt{\sum_{s^2_j}\pi^2(a|s^{1}_{1})\bigg[\sigma^2(s^{1}_{1}, a) +  \gamma^2P(s^2_j|s^1_1, a) B^2_{s^{2}_j}\bigg]}}{B_{s^{1}_{1} }} T^{k}_{n'}(s^1_1).
\end{align*}
where we define the normalization factor $B_{s^{\ell}_{j}}$ as in \eqref{eq:B-def} and $T^{k}_{n'}(s^1_1)$ is the actual total number of times the state $s^1_1$ is visited. 
Plugging this back in \eqref{eq:oracle-loss-var} we get that

\begin{align*}
    \Var&[Y_{n'}(s^{1}_{1})] = \sum_a\pi^2(a|s^{1}_{1}) \bigg[\dfrac{ \sigma^2(s^{1}_{1}, a)}{T^{*,k}_{n'}(s^{1}_{1}, a)} + \gamma^2\sum_{s^{2}_j}P(s^2_j|s^1_1,a)\Var[Y_{n'}(s^{2}_j)]\bigg]\\
    &= \dfrac{B(s^1_1)}{T^{*,k}_{n'}(s^1_1)}\sum_a\dfrac{ \pi^2(a|s^{1}_{1})\sigma^2(s^{1}_{1}, a) }{\sqrt{\sum_{s^2_j}\pi^2(a|s^{1}_{1})\bigg[\sigma^2(s^{1}_{1}, a) +  \gamma^2 P^2(s^2_j|s^1_1,a) B^2_{s^{2}_j}\bigg]}} + \gamma^2\sum_a\pi^2(a|s^{1}_{1})\sum_{s^{2}_j}P(s^2_j|s^1_1,a)\Var[Y_{n'}(s^{2}_j)]\\
    &\overset{(a)}{\leq} \dfrac{B(s^1_1)}{T^{*,k}_{n'}(s^1_1)}\sum_a\dfrac{ \sum_{s^2_j}\pi^2(a|s^{1}_{1})\bigg[\sigma^2(s^{1}_{1}, a) +  \gamma^2P^2(s^2_j|s^1_1,a) B^2_{s^{2}_j}\bigg]}{\sqrt{\sum_{s^2_j}\pi^2(a|s^{1}_{1})\bigg[\sigma^2(s^{1}_{1}, a) +  \gamma^2P(s^2_j|s^1_1,a) B^2_{s^{2}_j}\bigg]}} + \gamma^2\sum_a\pi^2(a|s^{1}_{1})\sum_{s^{2}_j}P(s^2_j|s^1_1,a)\Var[Y_{n'}(s^{2}_j)]\\
    &\overset{}{=} \dfrac{ B^{}_{s^1_1}}{T^{*,k}_{n'}(s^1_1)}\sum_a\sqrt{\sum_{s^2_j}\pi^2(a|s^{1}_{1})\bigg[\sigma^2(s^{1}_{1}, a) +  \gamma^2P(s^2_j|s^1_1,a) B^2_{s^{2}_j}\bigg]} + \gamma^2\sum_a\pi^2(a|s^{1}_{1})\sum_{s^{2}_j}P(s^2_j|s^1_1,a)\Var[Y_{n'}(s^{2}_j)]\\
    &\overset{(b)}{=} \dfrac{ B^{2}_{s^1_1}}{T^{*,k}_{n'}(s^1_1)} + \gamma^2\sum_a\pi^2(a|s^{1}_{1})\sum_{s^{2}_j}P(s^2_j|s^1_1,a)\underbrace{\sum_{a'}\pi^2(a'|s^{2}_{j}) \bigg[\dfrac{ \sigma^2(s^{2}_{j}, a')}{T^{k}_{n'}(s^{2}_{j}, a')} + \gamma^2\sum_{s^{3}_m}P(s^3_m|s^2_j,a')\Var[Y_{n'}(s^{3}_m)]\bigg]}_{\Var[Y_{n'}(s^{2}_j)]}\\
    &\overset{(c)}{\leq}\! \dfrac{B^{2}_{s^1_1}}{T^{*,k}_{n'}(s^1_1)} \!+\! \gamma^2\!\!\sum_a\pi^2(a|s^{1}_{1})\sum_{s^{2}_j}P(s^2_j|s^1_1,a)\dfrac{B^{2}_{s^2_j} }{T^{*,k}_{n'}(s^2_j)} \\
    &\qquad+ \gamma^4\sum_a\pi^2(a|s^{1}_{1})\sum_{s^{2}_j}P(s^2_j|s^1_1,a)\sum_{a'} \pi^2(a'|s^{2}_{j}) \sum_{s^{3}_m}P(s^3_m|s^2_j,a')\Var[Y_{n'}(s^{3}_m)]\\
    &\overset{(d)}{\leq}
    \sum_{\ell=1}^{L}\left[\dfrac{B^{2}(s^{\ell}_i)}{T^{*,k}_{n'}(s^{\ell}_i)} +  \gamma^{2\ell}\sum_{a} \pi^2(a|s^\ell_i)\sum_{s^{\ell+1}_j}P(s^{\ell+1}_j|s^{\ell}_i,a) \dfrac{B^{2}(s^{\ell+1}_j)}{ T^{*,k}_{n'}(s^{\ell+1}_j)} \right]
\end{align*}
where, $(a)$ follows as $\gamma^2B^2_{s^1_j} \geq 0$, $(b)$ follows by the definition of $\Var[Y_{s^2_j}]$ and the definition of $B(s^1_1)$ and $T^{k}_{n'}(s^2_j)$ is the actual number of samples observed for $s^2_j$, $(c)$ follows by substituting the value of $T^{*,k}_{n'}(s^2_j,a') = b^*(a'|s^2_j)/B(s^{2}_j)$, and $(d)$ follows when unrolling the equation for $L$ times. 


\textbf{Step 2 (End of $K$ episodes):} Note that the above derivation holds for an arbitrary episode $k$ which consist of $L$ step horizon from root to leaf. Hence the MSE of the oracle after $K$ episodes when running behavior policy $b$ is given as
\begin{align*}
    \L^*_n(b) =  \sum_{\ell=1}^{L}\left[\dfrac{B^{2}(s^{\ell}_i)}{T_n^{*,K}(s^\ell_i)} +  \gamma^{2\ell}\sum_{a} \pi^2(a|s^\ell_i)\sum_{s^{\ell+1}_j}P(s^{\ell+1}_j|s^{\ell}_i, a) \dfrac{B^{2}(s^{\ell+1}_j)}{T_n^{*,K}(s^{\ell+1}_j)} \right]
\end{align*}
Note that $n_{} = \sum_a\sum_{s^\ell_i\in\S} T^{*,K}_n(s^{\ell}_i, a)$ is the total samples collected after $K$ episodes of $L$ trajectories. 
This gives the MSE following optimal proportion in \Cref{thm:L-step-tree}.

\end{proof}



\section{Support Lemmas}

\begin{lemma}\textbf{(Wald's lemma for variance)}
\label{prop:wald-variance}\citep{resnick2019probability}
Let $\left\{\mathcal{F}_{t}\right\}$ be a filtration and $R_{t}$ be a $\mathcal{F}_{t}$-adapted sequence of i.i.d. random variables with variance $\sigma^{2}$. Assume that $\mathcal{F}_{t}$ and the $\sigma$-algebra generated by $\left\{R_{t'}: t' \geq t+1\right\}$ are independent and $T$ is a stopping time w.r.t. $\mathcal{F}_{t}$ with a finite expected value. If $\mathbb{E}\left[R_{1}^{2}\right]<\infty$ then
\begin{align*}
\mathbb{E}\left[\left(\sum_{t'=1}^{T} R_{t'}-T \mu\right)^{2}\right]=\mathbb{E}[T] \sigma^{2}
\end{align*}
\end{lemma}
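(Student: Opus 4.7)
The plan is to recenter the summands, expand the square, and show that the diagonal part contributes $\E[T]\sigma^2$ via the first-moment Wald identity while every off-diagonal cross term has zero expectation thanks to the stated independence between $\mathcal{F}_t$ and the tail $\sigma$-field $\sigma(R_{t'}: t' \geq t+1)$. First I would define $X_t \coloneqq R_t - \mu$, so that $\E[X_t] = 0$, $\Var(X_t) = \sigma^2$, the $X_t$ inherit the independence structure of the $R_t$, and the quantity of interest becomes $\E[S_T^2]$ for $S_T \coloneqq \sum_{t'=1}^T X_{t'}$.

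Next I would expand $S_T^2 = \sum_{t'=1}^T X_{t'}^2 + 2 \sum_{1 \leq s < t \leq T} X_s X_t$ and take expectations of the two pieces separately. For the diagonal piece, $\{X_t^2\}$ is an i.i.d.\ nonnegative sequence with $\E[X_1^2] = \sigma^2 < \infty$, and $T$ is a stopping time with $\E[T] < \infty$, so the standard first-moment Wald identity immediately gives $\E\left[\sum_{t'=1}^T X_{t'}^2\right] = \E[T]\sigma^2$. For the cross terms, I would rewrite the sum as $\sum_{t \geq 2} X_t \bigl(\sum_{s=1}^{t-1} X_s\bigr) \mathbb{I}\{T \geq t\}$. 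Because $T$ is a stopping time, $\{T \geq t\} = \{T \leq t-1\}^{c} \in \mathcal{F}_{t-1}$; the partial sum $\sum_{s=1}^{t-1} X_s$ is $\mathcal{F}_{t-1}$-measurable; and the hypothesis that $\mathcal{F}_{t-1}$ is independent of $\sigma(R_{t'}: t' \geq t)$ makes $X_t$ independent of the entire leading factor. Since $\E[X_t] = 0$, every summand then has zero expectation, so the off-diagonal contribution vanishes and we recover $\E[S_T^2] = \E[T]\sigma^2$.

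The main obstacle I anticipate is not the algebra but the integrability bookkeeping needed to exchange the infinite sum and the expectation for the cross terms. Under $\E[T] < \infty$ and $\E[R_1^2] < \infty$, I would control $\E\bigl[\sum_{t \geq 2} |X_t| \cdot \bigl|\sum_{s<t} X_s\bigr| \cdot \mathbb{I}\{T \geq t\}\bigr]$ by Cauchy--Schwarz against the quantity $\E\bigl[\sum_{t'=1}^T X_{t'}^2\bigr] = \E[T]\sigma^2$ already secured by the diagonal step, which yields absolute summability and justifies Fubini. An equivalent route, which sidesteps the interchange entirely, is to observe that $M_t \coloneqq S_t^2 - t\sigma^2$ is a martingale with respect to $\{\mathcal{F}_t\}$ and apply an optional stopping theorem under the same moment assumptions; either approach reduces the proof to a standard integrability check, so the only nonroutine step is choosing the cleanest set of dominating bounds.
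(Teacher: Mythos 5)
First, a point of comparison: the paper does not prove this lemma at all --- it is imported as a black-box result from \citet{resnick2019probability} --- so there is no in-paper argument to measure your sketch against; what follows assesses it on its own terms.

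Your decomposition and the cross-term cancellation are correct: $\{T\ge t\}\in\mathcal{F}_{t-1}$, the partial sum $\sum_{s<t}X_s$ is $\mathcal{F}_{t-1}$-measurable, and the assumed independence of $\mathcal{F}_{t-1}$ from $\sigma(R_{t'}:t'\ge t)$ makes each cross term have zero mean. The genuine gap is exactly where you flagged it, and the fix you propose does not close it. Cauchy--Schwarz against $\E[\sum_{t\le T}X_t^2]=\sigma^2\E[T]$ leaves you needing finiteness of $\E[\sum_{t\le T}S_{t-1}^2]=\sum_{t}\E[S_{t-1}^2\,\mathbb{I}\{T\ge t\}]$, and this can be infinite under the stated hypotheses: take $T$ independent of the $R_t$'s with $\E[T]<\infty$ but $\E[T^2]=\infty$ and $\mathcal{F}_t=\sigma(T,R_1,\dots,R_t)$; then every hypothesis of the lemma holds, yet the dominating quantity equals $\sigma^2\sum_t(t-1)\Pb(T\ge t)\asymp\sigma^2\E[T^2]=\infty$. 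The term-by-term bound $\E[|X_t||S_{t-1}|\mathbb{I}\{T\ge t\}]\le\E[|X_1|]\,\sigma\sqrt{t-1}\,\Pb(T\ge t)^{1/2}$ is likewise not summable under $\E[T]<\infty$ alone. So the unconditional expand-and-Fubini route fails; the identity is true but cannot be reached this way without stronger moment assumptions on $T$.

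The repair is to commit to your second route and make the "standard integrability check" explicit via truncation. Optional stopping of the martingale $M_n=S_n^2-n\sigma^2$ at the bounded stopping time $T\wedge n$ (equivalently, running your finite expansion only up to $n$) gives $\E[S_{T\wedge n}^2]=\sigma^2\E[T\wedge n]$ with no interchange issues. Then $\E[(S_{T\wedge n}-S_{T\wedge m})^2]=\sigma^2\E[(T\wedge n)-(T\wedge m)]\to 0$ shows $(S_{T\wedge n})_n$ is Cauchy in $L^2$ and hence converges in $L^2$ to its a.s.\ limit $S_T$, while monotone convergence gives $\E[T\wedge n]\to\E[T]$. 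Passing to the limit yields $\E[S_T^2]=\sigma^2\E[T]$. With that substitution the argument is complete.
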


\begin{lemma}\textbf{(Hoeffding's Lemma)}\citep{massart2007concentration}
\label{lemma:hoeffding}
Let $Y$ be a real-valued random variable with expected value $\mathbb{E}[Y]= \mu$, such that $a \leq Y \leq b$ with probability one. Then, for all $\lambda \in \mathbb{R}$
$$
\mathbb{E}\left[e^{\lambda Y}\right] \leq \exp \left(\lambda \mu +\frac{\lambda^{2}(b-a)^{2}}{8}\right)
$$
\end{lemma}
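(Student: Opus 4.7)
The plan is to prove Hoeffding's Lemma by the standard convexity / Taylor-expansion argument. First I would reduce to the centered case: setting $Z = Y - \mu$ gives $\mathbb{E}[Z] = 0$ and $Z \in [a - \mu,\, b - \mu]$, an interval of the same length $b-a$. The inequality for $Y$ follows from the corresponding inequality for $Z$ by multiplying through by $e^{\lambda\mu}$, so I may assume $\mu = 0$ and $a \leq 0 \leq b$.

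Next I would use convexity of $y \mapsto e^{\lambda y}$ on $[a,b]$: writing $y = \frac{b-y}{b-a}\,a + \frac{y-a}{b-a}\,b$ and applying the one-point Jensen bound pointwise,
\[
e^{\lambda y} \;\leq\; \frac{b-y}{b-a}\,e^{\lambda a} \;+\; \frac{y-a}{b-a}\,e^{\lambda b}.
\]
Taking expectations and using $\mathbb{E}[Y]=0$ collapses the right-hand side to $(1-p)\,e^{\lambda a} + p\,e^{\lambda b}$, where $p = -a/(b-a) \in [0,1]$. Setting $h = \lambda(b-a)$ and factoring out $e^{\lambda a} = e^{-ph}$ puts the bound in the compact form $\mathbb{E}[e^{\lambda Y}] \leq e^{\phi(h)}$ with $\phi(h) = -p\,h + \log\!\bigl(1 - p + p\,e^{h}\bigr)$.

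The remaining task is to show $\phi(h) \leq h^{2}/8$, which would complete the proof since $h^{2}/8 = \lambda^{2}(b-a)^{2}/8$. Direct differentiation gives $\phi(0) = 0$, $\phi'(0) = 0$, and $\phi''(h) = u(1-u)$ for $u = p\,e^{h}/(1 - p + p\,e^{h}) \in (0,1)$. The main technical step is then the AM-GM bound $u(1-u) \leq \tfrac{1}{4}$, after which Taylor's theorem with Lagrange remainder immediately yields $\phi(h) \leq h^{2}/8$. The only nontrivial piece of the whole argument is the uniform bound $\phi''(h) \leq \tfrac{1}{4}$; everything else is bookkeeping, and I anticipate no obstacles beyond this standard computation.
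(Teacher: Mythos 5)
Your proof is correct: the reduction to the centered case, the convexity bound $e^{\lambda y} \leq \frac{b-y}{b-a}e^{\lambda a} + \frac{y-a}{b-a}e^{\lambda b}$, and the Taylor argument showing $\phi(h) \leq h^2/8$ via $\phi''(h) = u(1-u) \leq \tfrac{1}{4}$ constitute the standard and complete proof of Hoeffding's Lemma. The paper does not prove this statement at all — it is quoted as a known support lemma with a citation to \citet{massart2007concentration} — so there is no in-paper argument to compare against; your write-up is exactly the canonical proof one would find in that reference, and the only (trivial) detail left implicit is the degenerate case $a=b$, where the claim holds immediately since $Y$ is then constant.
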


\begin{lemma}\textbf{(Concentration lemma 1)}
\label{lemma:conc1}
\label{lemma:conc}
Let $V_{t} = R_t(s, a) - \E[R_t(s, a)]$ and be bounded such that $V_{t}\in[-\eta, \eta]$. Let the total number of times the state-action $(s,a)$ is sampled be $T$.
Then we can show that for an $\epsilon > 0$
\begin{align*}
    \Pb\left(\left|\frac{1}{T}\sum_{t=1}^T R_t(s, a) - \E[R_t(s, a)]\right| \geq \epsilon\right) \leq 2\exp\left(-\frac{2\epsilon^2 T}{\eta^2}\right).
\end{align*}
\end{lemma}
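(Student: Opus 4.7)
The plan is to invoke the standard Chernoff/Cramér bounding method together with Hoeffding's Lemma stated immediately above. Since $R_t(s,a)$ are i.i.d. draws from the reward distribution for state-action pair $(s,a)$, the variables $V_t = R_t(s,a) - \E[R_t(s,a)]$ are independent, zero-mean, and lie in $[-\eta,\eta]$ by assumption. Thus we are in the classical setting of bounding the deviation of an empirical mean of bounded independent zero-mean variables, and the result should follow as a direct application of two tools the paper already has in hand.

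First I would handle the upper tail. For any $\lambda > 0$, apply the exponential Markov inequality:
\[
\Pb\!\left(\tfrac{1}{T}\textstyle\sum_{t=1}^T V_t \geq \epsilon\right)
= \Pb\!\left(e^{\lambda \sum_t V_t} \geq e^{\lambda T \epsilon}\right)
\leq e^{-\lambda T \epsilon}\, \E\!\left[e^{\lambda \sum_t V_t}\right].
\]
Using independence, $\E[e^{\lambda \sum_t V_t}] = \prod_{t=1}^T \E[e^{\lambda V_t}]$. I then apply Hoeffding's Lemma to each factor with $\mu = 0$, $a = -\eta$, $b = \eta$, giving $\E[e^{\lambda V_t}] \leq \exp(\lambda^2 \eta^2 / 2)$. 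Substituting yields
\[
\Pb\!\left(\tfrac{1}{T}\textstyle\sum_{t=1}^T V_t \geq \epsilon\right)
\leq \exp\!\left(-\lambda T \epsilon + \tfrac{T \lambda^2 \eta^2}{2}\right).
\]
Optimizing over $\lambda > 0$ by setting $\lambda = \epsilon/\eta^2$ collapses the right-hand side to $\exp(-T\epsilon^2/(2\eta^2))$. Next I would repeat the argument with $-V_t$ in place of $V_t$ to obtain the matching lower-tail bound, and combine the two via a union bound, producing the two-sided concentration inequality.

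There is no genuine obstacle here; the lemma is a textbook Hoeffding-type bound and the only real work is bookkeeping the constants in the exponent. The one point I would flag is that with $V_t \in [-\eta,\eta]$ (range $2\eta$), the argument above gives a constant of $1/(2\eta^2)$ in the exponent, whereas the statement claims $2/\eta^2$. This discrepancy is a factor of four that presumably traces back to an intended normalization $R_t \in [-\eta,\eta]$ in the underlying Assumption on bounded rewards (so that $V_t$ itself has range at most $2\eta$, yielding the stated constant only under a sharper treatment, or conversely the statement should be interpreted with $V_t \in [-\eta/2,\eta/2]$). Either way, the proof structure—Chernoff transform, Hoeffding moment generating function bound, optimization in $\lambda$, and a two-sided union bound—is identical, and the lemma follows immediately.
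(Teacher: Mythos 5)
Your proposal follows essentially the same route as the paper: exponential Markov inequality, a Hoeffding-type bound on the moment generating factor of each $V_t$, optimization over $\lambda$, and a two-sided union bound. The one structural difference is how the product of moment generating functions is justified: you invoke plain independence of i.i.d.\ rewards, whereas the paper peels off one factor at a time by conditioning on the history (using that $R_t(s,a)$ is conditionally independent of the past given the current state-action pair). The paper's telescoping is the more careful route here because the number of samples $T$ of a given $(s,a)$ is data-dependent (it is later treated as a stopping time, with a union bound over its possible values in \Cref{lemma:conc2}), but the resulting bound is the same and your version is acceptable at the level of this lemma.

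Your flag about the constant is correct, and worth stating more strongly: with $V_t\in[-\eta,\eta]$ (range $2\eta$), Hoeffding's Lemma gives $\E[e^{\lambda V_t}]\leq \exp(\lambda^2\eta^2/2)$, and optimizing yields $\exp(-T\epsilon^2/(2\eta^2))$ per tail, not the stated $\exp(-2T\epsilon^2/\eta^2)$. The paper's own derivation does not actually establish the stated constant either: the intermediate bound $\E[e^{\lambda V_t}]\leq \exp(2\lambda^4\eta^2)$ is not what Hoeffding's Lemma gives (and is not even homogeneous in $\lambda$), and the subsequent choice $\lambda=\epsilon/4T\eta^2$ produces an exponent of $-\epsilon^2/(8T\eta^2)$ rather than the claimed $-2\epsilon^2/(T\eta^2)$. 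So the stated inequality holds only up to the constant in the exponent (or under the reinterpretation $V_t\in[-\eta/2,\eta/2]$ that you suggest); the discrepancy propagates only into the constant $c$ in \eqref{eq:tree-ucb} and does not affect the order of the regret bounds. Your proof is otherwise complete.
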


\begin{proof}
Let $V_{t} =R_t(s, a) - \E[R_t(s, a)]$. Note that $\E[V_{t}] = 0$. Hence, for the bounded random variable $V_{t}\in[-\eta, \eta]$ (by \Cref{assm:bounded}) we can show from Hoeffding's lemma in \Cref{lemma:hoeffding} that
\begin{align*}
    \E[\exp\left(\lambda V_{t}\right)] \leq \exp\left(\dfrac{\lambda^2}{8}\left(\eta - (-\eta)\right)^2\right) \leq \exp\left(2\lambda^4\eta^2\right)
\end{align*}
Let $s_{t-1}$ denote the last time the state $s$ is visited and action $a$ is sampled. Observe that the reward $R_t(s,a)$ is conditionally independent. For this proof we will only use the boundedness property of $R_t(s,a)$ guaranteed by \Cref{assm:bounded}.
Next we can bound the probability of deviation as follows:
\begin{align} 
\Pb\left(\sum_{t=1}^T \left(R_t(s, a) - \E[R_t(s, a)]\right) \geq \epsilon\right) &=\Pb\left(\sum_{t=1}^T V_{t} \geq \epsilon\right) \nonumber\\ 
&\overset{(a)}{=}\Pb\left(e^{\lambda \sum_{t=1}^T V_{t}} \geq e^{\lambda \epsilon}\right) \nonumber\\
&\overset{(b)}{\leq} e^{-\lambda \epsilon} \E\left[e^{-\lambda \sum_{t=1}^T V_{t}}\right] \nonumber\\
&=  e^{-\lambda \epsilon} \E\left[\E\left[e^{-\lambda \sum_{t=1}^T V_{t}}\big|s_{T-1}\right] \right]\nonumber\\
&\overset{(c)}{=} e^{-\lambda \epsilon} \E\left[\E\left[e^{-\lambda  V_{T}}|S_{T-1}\right]\E\left[e^{-\lambda \sum_{t=1}^{T-1} V_{t}} \big|s_{T-1}\right]  \right]\nonumber\\
&\leq e^{-\lambda \epsilon} \E\left[\exp\left(2\lambda^4\eta^2\right)\E\left[e^{-\lambda \sum_{t=1}^{T-1} V_{t}}\big |s_{T-1}\right]  \right]\nonumber\\
& \overset{}{=} e^{-\lambda \epsilon} e^{2\lambda^{2} \eta^{2}} \mathbb{E}\left[e^{-\lambda \sum_{t=1}^{T-1} V_{t}}\right] \nonumber\\ 
& \vdots \nonumber\\ 
& \overset{(d)}{\leq} e^{-\lambda \epsilon} e^{2\lambda^{2} T \eta^{2}} \nonumber\\
& \overset{(e)}{\leq} \exp\left(-\dfrac{2\epsilon^2}{T\eta^2}\right) \label{eq:vt0}
\end{align}
where $(a)$ follows by introducing $\lambda\in\mathbb{R}$ and exponentiating both sides, $(b)$ follows by Markov's inequality, $(c)$ follows as $V_{t}$ is conditionally independent given $s_{T-1}$, $(d)$ follows by unpacking the term for $T$ times and $(e)$  follows by taking $\lambda= \epsilon / 4T\eta^2$. Hence, it follows that
\begin{align*}
    \Pb\left(\left|\dfrac{1}{T}\sum_{t=1}^{T} R_t(s, a) - \E[R_t(s, a)]\right| \geq \epsilon\right) = \Pb\left(\sum_{t=1}^T \left(R_t(s, a) - \E[R_t(s, a)]\right) \geq T\epsilon\right) \overset{(a)}{\leq} 2\exp\left(-\frac{2\epsilon^2 T}{\eta^2}\right).
\end{align*}
where, $(a)$ follows by \eqref{eq:vt0} by replacing $\epsilon$ with $\epsilon T$, and accounting for deviations in either direction.
\end{proof}

\begin{lemma}\textbf{(Concentration lemma 2)}
\label{lemma:conc2}
Let $\mu^{2}(s, a)=\mathbb{E}\left[R_{t}^{2}(s, a)\right]$. Let $R_t(s,a) \leq 2\eta$ and $R^{2}_t(s,a) \leq 4\eta^2$ for any time $t$ and following \Cref{assm:bounded}. Let $n=KL$ be the total budget of state-action samples. Define the event
\begin{align}
\xi_{\delta}=\left(\bigcap_{s\in\S}\bigcap_{1 \leq a \leq A, T_n(s,a) \geq 1}\left\{\left|\frac{1}{T_n(s,a)}\sum_{t=1}^{T_n(s,a)} R_{t}^{2}(s, a)-\mu^{2}(s, a)\right| \leq (2\eta + 4\eta^2) \sqrt{\frac{\log (SA n(n+1) / \delta)}{2 T_n(s,a)}}\right\}\right) \bigcap \nonumber\\
\left(\bigcap_{s\in\S}\bigcap_{1 \leq a \leq A, T_n(s,a) \geq 1}\left\{\left|\frac{1}{T_n(s,a)}\sum_{t=1}^{T_n(s,a)} R_{t}(s, a)-\mu(s, a)\right| \leq (2\eta + 4\eta^2) \sqrt{\frac{\log (SA n(n+1) / \delta)}{2 T_n(s,a)}}\right\}\right)\label{eq:event-xi-delta}
\end{align}
Then we can show that $\Pb\left(\xi_{\delta}\right) \geq 1- 2\delta$.
\end{lemma}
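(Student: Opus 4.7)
The plan is to reduce the lemma to a uniform Hoeffding-type concentration applied separately to the rewards and their squares, and then combine the two pieces via a union bound over state-action pairs and admissible sample counts.

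First, I will decouple $\xi_\delta$ into its two component events. For each fixed state-action pair $(s,a)$ and each fixed value $m$ of $T_n(s,a)$ with $m \geq 1$, the rewards $R_1(s,a), R_2(s,a), \ldots, R_m(s,a)$ are i.i.d., so I can apply Hoeffding's inequality directly (or equivalently invoke \Cref{lemma:conc1} with its conditioning argument). Using $|R_t(s,a)| \leq 2\eta$ yields a sub-Gaussian tail for the empirical mean of the rewards with range parameter of order $\eta$, while using $R_t^2(s,a) \leq 4\eta^2$ yields a sub-Gaussian tail for the empirical mean of the squared rewards with range parameter of order $\eta^2$. The common coefficient $(2\eta + 4\eta^2)$ in the statement is simply an upper bound on both range parameters, chosen so that a single tolerance $\epsilon$ witnesses both deviation bounds at once.

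Second, I set the tolerance to
\begin{align*}
\epsilon_m = (2\eta + 4\eta^2)\sqrt{\frac{\log(SAn(n+1)/\delta)}{2m}},
\end{align*}
so that, after absorbing $O(1)$ constants in the standard way, each Hoeffding failure probability is at most $\delta/(SAn(n+1))$. A union bound over the $SA$ state-action pairs, the $n$ admissible values of $m \in \{1,\ldots,n\}$, and the two events (one for the empirical mean of $R_t$ and one for the empirical mean of $R_t^2$) then gives
\begin{align*}
\Pb(\xi_\delta^C) \leq 2 \cdot SA \cdot n \cdot \frac{\delta}{SAn(n+1)} = \frac{2\delta}{n+1} \leq 2\delta,
\end{align*}
which is exactly $\Pb(\xi_\delta) \geq 1 - 2\delta$.

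The main subtlety is that $T_n(s,a)$ is itself a data-dependent random count rather than a deterministic sample size, so the deviation bound inside $\xi_\delta$ must hold uniformly over all possible realizations of $T_n(s,a)$. Because the i.i.d.\ reward sequence $\{R_t(s,a)\}_t$ is drawn from the stationary reward distribution independent of the counter, I can treat $m$ as an auxiliary index over $\{1,\ldots,n\}$ and close the argument via the union bound above. I expect this uniformity requirement to be the main obstacle; a tighter peeling argument or a self-normalized martingale inequality could shave the $\log n$ factor, but the straightforward union bound suffices for the stated result.
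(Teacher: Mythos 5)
Your proposal is correct and follows essentially the same route as the paper: a Hoeffding/Chernoff bound applied separately to the rewards and the squared rewards (both bounded by $2\eta + 4\eta^2$), with the randomness of $T_n(s,a)$ handled by a union bound over all admissible sample counts and over state-action pairs, then a final union over the two component events to reach $1-2\delta$. The only cosmetic difference is bookkeeping: the paper unions over pairs $(t, T_n(s,a))$ with $T_n(s,a) \leq t \leq n$ while you union over the single index $m \in \{1,\dots,n\}$, and both accountings close with the chosen tolerance $\epsilon_m$.
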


\begin{proof}
First note that the total budget $n = KL$. Observe that the random variable $R^{k}_{t}(s, a)$ and $R^{(2), k}_{t}(s, a)$  
are conditionally independent given the previous state $S^k_{t-1}$. Also observe that for any $\eta>0$ we have that $R^{k}_{t}(s, a), R^{(2),k}_{t}(s, a) \leq 2\eta + 4\eta^2$, where $R^{(2),k}_{t}(s, a) = (R^k_t(s,a))^2$. 
Hence we can show that 
\begin{align*}
    \Pb&\left(\bigcap_{s\in\S}\bigcap_{1 \leq a \leq A, T_n(s,a) \geq 1}\left\{\left|\frac{1}{T_n(s,a)}\sum_{t=1}^{T_n(s,a)} R_{ t}^{2}(s, a)-\mu^{2}(s, a)\right| \geq (2\eta + 4\eta^2) \sqrt{\frac{\log (SA n(n+1) / \delta)}{2 T_n(s,a)}}\right\}\right)\\
    &\leq \Pb\left(\bigcup_{s\in\S}\bigcup_{1 \leq a \leq A, T_n(s,a) \geq 1}\left\{\left|\frac{1}{T_n(s,a)}\sum_{t=1}^{T_n(s,a)} R_{ t}^{2}(s, a)-\mu^{2}(s, a)\right| \geq (2\eta + 4\eta^2) \sqrt{\frac{\log (SA n(n+1) / \delta)}{2 T_n(s,a)}}\right\}\right)\\
    &\overset{(a)}{\leq} \sum_{s=1}^S\sum_{a=1}^A\sum_{t=1}^n\sum_{T_n(s,a)=1}^t2\exp\left(-\dfrac{2T_n}{4(\eta^2 + \eta)^2 }\cdot  \frac{4(\eta^2 + \eta)^2\log (SA n(n+1) / \delta)}{2 T_n(s,a)}\right) = \delta.
\end{align*}
where, $(a)$ follows from \Cref{lemma:conc}. 
Note that in $(a)$ we have to take a double union bound summing up over all possible pulls $T_n$ from $1$ to $n$ as $T_n$ is a random variable. Similarly we can show that
\begin{align*}
    \Pb&\left(\bigcap_{s\in\S}\bigcap_{1 \leq a \leq A, T_n(s,a) \geq 1}\left\{\left|\frac{1}{T_n(s,a)}\sum_{t=1}^{T_n(s,a)} R_{t}(s, a)-\mu(s, a)\right| \geq (2\eta + 4\eta^2) \sqrt{\frac{\log (SA n(n+1) / \delta)}{2 T_n}}\right\}\right)\\
    &\overset{(a)}{\leq} \sum_{s=1}^S\sum_{a=1}^A\sum_{t=1}^n\sum_{T_n(s,a)=1}^{t}2\exp\left(-\dfrac{2 T_n}{4(\eta^2 + \eta)^2 }\cdot  \frac{4(\eta^2 + \eta)^2\log (SA n(n+1) / \delta)}{2 T_n(s,a)}\right) = \delta.
\end{align*}
where, $(a)$ follows from \Cref{lemma:conc}. 
Hence, combining the two events above we have the following bound 
$$
\Pb\left(\xi_{\delta}\right) \geq 1- 2\delta.
$$
\end{proof}

\begin{corollary}
\label{corollary:conc}
Under the event $\xi_\delta$ in \eqref{eq:event-xi-delta} we have for any state-action pair in an episode $k$ the following relation with  probability greater than $1-\delta$
\begin{align*}
    |\wsigma^{k}_t(s,a) - \sigma(s,a)|\leq (2\eta + 4\eta^2) \sqrt{\frac{\log (SA n(n+1) / \delta)}{2 T^K_L(s,a)}}.
\end{align*}
where, $T^K_L(s,a)$ is the total number of samples of the state-action pair $(s,a)$ till episode $k$.
\end{corollary}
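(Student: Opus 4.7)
The plan is to derive the corollary from the moment concentrations guaranteed by the event $\xi_\delta$ in \Cref{lemma:conc2}. First, I would write the sample standard deviation in its usual moment form, namely $\widehat{\sigma}^{k}_t(s,a)^2 = \widehat{\mu}^{(2),k}_t(s,a) - (\widehat{\mu}^{k}_t(s,a))^2$, where $\widehat{\mu}^{(2),k}_t$ is the empirical second moment, and similarly $\sigma^2(s,a) = \mu^{(2)}(s,a) - \mu(s,a)^2$. Subtracting and using the factorization $a^2 - b^2 = (a-b)(a+b)$ gives the identity
\[
\widehat{\sigma}^2 - \sigma^2 = \bigl(\widehat{\mu}^{(2)} - \mu^{(2)}\bigr) - \bigl(\widehat{\mu} - \mu\bigr)\bigl(\widehat{\mu} + \mu\bigr).
\]
On $\xi_\delta$, both factors on the right are controlled by $\varepsilon := (2\eta + 4\eta^2)\sqrt{\log(SAn(n+1)/\delta)/(2 T^K_L(s,a))}$, and \Cref{assm:bounded} gives $|\widehat{\mu} + \mu| \leq 2\eta$, so by the triangle inequality $|\widehat{\sigma}^2 - \sigma^2| \leq (1 + 2\eta)\,\varepsilon$.

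Next I would pass from a bound on the squared standard deviation to a bound on the standard deviation itself. The elementary inequality $(\widehat{\sigma} - \sigma)^2 \leq |\widehat{\sigma}^2 - \sigma^2|$, valid for non-negative $\widehat{\sigma},\sigma$ because $(\widehat{\sigma} - \sigma)(\widehat{\sigma} + \sigma) = \widehat{\sigma}^2 - \sigma^2$, yields $|\widehat{\sigma} - \sigma| \leq \sqrt{|\widehat{\sigma}^2 - \sigma^2|}$. Combining with the preceding display and absorbing the $(1 + 2\eta)$ factor into the leading constant furnishes a deviation bound in the claimed form. Since the event $\xi_\delta$ was already established on all $(s,a)$ pairs with probability at least $1 - 2\delta$ in \Cref{lemma:conc2}, no additional union bound is needed here beyond rescaling $\delta$ to match the stated probability.

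The main obstacle is quantitative rather than conceptual: the naive square-root step above only produces a rate of order $T^K_L(s,a)^{-1/4}$, whereas the corollary advertises $T^K_L(s,a)^{-1/2}$ with prefactor $(2\eta + 4\eta^2)$. To recover the stated rate one must either use $|\widehat{\sigma} - \sigma| = |\widehat{\sigma}^2 - \sigma^2|/(\widehat{\sigma} + \sigma)$ (paying an inverse standard-deviation factor that can be absorbed for non-degenerate rewards) or apply an empirical Bernstein bound, in the spirit of \citet{maurer2009empirical}, directly to the bounded random variables $R^2_t(s,a) \leq 4\eta^2$. The empirical-Bernstein route preserves both the correct $T^K_L(s,a)^{-1/2}$ scaling and the $(2\eta + 4\eta^2)$ prefactor inherited from the boundedness constants in \Cref{lemma:conc2}, so I would complete the argument through that route and conclude that the claimed inequality holds on $\xi_\delta$ with probability at least $1 - \delta$ after adjusting the confidence level by a constant factor.
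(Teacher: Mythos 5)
Your proposal is correct in substance and is actually more careful than the paper's own argument. The paper's proof of this corollary is a single sentence: it observes that $\xi_\delta$ controls the empirical first and second moments of the rewards and then claims the standard-deviation bound follows by "directly" applying \Cref{lemma:conc2}. It never performs the conversion from moment concentration to standard-deviation concentration that you work through explicitly. Your decomposition $\widehat{\sigma}^2 - \sigma^2 = (\widehat{\mu}^{(2)} - \mu^{(2)}) - (\widehat{\mu} - \mu)(\widehat{\mu} + \mu)$ is the right way to make the "direct" step precise, and the obstacle you flag is genuine: the elementary bound $|\widehat{\sigma} - \sigma| \leq \sqrt{|\widehat{\sigma}^2 - \sigma^2|}$ degrades the rate to $T^{-1/4}$, so either a lower bound on $\widehat{\sigma} + \sigma$ or an empirical-Bernstein bound applied directly to the standard deviation (as in \citet{maurer2009empirical}, and as used by \citet{carpentier2011finite} in the bandit version of this statement) is needed to recover the advertised $T^{-1/2}$ scaling. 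The paper's one-line proof silently skips exactly this step, so what you have identified is less a gap in your own argument than a gap in the paper's; completing the proof via the empirical-Bernstein route, as you propose, is the standard and correct fix. Two minor bookkeeping points you and the paper both gloss over: $\xi_\delta$ is established with probability $1-2\delta$ while the corollary claims $1-\delta$ (a rescaling of $\delta$, as you note), and the statement mixes "episode $k$" with the count $T^K_L(s,a)$, which is a notational inconsistency inherited from the paper.
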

\begin{proof}
Observe that the event $\xi_\delta$ bounds the sum of rewards $R^k_t(s,a)$ and squared rewards $R^{k,(2)}_t(s,a)$ for any $T^K_L(s,a) \geq 1$. Hence we can directly apply the \Cref{lemma:conc2} to get the bound.
\end{proof}

\begin{lemma}\textbf{(Bound samples in level $2$)}
\label{lemma:bound-samples-level2}
Suppose that, at an episode $k$, the action $p$ in state $s^2_i$ in a $2$-depth $\T$ is under-pulled relative to its optimal proportion. Then we can lower bound the actual samples $T^K_L(s^2_i, p)$ with respect to the optimal samples $T^{*,K}_L(s^2_i, p)$ with probability $1-\delta$ as follows
\begin{align*}
    T^K_L(s^2_i, p) \geq T^{*,K}_L(s^2_i, p)-4 c b^*(p|s^2_i) \frac{\sqrt{\log (H / \delta)}}{B(s^2_i) b^{*,\nicefrac{3}{2}}_{\min}( s^2_i)} \sqrt{T^K_L(s^2_i)}-4 A b^*(p|s^2_i),
\end{align*}
where $B(s^2_i)$ is defined in \eqref{eq:B-def}, $c=(\eta + \eta^2)/\sqrt{2}$, and $H=SA n(n+1)$.
\end{lemma}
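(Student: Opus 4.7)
My plan is a classical bandit-style argmax argument, conditioned on the good concentration event $\xi_{\delta}$ of \Cref{lemma:conc2}. Because $s^2_i$ lies in the last layer of the $2$-depth tree, the \revpar\ rule at $s^2_i$ collapses to the bandit form $\wb^k_t(a|s^2_i)\propto \pi(a|s^2_i)\,\usigma^k_t(s^2_i,a)$, normalized by $\sum_{a'}\pi(a'|s^2_i)\,\usigma^k_t(s^2_i,a')$. A pigeonhole step supplies the other ingredient: since $p$ is under-pulled and $\sum_a b^*(a|s^2_i)=1=\sum_a T^K_L(s^2_i,a)/T^K_L(s^2_i)$, there must exist some action $q^{\star}\neq p$ that is over-pulled, i.e.\ $T^K_L(s^2_i,q^{\star})\geq T^{*,K}_L(s^2_i,q^{\star})=b^{*}(q^{\star}|s^2_i)\,T^K_L(s^2_i)$. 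I take $\tau$ to be the last time during the $K$ episodes at which $q^{\star}$ is selected in $s^2_i$, so $T^{\tau-1}(s^2_i,q^{\star})=T^K_L(s^2_i,q^{\star})-1$ and $T^K_L(s^2_i,p)\geq T^{\tau-1}(s^2_i,p)$.

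\medskip
\noindent Next I will apply the greedy rule \eqref{eq:tree-sampling-rule} at step $\tau$: since $q^{\star}$ was preferred to $p$, $\wb^{\tau-1}(q^{\star}|s^2_i)/T^{\tau-1}(s^2_i,q^{\star})\geq \wb^{\tau-1}(p|s^2_i)/T^{\tau-1}(s^2_i,p)$. Rearranging and substituting the bandit form of $\wb^{\tau-1}$ yields
\[
T^K_L(s^2_i,p) \;\geq\; \bigl(T^K_L(s^2_i,q^{\star})-1\bigr)\,\frac{\pi(p|s^2_i)\,\usigma^{\tau-1}(s^2_i,p)}{\pi(q^{\star}|s^2_i)\,\usigma^{\tau-1}(s^2_i,q^{\star})}.
\]
On $\xi_{\delta}$, \Cref{corollary:conc} gives the two-sided bound $|\usigma^{\tau-1}(s^2_i,a)-\sigma(s^2_i,a)|\leq (2\eta+4\eta^2)\sqrt{\log(H/\delta)/(2T^{\tau-1}(s^2_i,a))}$ for $a\in\{p,q^{\star}\}$, which I use to lower bound the numerator and upper bound the denominator. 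A first-order expansion $1/(\sigma(q^{\star})+\Delta)=1/\sigma(q^{\star})-\Delta/[\sigma(q^{\star})(\sigma(q^{\star})+\Delta)]$ then separates a leading term from the concentration correction.

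\medskip
\noindent Using the identity $\pi(p|s^2_i)\sigma(s^2_i,p)/[\pi(q^{\star}|s^2_i)\sigma(s^2_i,q^{\star})]=b^{*}(p|s^2_i)/b^{*}(q^{\star}|s^2_i)$, the leading part of the right-hand side is $(T^K_L(s^2_i,q^{\star})-1)\,b^{*}(p|s^2_i)/b^{*}(q^{\star}|s^2_i)$, which combined with $T^K_L(s^2_i,q^{\star})\geq b^{*}(q^{\star}|s^2_i)\,T^K_L(s^2_i)$ recovers the dominant term $b^{*}(p|s^2_i)\,T^K_L(s^2_i)=T^{*,K}_L(s^2_i,p)$ minus a $b^{*}(p|s^2_i)/b^{*}_{\min}(s^2_i)$ offset coming from the $-1$. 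For the concentration term, the factor $T^K_L(s^2_i,q^{\star})-1$ pairs with the $1/\sqrt{T^{\tau-1}(s^2_i,q^{\star})}$ inside $\Delta$ to give $\sqrt{T^K_L(s^2_i,q^{\star})-1}\leq \sqrt{T^K_L(s^2_i)}$; rewriting $\pi(p)\sigma(p)/[\pi(q^{\star})\sigma^2(q^{\star})]$ in terms of $b^{*}$ and $B(s^2_i)$, and then applying $b^{*}(q^{\star}|s^2_i)\geq b^{*}_{\min}(s^2_i)$ both in the $1/\sigma(q^{\star})$ factor of the Taylor expansion and in the lower bound $T^{\tau-1}(s^2_i,q^{\star})\geq b^{*}_{\min}(s^2_i)T^K_L(s^2_i)-1$, produces the claimed $b^{*}(p|s^2_i)/[B(s^2_i)\,b^{*,3/2}_{\min}(s^2_i)]$ scaling with constant $4c$. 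Collecting all boundary offsets with a worst-case union over $q^{\star}\in\{1,\dots,A\}$ folds them into the additive $-4A\,b^{*}(p|s^2_i)$ slack.

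\medskip
\noindent \textbf{Main obstacle.} The delicate step is controlling the ratio $\usigma^{\tau-1}(p)/\usigma^{\tau-1}(q^{\star})$ without sacrificing the leading coefficient $b^{*}(p|s^2_i)\,T^K_L(s^2_i)$: both numerator and denominator are noisy and the concentration errors enter nonlinearly, so I have to push every perturbation into the denominator via the first-order expansion and verify that the neglected higher-order $\Delta^2/\sigma^3$ term does not spoil the $\sqrt{T^K_L(s^2_i)}$ scaling. A secondary nuisance is ensuring $T^{\tau-1}(s^2_i,q^{\star})$ is large enough for the variance concentration to be informative; this is supplied by the hypothesis $n\geq 4SA$, which guarantees a baseline visitation of every $(s,a)$ pair, with residual small-sample effects absorbed into the $4A\,b^{*}(p|s^2_i)$ term.
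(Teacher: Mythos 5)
Your proposal follows essentially the same route as the paper's proof: condition on $\xi_\delta$, use pigeonhole to find an over-pulled action, compare the \rev indices of the under-pulled and over-pulled actions at the last time the over-pulled action is selected, and extract the leading term $T^{*,K}_L(s^2_i,p)$ plus concentration and initialization corrections. The only difference is bookkeeping — you invert the index inequality early and Taylor-expand the noisy denominator, whereas the paper keeps the inequality in the form $\sqrt{\pi^2(p)\sigma^2(p)}/T^K_L(s^2_i,p)\leq(\cdot)$ and inverts once at the end via $1/(1+x)\geq 1-x$ — and this does not change the substance of the argument.
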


\begin{proof}
\textbf{Step 1 (Properties of the algorithm):} 
Let us first define the confidence interval term for $(s,a)$ at time $t$ as
\begin{align}
    U^k_t(s,a) = 2c\sqrt{\dfrac{\log(H/\delta)}{T^k_t(s^2_i,a)}}\label{eq:ucb-var}
\end{align}
where, $c = (\eta+\eta^2)/\sqrt{2}$, and $H=SA n(n+1)$. Also note that on $\xi_\delta$ using \Cref{corollary:conc} we have
\begin{align}
    \wsigma^{k}_{t}(s^2_i,a) \overset{(a)}{\leq} \sigma(s^2_i,a) + U^k_t(s,a) \implies \wsigma^{(2), k}_{t}(s^2_i,a) &\leq \sigma^2(s^2_i,a) + 2\sigma(s^2_i,a)U^k_t(s,a) + U^{(2), k}_t(s,a)\nonumber\\
    & = \sigma^2(s^2_i,a) + 4\sigma c\sqrt{\dfrac{\log(H/\delta)}{T^k_t(s^2_i,a)}} + 4c^2\dfrac{\log(H/\delta)}{T^k_t(s^2_i,a)}\nonumber\\
    & \overset{(b)}{\leq} \sigma^2(s^2_i,a) + 4dc^2\sqrt{\dfrac{\log(H/\delta)}{T^k_t(s^2_i,a)}} \label{eq:relation}
\end{align}
where, $(a)$ follows from \Cref{corollary:conc}, and $(b)$ follows for some constant $d>0$ and noting that $\sqrt{\dfrac{\log(H/\delta)}{T^k_t(s^2_i,a)}} > \dfrac{\log(H/\delta)}{T^k_t(s^2_i,a)}$ and $c^2 > c$.
Let $a$ be an arbitrary action in state $s^2_i$. Recall the definition of the upper bound used in \rev when $t>2 SA$:
\begin{align*}
\bU^k_{t+1}(a|s^2_i) &= \frac{\wb^k_t(a|s^2_i)}{T^k_{t}(s^2_i, a)} = \frac{\sqrt{\pi^2(a|s^2_i)\usigma^{(2),k}_t(s^2_i,a)}}{T^k_{t}(s^2_i, a)} = \frac{\sqrt{\pi^2(a|s^2_i)\left(\wsigma^{(2),k}_{t}(s^2_i,a) + 4 dc^2 \sqrt{\frac{\log (H / \delta)}{T^k_{t}(s^2_i, a)}}\right)}}{T^k_{t}(s^2_i, a)}
\end{align*}
Under the good event $\xi_{\delta}$ using \Cref{corollary:conc}, we obtain the following upper and lower bounds for $\bU^k_{t+1}(a|s^2_i)$:
\begin{align}
\frac{\sqrt{\pi^2(a|s^2_i)\sigma^{2}(s^2_i,a) }}{T^k_{t}(s^2_i, a)}\overset{(a)}{\leq} \bU^k_{t+1}(a|s^2_i) \overset{(b)}{\leq} \frac{\sqrt{\pi^2(a|s^2_i)\left(\sigma^{2}(s^2_i,a) + 8 dc^2 \sqrt{\frac{\log (H / \delta)}{T^k_{t}(s^2_i, a)}}\right)}}{T^k_{t}(s^2_i, a)}
\label{eq:step1-good-event}
\end{align}
where, $(a)$ follows as $\sigma^2(s^2_i,a)\leq \wsigma^{(2),k}_t(s^2_i,a) + 4dc^2\sqrt{\log(H/\delta)/T^t_k(s^2_i,a)}$ and $(b)$ follows as $\wsigma^{(2),k}_t(s^2_i,a) + 4dc^2\sqrt{\log(H/\delta)/T^t_k(s^2_i,a)} \leq \wsigma^{(2),k}_t(s^2_i,a) + 8dc^2\sqrt{\log(H/\delta)/T^t_k(s^2_i,a)}$. 
Let \rev chooses to pull action $m$ at $t+1 > 2SA$ in $s^L_i$ for the last time. Then we have that for any action $p\neq m$ the following:
\begin{align*}
    \bU^k_{t+1}(p|s^2_i) \leq \bU^k_{t+1}(m|s^2_i).
\end{align*}
Recall that $T^k_{t}(s^2_i,m)$ is the last time the action $m$ is sampled. Hence, $T^k_{t}(s^2_i,m) = T^K_{L}(s^2_i,m) - 1$ because we are sampling action $m$ again in time $t+1$.
Note that $T^K_{L}(s^2_i,m)$ is the total pulls of action $m$ at the end of time $n$. 
It follows from \eqref{eq:step1-good-event} then
\begin{align*}
\bU^k_{t+1}(m|s^2_i) \leq \frac{\sqrt{\pi^2(m|s^2_i)\left(\sigma^{2}_{}(s^2_i,m) + 8d c^2 \sqrt{\frac{\log (H / \delta)}{T^k_{t}(s^2_i, m)}}\right)}}{T^k_{t}(s^2_i, m)} = \frac{\sqrt{\pi^2(m|s^2_i)\left(\sigma^{2}_{}(s^2_i,m) + 8d c^2 \sqrt{\frac{\log (H / \delta)}{T^k_{t}(s^2_i, m)-1}}\right)}}{T^k_{t}(s^2_i, m)-1}.
\end{align*}


Let $p$ be the arm in state $s^2_i$ that is under-pulled. Recall that $T^K_L(s^2_i) = \sum_a T^K_L(s^2_i,a)$. Using the lower bound in \eqref{eq:step1-good-event} and the fact that $T^k_{t}(s^2_i, p) \leq T^K_{L}(s^2_i, p)$, we may lower bound $I^k_{t+1}(p|s^2_i)$ as
\begin{align*}
\bU^k_{t+1}(p|s^2_i) \geq \frac{\sqrt{\pi^2(p|s^2_i)\sigma^2_{}(s^2_i,p)}}{T^k_{t}(s^2_i, p)} \geq \frac{\sqrt{\pi^2(p|s^2_i)\sigma^2_{}(s^2_i,p)}}{T^K_{L}(s^2_i, p)} .
\end{align*}
Combining all of the above we can show
\begin{align}
    \frac{\sqrt{\pi^2(p|s^2_i)\sigma^2_{}(s^2_i,p)}}{T^K_{L}(s^2_i, p)} \leq \frac{\sqrt{\pi^2(m|s^2_i)\left(\sigma^{2}_{}(s^2_i,m) + 8d c^2 \sqrt{\frac{\log (H / \delta)}{T^K_{L}(s^2_i, m)-1}}\right)}}{T^K_{L}(s^2_i, m)-1}. \label{eq:22}
\end{align}
Observe that there is no dependency on $t$, and thus, the probability that \eqref{eq:22} holds for any $p$ and for any $m$ is at least $1-\delta$ (probability of event $\xi_{\delta}$).

\textbf{Step 2 (Lower bound on $T^K_{L}(s^2_i,p)$):} If an action $p$ is under-pulled compared to its optimal allocation without taking into account the initialization phase,i.e., $T^K_{L}(s^2_i,p)-2 < b(p|s^2_i)(T_n(s^2_i)-2A)$, then from the constraint $\sum_{a}\left(T^K_{L}(s^2_i,a) -2\right)=T^K_L(s^2_i)-2 A$ and the definition of the optimal allocation, we deduce that there exists at least another action $m$ that is over-pulled compared to its optimal allocation without taking into account the initialization phase, i.e., $T^k_{ n}(s^2_i,m)-2>b(m|s^2_i)(T^K_L(s^2_i)-2SA)$.
\begin{align}
\frac{\sqrt{\pi^2(p|s^2_i) \sigma^2(s^2_i, p)}}{T^K_L(s^2_i, p)} &\leq \frac{\sqrt{\pi^2(m|s^2_i)\left(\sigma^2_{}(s^2_i,m) + 8d c^2 \sqrt{\frac{\log (H / \delta)}{T^K_{L}(s^2_i, m)-1}}\right)}}{T^K_{L}(s^2_i, m)-1} 
\overset{(a)}{\leq} \frac{\sqrt{\pi^2(m|s^2_i)\left(\sigma^2_{}(s^2_i,m) + 8 dc^2 \sqrt{\frac{\log (H / \delta)}{T^K_{L}(s^2_i, m)-2}}\right)}}{T^K_{L}(s^2_i, m)-1}\nonumber\\
&\overset{(b)}{\leq} \frac{\sqrt{\pi^2(m|s^2_i)\sigma^2_{}(s^2_i,m)} + 4d\pi(m|s^2_i) c \sqrt{\frac{\log (H / \delta)}{T^K_{L}(s^2_i, m)-2}}}{T^{*,K}_L(s^2_i,m)}\nonumber\\
&\overset{(c)}{\leq} \frac{\sqrt{\pi^2(m|s^2_i)\sigma^2(s^2_i,m)} + \left(4 dc \sqrt{\frac{\log (H / \delta)}{b^*(m|s^2_i)(T^K_L(s^2_i) - 2SA) + 1}}\right)}{T^{*,K}_L(s^2_i,m)}\nonumber\\
&\overset{(d)}{\leq} \frac{B(s^2_i)}{T_L^K(s^2_i)}+ 4 dc \frac{\sqrt{\log (H / \delta)}}{T^{(\nicefrac{3}{2}),K}_L(s^2_i) b^*(m|s^2_i)^{\nicefrac{3}{2}}}+\frac{4 A B(s^2_i)}{T_L^{(2),K}(s^2_i)}\nonumber\\
&\overset{(e)}{\leq} \frac{B(s^2_i)}{T_L^K(s^2_i)}+ 4 dc \frac{\sqrt{\log (H / \delta)}}{T_L^{(\nicefrac{3}{2}),K}(s^2_i) b^{*,\nicefrac{3}{2}}_{\min}(s^2_i)}+\frac{4 A B(s^2_i)}{T^{(2),K}_L(s^2_i)} . \label{eq:upper-s2i}
\end{align}
where, $(a)$ follows as $T^K_L(s^2_i,m) -2 \leq T^K_L(s^2_i,m) -1$, $(b)$ follows as $T^{*,(k)}_n(s^2_i,m) \geq T^K_L(s^2_i,m)-1$ as action $m$ is over-pulled and $\sqrt{a+b} \leq \sqrt{a} + \sqrt{b}$ for $a,b>0$, $(c)$ follows as $T^K_L(s^2_i)=\sum_a T^K_L(s^2_i, a)$ and $T^k_{ n}(s^2_i,m)-2>b^*(m|s^2_i)(T^K_L(s^2_i)-2SA)$, $(d)$ follows by setting the optimal samples $T^{*,K}_L(s^2_i,m) = \frac{\sqrt{\pi^2(m|s^2_i)\sigma^2(s^2_i,m)}}{B(s^2_i)}T^K_L(s^2_i)$, and $(e)$ follows as $b^*(m|s^2_i) \geq b_{\min}(s^2_i)$. 
%
By rearranging \eqref{eq:upper-s2i} , we obtain the lower bound on $T^K_L(s^2_i, p)$ :
\begin{align*}
T^K_L(s^2_i, p) &\geq \frac{\sqrt{\pi^2(p|s^2_i) \sigma^2(s^2_i, p)}}{\frac{B(s^2_i)}{T_L^K(s^2_i)}+4d c \frac{\sqrt{\log (H / \delta)}}{T_L^{(\nicefrac{3}{2}),K}(s^2_i) b^{*,\nicefrac{3}{2}}_{\min}(s^2_i)}+\frac{4 A B(s^2_i)}{T^{(2),K}_L(s^2_i)}} = \frac{\sqrt{\pi^2(p|s^2_i) \sigma^2(s^2_i, p)}}{\frac{B(s^2_i)}{T_L^K(s^2_i)}}\left[\dfrac{1}{1+4d c \frac{\sqrt{\log (H / \delta)}}{B(s^2_i)T_L^{(\nicefrac{1}{2}),K}(s^2_i) b^{*,\nicefrac{3}{2}}_{\min}(s^2_i)}+\frac{4 A }{T^{k}_n(s^2_i)}}\right]\\
&\overset{(a)}{\geq} \frac{\sqrt{\pi^2(p|s^2_i) \sigma^2(s^2_i, p)}}{\frac{B(s^2_i)}{T_L^K(s^2_i)}}\left[1 - 4d c \frac{\sqrt{\log (H / \delta)}}{B(s^2_i)T_L^{(\nicefrac{1}{2}),K}(s^2_i) b^{*,\nicefrac{3}{2}}_{\min}(s^2_i)} - \frac{4 A }{T^{k}_n(s^2_i)}\right]\\
&\geq T^{*,K}_L(s^2_i, p)-4d c b^*(p|s^2_i) \frac{\sqrt{\log (H / \delta)}}{B(s^2_i) b^{*,\nicefrac{3}{2}}_{\min}( s^2_i)} \sqrt{T^K_L(s^2_i)}-4 A b^*(p|s^2_i),
\end{align*}
where in $(a)$ we use $1 /(1+x) \geq 1-x$ (for $x>-1$ ).
\end{proof}

\begin{lemma}\textbf{(Bound samples in level 1)}
\label{lemma:bound-samples-level1} Suppose that, at an episode $k$, the action $p$ in state $s^1_1$ in a $2$-depth $\T$ is under-pulled relative to its optimal proportion. Then we can lower bound the actual samples $T^K_L(s^1_1, p)$ with respect to the optimal samples $T^{*,K}_L(s^1_1, p)$ with probability $1-\delta$ as follows
\begin{align*}
    T^K_L(s^1_1, p) &\geq T^{*,K}_L(s^1_1, p)-4 c b^*(p|s^1_1) \frac{\sqrt{\log (H / \delta)}}{B(s^1_1) b^{*,\nicefrac{3}{2}}_{\min}(s^1_1)} \sqrt{T^K_L(s^1_1)}-4 A b^*(p|s^1_1)\\
&- \gamma\pi(m|s^1_1)\dfrac{T^K_L(s^1_1)}{B^2(s^1_1)}\sum_{s^2_j} P(s^2_j|s^1_1,m)\frac{B(s^{2}_j)}{b^*(m|s^2_j)}\sum_{a'}\left[T^{*,K}_L(s^2_j, a') + 4 c b^*(a'|s^2_j) \frac{\sqrt{\log (H / \delta)}}{ b^{*,\nicefrac{3}{2}}_{\min}(s^2_j)} \sqrt{T^K_L(s^1_1)} + 4 A b(a'|s^2_j)\right]
\end{align*}
where $B(s^2_i)$ is defined in \eqref{eq:B-def}, $c=(\eta + \eta^2)/\sqrt{2}$, and $H=SA n(n+1)$.
\end{lemma}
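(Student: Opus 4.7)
The plan is to mirror the approach of \Cref{lemma:bound-samples-level2} but now track the extra layer of uncertainty introduced by the fact that the \rev proportion at the root $s^1_1$ depends on the leaf-level variance estimates through $\wB^{(2),k}_t(s^2_j)$. First I would write the per-action index used by \rev at the root:
\[
\bU^k_{t+1}(a|s^1_1) = \frac{\wb^k_t(a|s^1_1)}{T^k_t(s^1_1,a)} = \frac{\sqrt{\pi^2(a|s^1_1)\bigl[\,\usigma^{(2),k}_t(s^1_1,a) + \gamma^2 \sum_{s^2_j} P(s^2_j|s^1_1,a)\,\wB^{(2),k}_t(s^2_j)\bigr]}}{T^k_t(s^1_1,a)},
\]
where $P$ is known in the regret analysis. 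Under $\xi_\delta$, \Cref{corollary:conc} gives the two-sided control $\sigma^2(s^1_1,a)\le \usigma^{(2),k}_t(s^1_1,a)\le \sigma^2(s^1_1,a)+8dc^2\sqrt{\log(H/\delta)/T^k_t(s^1_1,a)}$, and analogously $B^2(s^2_j)\le \wB^{(2),k}_t(s^2_j)\le B^2(s^2_j)+\Delta_t(s^2_j)$ where $\Delta_t(s^2_j)$ is an explicit correction obtained by expanding $\wB^{(2),k}_t = (\sum_{a'}\sqrt{\pi^2(a'|s^2_j)\usigma^{(2),k}_t(s^2_j,a')})^2$ and applying $\sqrt{x+y}\le \sqrt{x}+\sqrt{y}$ termwise.

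Next I would use the fact that \rev's selection rule chooses an action $m$ at time $t+1>2SA$ for the last time at $s^1_1$, hence $\bU^k_{t+1}(p|s^1_1)\le \bU^k_{t+1}(m|s^1_1)$ for any under-pulled action $p$. Substituting the lower bound on the left (with the true $\sigma^2$ and $B^2$) and the upper bound on the right, and using $T^k_t(s^1_1,p)\le T^K_L(s^1_1,p)$ together with $T^k_t(s^1_1,m)=T^K_L(s^1_1,m)-1$, yields an inequality
\[
\frac{\sqrt{\pi^2(p|s^1_1)[\sigma^2(s^1_1,p)+\gamma^2\sum_{s^2_j} P(s^2_j|s^1_1,p) B^2(s^2_j)]}}{T^K_L(s^1_1,p)}\le \frac{\sqrt{\pi^2(m|s^1_1)[\sigma^2(s^1_1,m)+\gamma^2\sum_{s^2_j} P(s^2_j|s^1_1,m) B^2(s^2_j)] + E_m}}{T^K_L(s^1_1,m)-1},
\]
where $E_m$ collects the confidence correction coming from both $\usigma^{(2),k}_t(s^1_1,m)$ and the $\wB^{(2),k}_t(s^2_j)$ terms. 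Splitting the square root, invoking the over-pulling property $T^K_L(s^1_1,m)-2> b^*(m|s^1_1)(T^K_L(s^1_1)-2SA)$, and identifying $T^{*,K}_L(s^1_1,p)=\pi(p|s^1_1)\sqrt{\sigma^2(s^1_1,p)+\gamma^2\sum_{s^2_j}P(s^2_j|s^1_1,p)B^2(s^2_j)}\,T^K_L(s^1_1)/B(s^1_1)$, I would then rearrange and apply the Taylor inequality $1/(1+x)\ge 1-x$ (valid for $x>-1$ and justified by $n\ge 4SA$). This produces the level-1 portion of the bound, namely $T^{*,K}_L(s^1_1,p) - 4cb^*(p|s^1_1)\sqrt{\log(H/\delta)}/(B(s^1_1)b^{*,3/2}_{\min}(s^1_1))\sqrt{T^K_L(s^1_1)} - 4Ab^*(p|s^1_1)$, exactly as in \Cref{lemma:bound-samples-level2}.

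The extra, level-2 term in the Lemma statement comes from handling the correction $\gamma^2\sum_{s^2_j}P(s^2_j|s^1_1,m)\Delta_t(s^2_j)$ in $E_m$. Writing $\Delta_t(s^2_j)$ as a sum over $a'$ of terms inversely polynomial in $T^K_L(s^2_j,a')$, I would substitute the lower bound on $T^K_L(s^2_j,a')$ supplied by \Cref{lemma:bound-samples-level2}, invert using $1/(T^{*,K}_L(s^2_j,a')-\text{corr})\le (1+\text{corr}/T^{*,K}_L)/T^{*,K}_L$, and then re-express everything in terms of the budget at the root via $T^K_L(s^2_j)\le T^K_L(s^1_1)$. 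Tracking constants, the normalization by $B(s^1_1)$, and the factor $\pi(m|s^1_1)/b^*(m|s^2_j)$ that arises naturally from converting between $T^{*,K}_L(s^1_1)$ and $T^{*,K}_L(s^2_j)$ gives precisely the additional summand
\[
\gamma\,\pi(m|s^1_1)\,\frac{T^K_L(s^1_1)}{B^2(s^1_1)}\sum_{s^2_j}P(s^2_j|s^1_1,m)\frac{B(s^2_j)}{b^*(m|s^2_j)}\sum_{a'}\Bigl[T^{*,K}_L(s^2_j,a') + 4cb^*(a'|s^2_j)\frac{\sqrt{\log(H/\delta)}}{b^{*,3/2}_{\min}(s^2_j)}\sqrt{T^K_L(s^1_1)} + 4Ab^*(a'|s^2_j)\Bigr]
\]
subtracted from $T^{*,K}_L(s^1_1,p)$, completing the stated lower bound.

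The main obstacle I anticipate is the careful bookkeeping in Step~3: the root UCB couples level-1 and level-2 fluctuations, and one has to choose the right order of operations (linearize the square root at the root first, then invoke \Cref{lemma:bound-samples-level2} to invert the leaf-level sample counts) to end up with a lower bound that is linear in $\sqrt{T^K_L(s^1_1)}$ rather than involving awkward fractional exponents. Ensuring that each Taylor approximation stays valid under the assumption $n\ge 4SA$, and collecting all constants into the single factor $4c$, is the delicate step; everything else is a direct adaptation of the two-sided UCB analysis already carried out for the leaves.
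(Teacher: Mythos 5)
Your proposal follows essentially the same route as the paper's proof: bound the root index $\bU^k_{t+1}(\cdot|s^1_1)$ from both sides under $\xi_\delta$ via \Cref{corollary:conc}, compare the under-pulled action $p$ against the last-pulled (over-pulled) action $m$, invoke \Cref{lemma:bound-samples-level2} to control the level-2 correction terms, and linearize with $1/(1+x)\ge 1-x$ to isolate $T^{*,K}_L(s^1_1,p)$ minus the stated error terms. The only cosmetic difference is that the paper lower-bounds $\bU^k_{t+1}(p|s^1_1)$ by $\sqrt{\pi^2(p|s^1_1)\sigma^2(s^1_1,p)}/T^K_L(s^1_1,p)$ (dropping the nonnegative $\gamma^2 B^2$ contribution) rather than keeping the full expression as you do, which does not change the argument.
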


\begin{proof}
\textbf{Step 1 (Properties of the algorithm):} Again note that on $\xi_\delta$ using \Cref{corollary:conc} we have
\begin{align*}
    \wsigma^{k}_{t}(s^1_1,a) \leq \sigma(s^1_1,a) + U^k_t(s,a) \implies \wsigma^{(2), k}_{t}(s^1_1,a) \leq \sigma^2(s^1_1,a) + U^{(2), k}_t(s,a) \overset{(a)}{=} \sigma^2(s^1_1,a) + 4dc^2\sqrt{\dfrac{\log(H/\delta)}{T^k_t(s^1_1,a)}}
\end{align*}
for any action $a$ in $s^1_1$, where $(a)$ follows by the definition of $U^{(2),k}_t$\eqref{eq:ucb-var}, some constant $d>0$ and the same derivation as in \eqref{eq:relation}. Let $a$ be an arbitrary action in state $s^1_1$. Recall the definition of the upper bound used in \rev when $t>2 SA$:
\begin{align*}
&\bU^k_{t+1}(a|s^1_1) = \frac{\wb^k_t(a|s^1_1)}{T^k_{t}(s^1_1, a)} = \frac{\sqrt{\sum_{s^{2}_j}\pi^2(a|s^1_1)\left[\usigma^{(2),k}_t(s^1_1,a) + \gamma^2 P(s^2_j|s^1_1,a)\wB^{(2),k}_{t}(s^2_j)\right]}}{T^k_{t}(s^1_1, a)}\\
&= \frac{\sqrt{\sum_{s^{2}_j}\pi^2(a|s^1_1)\left[\wsigma^{(2),k}_{t}(s^1_1,a) + 4d c^2 \sqrt{\frac{\log (H / \delta)}{T^k_{t}(s^1_1, a)}} + \gamma^2P(s^2_j|s^1_1,a)\sum_{a'}\sqrt{\pi^2(a'|s^2_j)\left(\wsigma^{(2),k}_t(s^2_j, a') + 4d c^2 \sqrt{\frac{\log (H / \delta)}{T^k_{t}(s^2_j, a')}}\right) }\right]}}{T^k_{t}(s^1_1, a)}
\end{align*}
Under the good event $\xi_{\delta}$ using the \Cref{corollary:conc}, we obtain the following upper and lower bounds for $\bU^k_{t+1}(a|s^1_1)$:
\begin{align}
\bU^k_{t+1}(a|s^1_1) &\leq
\frac{\sqrt{\sum\limits_{s^{2}_j}\pi^2(a|s^1_1)\left[\sigma^{2}(s^1_1,a) + 8d c^2 \sqrt{\frac{\log (H / \delta)}{T^k_{t}(s^1_1, a)}} + \gamma^2P(s^2_j|s^1_1,a)\sum\limits_{a'}\sqrt{\pi^2(a'|s^2_j)\left(\sigma^{2}(s^2_j, a') + 8d c^2 \sqrt{\frac{\log (H / \delta)}{T^k_{t}(s^2_j, a')}}\right) }\right]}}{T^k_{t}(s^1_1, a)}\nonumber\\
\bU^k_{t+1}(a|s^1_1) &\geq \frac{\sqrt{\pi^2(a|s^1_1)\sigma^{2}(s^1_1,a) }}{T^k_{t}(s^1_1, a)}
\label{eq:step1-good-event-level1}
\end{align}
where, $(a)$ follows as $\sigma^2(s^1_1,a)\leq \wsigma^{(2),k}_t(s^1_1,a) + 4dc^2\sqrt{\log(H/\delta)/T^t_k(s^1_1,a)}$ and $(b)$ follows as $\wsigma^{(2),k}_t(s^1_1,a) + 4dc^2\sqrt{\log(H/\delta)/T^t_k(s^1_1,a)} \leq \wsigma^{(2),k}_t(s^1_1,a) + 8dc^2\sqrt{\log(H/\delta)/T^t_k(s^1_1,a)}$. 
Let \rev chooses to take action $m$ at $t+1$ in $s^1_1$ for the last time. 
Then we have that for any action $p\neq m$ the following:
\begin{align*}
    \bU^k_{t+1}(p|s^1_1) \leq \bU^k_{t+1}(m|s^1_1).
\end{align*}
Recall that $T^k_{t}(s^1_1,m)$ is the last time the action $m$ is sampled. Hence, $T^k_{t}(s^1_1,m) = T^K_{L}(s^1_1,m) - 1$ because we are sampling action $m$ again in time $t+1$. Note that $T^K_{L}(s^1_1,m)$ is the total pulls of action $m$ at the end of time $n$. It follows from \eqref{eq:step1-good-event-level1}
\begin{align*}
&\bU^k_{t+1}(m|s^1_1) \leq \frac{\sqrt{\sum\limits_{s^{2}_j}\pi^2(a|s^1_1)\left[\sigma^{2}(s^1_1,a) + 8d c^2 \sqrt{\frac{\log (H / \delta)}{T^k_{t}(s^1_1, a)}} + \gamma^2P(s^2_j|s^1_1,a)\sum\limits_{a'}\sqrt{\pi^2(a'|s^2_j)\left(\sigma^{2}(s^2_j, a') + 8d c^2 \sqrt{\frac{\log (H / \delta)}{T^k_{t}(s^2_j, a')}}\right) }\right]}}{T^k_{t}(s^1_1, a)} \\
&\overset{(a)}{\leq} \frac{\sqrt{\sum\limits_{s^{2}_j}\left(\pi^2(a|s^1_1)\sigma^{2}(s^1_1,a) + 8d c^2 \sqrt{\frac{\log (H / \delta)}{T^k_{t}(s^1_1, a)}}\right)} + \gamma\pi(a|s^1_1)\sum\limits_{s^{2}_j} P(s^2_j|s^1_1,a)\left[\sum\limits_{a'}\sqrt{\pi^2(a'|s^2_j)\left(\sigma^{2}(s^2_j, a') + 8d c^2 \sqrt{\frac{\log (H / \delta)}{T^k_{t}(s^2_j, a')}}\right) }\right]}{T^k_{t}(s^1_1, a)}\\
&\overset{(b)}{\leq}\!\! \frac{\sqrt{\sum_{s^2_j}\pi^2(m|s^1_1)\left(\sigma^{2}_{}(s^1_1,m) + 8d c^2 \sqrt{\frac{\log (H / \delta)}{T^K_{L}(s^1_1, m)-1}}\right)}}{T^K_{L}(s^1_1, m)-1}\\
&\qquad \!+\! \gamma\pi(a|s^1_1)\sum\limits_{s^{2}_j}\!\! P(s^2_j|s^1_1,a)\sum\limits_{a'} \left[\dfrac{\sqrt{\pi^2(a'|s^2_j)\left(\sigma^{2}(s^2_j, a') \!+\! 8d c^2 \sqrt{\frac{\log (H / \delta)}{T^K_{L}(s^2_j, a')}}\right) }}{T^K_L(s^2_j,a')-1}\!\right].
\end{align*}
where, $(a)$ follows as $\sqrt{a+b} \leq \sqrt{a} + \sqrt{b}$ for $a,b >0$ and $(b)$ follows as $T^k_{t}(s^1_1, a)\geq T^k_{t}(s^2_j, a')$ where $s^2_j$ is the next state of $s^1_1$ following action $a$.


Let $p$ be the arm in state $s^1_1$ that is under-pulled. Recall that $T^K_L(s^1_1) = \sum_a T^K_L(s^1_1,a)$. Using the lower bound in \eqref{eq:step1-good-event-level1} and the fact that $T^k_{t}(s^1_1, p) \leq T^K_{L}(s^1_1, p)$, we may lower bound $\bU^k_{t+1}(p|s^1_1)$ as
\begin{align*}
\bU^k_{t+1}(p|s^1_1) \geq \frac{\sqrt{\pi^2(p|s^2_i)\sigma^2_{}(s^2_i,p)}}{T_{t}(s^1_1, p)} \geq \frac{\sqrt{\pi^2(p|s^2_i)\sigma^2_{}(s^2_i,p)}}{T^K_{L}(s^1_1, p)} .
\end{align*}
Combining all of the above we can show
\begin{align}
    \frac{\sqrt{\pi^2(p|s^2_i)\sigma^2_{}(s^2_i,p)}}{T^K_{L}(s^1_1, p)} &\leq \frac{\sqrt{\sum_{s^2_j}\pi^2(m|s^1_1)\left(\sigma^{2}_{}(s^1_1,m) + 8d c^2 \sqrt{\frac{\log (H / \delta)}{T^K_{L}(s^1_1, m)-1}}\right)}}{T^K_{L}(s^1_1, m)-1}\nonumber\\
    &\qquad + \gamma\pi(m|s^1_1)\sum_{s^2_j} P(s^2_j|s^1_1,m)\sum\limits_{a'} \left[\dfrac{\sqrt{\pi^2(a'|s^2_j)\left(\sigma^{2}(s^2_j, a') + 8d c^2 \sqrt{\frac{\log (H / \delta)}{T^K_{L}(s^2_j, a')-1}}\right) }}{T^K_L(s^2_j,a')-1}\right]. \label{eq:22-level1}
\end{align}
Observe that there is no dependency on $t$, and thus, the probability that \eqref{eq:22-level1} holds for any $p$ and for any $m$ is at least $1-\delta$ (probability of event $\xi_{\delta}$).

\textbf{Step 2 (Lower bound on $T^K_{L}(s^1_1,p)$):} If an action $p$ is under-pulled compared to its optimal allocation without taking into account the initialization phase,i.e., $T^K_{L}(s^1_1,p)-2 < b^*(p|s^1_1)(T^K_{L}(s^1_1)-2A)$, then from the constraint $\sum_{a}\left(T^K_{L}(s^1_1,a) -2\right)=T^K_{L}(s^1_1
)-2 A$ and the definition of the optimal allocation, we deduce that there exists at least another action $m$ that is over-pulled compared to its optimal allocation without taking into account the initialization phase, i.e., $T^k_{ n}(s^1_1,m)-2>b^*(m|s^1_1)(T^K_L(s^1_1)-2SA)$.
\begin{align}
&\frac{\pi(p|s^1_1) \sigma(s^1_1, p)}{T^K_L(s^1_1, p)} \leq \frac{\sqrt{\sum_{s^2_j}\pi^2(m|s^1_1)\left(\sigma^{2}_{}(s^1_1,m) + 8d c^2 \sqrt{\frac{\log (H / \delta)}{T^K_{L}(s^1_1, m)-2}}\right)}}{T^K_{L}(s^1_1, m)-1}\nonumber\\
    &\qquad + \gamma\pi(m|s^1_1)\sum_{s^2_j} P(s^2_j|s^1_1,m)\sum\limits_{a'} \left[\dfrac{\sqrt{\pi^2(a'|s^2_j)\left(\sigma^{2}(s^2_j, a') + 8d c^2 \sqrt{\frac{\log (H / \delta)}{T^K_{L}(s^2_j, a')-2}}\right) }}{T^K_L(s^2_j,a')-1}\right]\nonumber\\
&\overset{(a)}{\leq} \sum_{s^2_j}\frac{\sqrt{\pi^2(m|s^1_1)\sigma^{2}_{}(s^1_1,m)} + 4d c \sqrt{\frac{\log (H / \delta)}{T^K_{L}(s^1_1, m)-2}}}{T^{*,K}_{L}(s^1_1, m)}\nonumber\\
    &\qquad + \gamma\pi(m|s^1_1)\sum_{s^2_j} P(s^2_j|s^1_1,m)\sum\limits_{a'} \left[\dfrac{\sqrt{\pi^2(a'|s^2_j)\sigma^{2}(s^2_j, a')} + 4d c \sqrt{\frac{\log (H / \delta)}{T^K_{L}(s^2_j, a')-2} }}{T^{*,K}_L(s^2_j,a')}\right]\nonumber\\
&\overset{(b)}{\leq} \sum_{s^2_j}\frac{\sqrt{\pi^2(m|s^1_1)\sigma^2(s^1_1,m)} + \left(4d c \sqrt{\frac{\log (H / \delta)}{b^*(m|s^1_1)(T^K_L(s^1_1) - 2SA) + 1}}\right)}{T^{*,K}_L(s^1_1,m)}\nonumber\\
&\qquad + \gamma\pi(m|s^1_1)\sum_{s^2_j} P(s^2_j|s^1_1,m)\sum\limits_{a'} \left[\dfrac{\sqrt{\pi^2(a'|s^2_j)\sigma^{2}(s^2_j, a')} + 4d c \sqrt{\frac{\log (H / \delta)}{b^*(a'|s^2_j)(T^K_L(s^2_j) - 2SA) + 1} }}{T^{*,K}_L(s^2_j,a')}\right]\nonumber
\end{align}
\begin{align}
&\overset{(c)}{\leq} \sum_{s^2_j}\left[\frac{B(s^1_1)}{T_L^K(s^1_1)}+ 4d c \frac{\sqrt{\log (H / \delta)}}{T^{(\nicefrac{3}{2}),K}_L(s^1_1) b^*(m|s^1_1)^{\nicefrac{3}{2}}}+\frac{4 A B(s^1_1)}{T_L^{(2),K}(s^1_1)}\right]\nonumber\\
&\qquad + \gamma\pi(m|s^1_1)\sum_{s^2_j} P(s^2_j|s^1_1,m)\underbrace{\sum\limits_{a'}\left[\frac{B(s^{2}_j)}{T_L^K(s^2_j)}+ 4d c \frac{\sqrt{\log (H / \delta)}}{T_L^{(\nicefrac{3}{2}),K}(s^2_j) b^{*,\nicefrac{3}{2}}_{\min}(s^2_j)}+\frac{4 A B(s^{2}_j)}{T^{(2),K}_L(s^2_j)-1}\right]}_{\V(s^2_j)} \nonumber\\
&\overset{(d)}{\leq} \sum_{s^2_j}\left[\frac{B(s^1_1)}{T_L^K(s^1_1)}+ 4d c \frac{\sqrt{\log (H / \delta)}}{T_L^{(\nicefrac{3}{2}),K}(s^1_1) b^{*,\nicefrac{3}{2}}_{\min}(s^1_1)}+\frac{4 A B(s^1_1)}{T^{(2),K}_L(s^1_1)-1}\right]
 + \gamma\pi(a|s^1_1)\sum_{s^2_j} P(s^2_j|s^1_1,a)\V(s^2_j) \label{eq:upper-s2i-level1}
\end{align}
where, $(a)$ follows as $T^{*,(k)}_n(s^1_1,m) \geq T^K_L(s^1_1,m)-1$ as action $m$ is over-pulled, $(b)$ follows as $T^K_L(s^1_1)=\sum_a T^K_L(s^1_1, a)$ and $T^k_{ n}(s^1_1,m)-2>b^*(m|s^1_1)(T^K_L(s^1_1)-2SA)$ and a similar argument follows in state $s^2_j$, $(c)$ follows $T^{*,K}_L(s^1_1,m) = \frac{\sqrt{\pi^2(m|s^1_1)\sigma^2(s^1_1,m)}}{B(s^1_1)}T^K_L(s^1_1)$, and using the result of \cref{lemma:bound-samples-level2}. Finally, $(d)$ follows as $b^*(m|s^1_1) \geq b_{\min}(s^1_1)$. In $(d)$ we also define the total over samples in state $s^2_j$ as $\V(s^2_j)$ such that
\begin{align*}
    \V(s^2_j) &\coloneqq \sum\limits_{a'}\left[\frac{B(s^{2}_j)}{T_L^K(s^2_j)}+ 4d c \frac{\sqrt{\log (H / \delta)}}{T_L^{(\nicefrac{3}{2}),K}(s^2_j) b^{*,\nicefrac{3}{2}}_{\min}(s^2_j)}+\frac{4 A B(s^{2}_j)}{T^{(2),K}_L(s^2_j)-1}\right]
\end{align*}
By rearranging \eqref{eq:upper-s2i-level1} , we obtain the lower bound on $T^K_L(s^1_1, p)$ :
\begin{align*}
T^K_L(s^1_1, p) &\geq \frac{\sqrt{\pi^2(p|s^1_1) \sigma^2(s^1_1, p)}}{\frac{B(s^1_1)}{T_L^K(s^1_1)}+4d c \frac{\sqrt{\log (H / \delta)}}{T_L^{(\nicefrac{3}{2}),K}(s^1_1) b^{*,\nicefrac{3}{2}}_{\min}(s^1_1)}+\frac{4 A B(s^1_1)}{T^{(2),K}_L(s^1_1)} +  \gamma\pi(m|s^1_1)\sum_{s^2_j} P(s^2_j|s^1_1,m)\V(s^2_j) } \\
&= \frac{\sqrt{\pi^2(p|s^1_1) \sigma^2(s^1_1, p)}}{\frac{B(s^1_1)}{T_L^K(s^1_1)}}\left[\dfrac{1}{1+4d c \frac{\sqrt{\log (H / \delta)}}{B(s^1_1)T_L^{(\nicefrac{1}{2}),K}(s^1_1) b^{*,\nicefrac{3}{2}}_{\min}(s^1_1)}+\frac{4 A }{T^{k}_n(s^1_1)} + \gamma\pi(m|s^1_1)\frac{T_L^K(s^1_1)}{B(s^1_1)}\sum_{s^2_j} P(s^2_j|s^1_1,m)\V(s^2_j)}\right]\\
&\geq \frac{\sqrt{\pi^2(p|s^1_1) \sigma^2(s^1_1, p)}}{\frac{B(s^1_1)}{T_L^K(s^1_1)}}\left[\dfrac{1}{1+4d c \frac{\sqrt{\log (H / \delta)}}{B(s^1_1)T_L^{(\nicefrac{1}{2}),K}(s^1_1) b^{*,\nicefrac{3}{2}}_{\min}(s^1_1)}+\frac{4 A }{T^{k}_n(s^1_1)} + \gamma\pi(m|s^1_1)\sum_{s^2_j} P(s^2_j|s^1_1,m)\V(s^2_j)}\right]\\
&\overset{(a)}{\geq} \frac{\sqrt{\pi^2(p|s^1_1) \sigma^2(s^1_1, p)}}{\frac{B(s^1_1)}{T_L^K(s^1_1)}}\left[1 - 4d c \frac{\sqrt{\log (H / \delta)}}{B(s^1_1)T_L^{(\nicefrac{1}{2}),K}(s^1_1) b^{*,\nicefrac{3}{2}}_{\min}(s^1_1)} - \frac{4 A }{T^{k}_n(s^1_1)} - \gamma\pi(m|s^1_1)\frac{T_L^K(s^1_1)}{B(s^1_1)}\sum_{s^2_j} P(s^2_j|s^1_1,m)\V(s^2_j)\right]\\
&\overset{(b)}{=} \frac{\sqrt{\pi^2(p|s^1_1) \sigma^2(s^1_1, p)}}{\frac{B(s^1_1)}{T_L^K(s^1_1)}}\bigg[1 - 4d c \frac{\sqrt{\log (H / \delta)}}{B(s^1_1)T_L^{(\nicefrac{1}{2}),K}(s^1_1) b^{*,\nicefrac{3}{2}}_{\min}(s^1_1)} - \frac{4 A }{T^{k}_n(s^1_1)} \\
&\qquad - \gamma\pi(m|s^1_1)\frac{T_L^K(s^1_1)}{B(s^1_1)}\sum_{s^2_j} P(s^2_j|s^1_1,m)\left(\frac{B(s^{2}_j)}{T_L^K(s^2_j) }+ 4d c \frac{\sqrt{\log (H / \delta)}}{T_L^{(\nicefrac{3}{2}),K}(s^2_j) b^{*,\nicefrac{3}{2}}_{\min}(s^2_j)}+\frac{4 A B(s^{2}_j)}{T^{(2),K}_L(s^2_j)-1}\right)\bigg]\\
\end{align*}
\begin{align*}
&\overset{(c)}{\geq} T^{*,K}_L(s^1_1, p)-4d c b^*(p|s^1_1) \frac{\sqrt{\log (H / \delta)}}{B(s^1_1) b^{*,\nicefrac{3}{2}}_{\min}(s^1_1)} \sqrt{T^K_L(s^1_1)}-4 A b^*(p|s^1_1) \\
&\qquad - \gamma\pi(m|s^1_1)\sum_{s^2_j}\frac{B(s^{2}_j)T^K_L(s^2_j)}{b^*(m|s^2_j)B(s^1_1)} P(s^2_j|s^1_1,m)\left(\frac{B(s^{2}_j)}{T_L^K(s^2_j) }+ 4d c \frac{\sqrt{\log (H / \delta)}}{T_L^{(\nicefrac{3}{2}),K}(s^2_j) b^{*,\nicefrac{3}{2}}_{\min}(s^2_j)}+\frac{4 A B(s^{2}_j)}{T^{(2),K}_L(s^2_j)-1}\right)\bigg]\\
&\geq T^{*,K}_L(s^1_1, p)-4d c b^*(p|s^1_1) \frac{\sqrt{\log (H / \delta)}}{B(s^1_1) b^{*,\nicefrac{3}{2}}_{\min}(s^1_1)} \sqrt{T^K_L(s^1_1)}-4 A b^*(p|s^1_1)\\
&\qquad- \gamma\pi(m|s^1_1)\dfrac{T^K_L(s^1_1)}{B^2(s^1_1)}\sum_{s^2_j} P(s^2_j|s^1_1,m)\left(\frac{B(s^{2}_j)}{b^*(m|s^2_j)}\sum_{a'}\left[T^{*,K}_L(s^2_j, a') + 4d c b^*(a'|s^2_j) \frac{\sqrt{\log (H / \delta)}}{ b^{*,\nicefrac{3}{2}}_{\min}(s^2_j)} \sqrt{T^K_L(s^1_1)} + 4 A b^*(a'|s^2_j)\right)\right]\\
&\geq T^{*,K}_L(s^1_1, p)-4d c b^*(p|s^1_1) \frac{\sqrt{\log (H / \delta)}}{B(s^1_1) b^{*,\nicefrac{3}{2}}_{\min}(s^1_1)} \sqrt{T^K_L(s^1_1)}-4 A b^*(p|s^1_1)\\
&\qquad- \gamma\pi(m|s^1_1)\dfrac{T^K_L(s^1_1)}{B^2(s^1_1)}\sum_{s^2_j} P(s^2_j|s^1_1,m)\frac{B(s^{2}_j)}{b^*(m|s^2_j)}\sum_{a'}\left[T^{*,K}_L(s^2_j, a') + 4d c b^*(a'|s^2_j) \frac{\sqrt{\log (H / \delta)}}{ b^{*,\nicefrac{3}{2}}_{\min}(s^2_j)} \sqrt{T^K_L(s^1_1)} + 4 A b^*(a'|s^2_j)\right]
\end{align*}
where in $(a)$ we use $1 /(1+x) \geq 1-x$ (for $x>-1$ ), in $(b)$ we substitute the value $\V(s^2_j)$, and $(c)$ follows as $T^K_L(s^2_j) = \left(b(m|s^2_j)/B(s^{2}_j)\right)T^K_{L}(s^1_1)$.
\end{proof}

\begin{lemma}
\label{lemma:regret-two-level}
Let the total budget be $n=KL$ and $n\geq 4SA$. Then the total regret in a deterministic $2$-depth $\T$ at the end of $K$-th episode when sampling according to the \eqref{eq:tree-bhat} is given by
\begin{align*}
\mathcal{R}_n \leq \widetilde{O}\left(\dfrac{B^2(s^1_1)\sqrt{\log(SAn^{\nicefrac{11}{2}})}}{n^{\nicefrac{3}{2}}b^{*,\nicefrac{3}{2}}_{\min}(s^1_1)} + \gamma\max_{s^2_j,a}\pi(a|s^1_1)P(s^2_j|s^1_1,a)\dfrac{B^2(s^2_j)\sqrt{\log(SAn^{\nicefrac{11}{2}})}}{n^{\nicefrac{3}{2}}b^{*,\nicefrac{3}{2}}_{\min}(s^2_j)} \right)
\end{align*}
where, the $\widetilde{O}$ hides other lower order terms resulting out of the expansion of the squared terms and $B(s^\ell_i)$ is defined in \eqref{eq:B-def}.
\end{lemma}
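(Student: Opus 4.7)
The plan is to decompose the MSE of \rev\! using the concentration event $\xi_\delta$ from \Cref{lemma:conc2} and bound each piece separately. Writing
\begin{align*}
\L_n = \E_{\D}\!\left[(Y_n(s^1_1)-v^\pi(s^1_1))^2 \mathbb{I}\{\xi_\delta\}\right] + \E_{\D}\!\left[(Y_n(s^1_1)-v^\pi(s^1_1))^2 \mathbb{I}\{\xi_\delta^C\}\right],
\end{align*}
the second term is handled by \Cref{assm:bounded}, which makes $(Y_n-v^\pi)^2 = O(L^2\eta^2)$ almost surely, together with \Cref{lemma:conc2}, which gives $\Pb(\xi_\delta^C) \leq 2\delta$. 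Choosing $\delta = n^{-2}$ makes this contribution $O(L^2\eta^2 n^{-2})$, strictly smaller than the target $n^{-3/2}$ rate and absorbed into $\widetilde{O}$.

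For the good-event term, \Cref{assm:unbiased} makes $Y_n(s^1_1)$ unbiased so the squared error equals the variance. Following the variance decomposition from the proof of \Cref{lemma:main-tree}, in the deterministic $2$-depth $\T$ we have
\begin{align*}
\Var(Y_n(s^1_1)) = \sum_a \pi^2(a|s^1_1)\frac{\sigma^2(s^1_1,a)}{T^K_L(s^1_1,a)} + \gamma^2 \sum_{s^2_j,a'} \pi^2(a|s^1_1)\pi^2(a'|s^2_j)\frac{\sigma^2(s^2_j,a')}{T^K_L(s^2_j,a')},
\end{align*}
where $s^2_j$ is the unique child of $s^1_1$ reached by $a$. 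I would then substitute the lower bounds on $T^K_L(s^1_1,a)$ from \Cref{lemma:bound-samples-level1} and on $T^K_L(s^2_j,a')$ from \Cref{lemma:bound-samples-level2}, writing each actual count as its oracle value $T^{*,K}_L(\cdot)$ minus a correction of order $b^*(\cdot)\sqrt{\log(H/\delta)\,T^K_L(s)}/(B(s)\,b^{*,3/2}_{\min}(s))$, where $H = SAn(n+1)$.

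The key algebraic step is to invert these lower bounds using the elementary inequality $1/(T^*-c) \leq 1/T^* + O(c/(T^*)^2)$ valid when $c \ll T^*$. The leading $1/T^{*,K}_L(\cdot)$ pieces recombine, exactly as in the proof of \Cref{prop:oracle-loss}, into the normalizers $B^2(s)/T^{*,K}_L(s)$ via the definition of $B$ in \eqref{eq:B-def}. This reproduces $\L^*_n$ and cancels on forming the regret $\mathcal{R}_n = \L_n - \L^*_n$. The residual correction terms, after using $T^{*,K}_L(s) = \Theta(nb^*_{\min}(s))$ and substituting $\delta = n^{-2}$ so that $\log(H/\delta) = O(\log(SAn^{11/2}))$, each scale as $B^2(s)\sqrt{\log(SAn^{11/2})}/(n^{3/2}b^{*,3/2}_{\min}(s))$, producing the two summands in the statement.

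The main obstacle will be the cross-level bookkeeping induced by \Cref{lemma:bound-samples-level1}, whose lower bound on $T^K_L(s^1_1,p)$ already contains the level-$2$ sample counts. I must verify that these nested contributions remain $O(n^{-3/2})$ rather than creeping to a slower rate, and that the weighting $\max_{s^2_j,a}\pi(a|s^1_1)P(s^2_j|s^1_1,a)$ emerges naturally when the level-$2$ correction is bounded uniformly over the at-most-$A$ children of the root, thereby yielding the $\gamma$ (rather than $\gamma^2$) coefficient in the second summand of the stated bound.
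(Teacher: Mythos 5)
Your overall route is the same as the paper's: the same decomposition of $\L_n$ over the event $\xi_\delta$ and its complement, the same substitution of the sample-count lower bounds from \Cref{lemma:bound-samples-level1} and \Cref{lemma:bound-samples-level2}, and the same recombination of the leading $1/T^{*,K}_L(\cdot)$ terms into $B^2(s)/T^{*,K}_L(s)$ so that $\L^*_n$ cancels in $\mathcal{R}_n = \L_n - \L^*_n$. The one genuine gap is in the good-event term: you write $\E_{\D}[(Y_n(s^1_1)-v^{\pi}(s^1_1))^2\mathbb{I}\{\xi_\delta\}]$ as $\sum_a \pi^2(a|s^1_1)\sigma^2(s^1_1,a)/T^K_L(s^1_1,a)+\cdots$ by importing the variance decomposition from \Cref{lemma:main-tree}. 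That decomposition is justified when the counts are fixed; under \rev the counts $T^K_L(s,a)$ are data-dependent random variables (the action taken at step $t+1$ depends on the rewards observed so far), so $\E[(\wmu(s,a)-\mu(s,a))^2]$ is not simply $\sigma^2(s,a)/T^K_L(s,a)$. The paper closes this hole by first arguing that $T^k_t(s,a)$ is a stopping time adapted to the reward filtration and then invoking Wald's second identity (\Cref{prop:wald-variance}) before substituting the count bounds; your proposal omits this step entirely, and without it the starting variance formula is unjustified.

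A smaller, cosmetic issue: your choice $\delta = n^{-2}$ is internally inconsistent with the $\log(SAn^{\nicefrac{11}{2}})$ factor you claim to recover. With $H=SAn(n+1)$ one gets $\log(H/\delta)=O(\log(SAn^{4}))$, whereas the $n^{\nicefrac{11}{2}}$ in the statement comes from the paper's choice $\delta=n^{-\nicefrac{7}{2}}$. Both choices yield the same $\widetilde{O}(n^{-\nicefrac{3}{2}})$ rate, and your bad-event bound $O(L^2\eta^2 n^{-2})$ via boundedness of $Y_n$ is in fact more direct than the paper's, but the constant displayed inside the logarithm would not match the lemma as stated (and the $\delta$ used in the analysis must agree with the $\delta$ hard-coded in the confidence bound \eqref{eq:tree-ucb} that the algorithm actually runs).
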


\begin{proof}
\textbf{Step 1 ($T^k_{t}(s^{\ell}_i,a)$ is a stopping time):} Let $\tau$ be a random variable, which is defined on the filtered probability space. Then $\tau$ is called a stopping time (with respect to the filtration $\left(\left(\mathcal{F}_{n}\right)_{n \in \mathbb{N}}\right)$, if the following condition holds: $
\{\tau=n\} \in \mathcal{F}_{n} \text { for all } n$
Intuitively, this condition means that the "decision" of whether to stop at time $n$ must be based only on the information present at time $n$, not on any future information. Now consider the state $s^{\ell}_i$ and an action $a$. At each time step $t+1$, the \rev algorithm decides which action to pull according to the current values of the upper-bounds $\left\{\usigma^k_{t+1}(s^{\ell}_i,a)\right\}_{a}$ in state $s^{\ell}_i$. Thus for any action $a$, $T^k_{t+1}(s^{\ell}_i,a)$ depends only on the values $\left\{T^k_{t+1}(s^{\ell}_i,a)\right\}_{a}$ and $\left\{\wsigma^k_{t}(s^{\ell}_i,a)\right\}_{k}$ in state $s^{\ell}_i$. So by induction, $T^k_{t}(s^{\ell}_i,a)$ depends on the sequence of rewards $\left\{R^k_{1}(s^{\ell}_i,a), \ldots, R^k_{T^k_{ t}(s^{\ell}_i,a)}(s^{\ell}_i,a)\right\}$, and on the samples of the other arms (which are independent of the samples of arm $k$ ). So we deduce that $T^K_{L}(s^{\ell}_i,a)$ is a stopping time adapted to the process $\left(R^k_{t}(s^{\ell}_i,a)\right)_{t \leq n}$.

\textbf{Step 2 (Regret bound):} By definition, given the dataset $\D$ after $K$ episodes each of trajectory length $L$, we have $n$ state-action samples. Then the loss of the algorithm is 
\begin{align*}
\L_{n} &= \E_{\D}\left[\left(Y_n(s^1_1) -v^{\pi}_{}(s^1_1)\right)^{2}\right] \\ 
&= \E_{\D}\left[\left(Y_n(s^1_1) -v^{\pi}_{}(s^1_1)\right)^{2} \mathbb{I}\{\xi_\delta\}\right]+ \E_{\D}\left[\left(Y_n(s^1_1) -v^{\pi}_{}(s^1_1)\right)^{2} \mathbb{I}\left\{\xi^{C}_\delta\right\}\right]
\end{align*}
where, $n = KL$ is the total budget. To handle the second term, we recall that $\xi^C_\delta$ holds with probability $2\delta$. Further due to the bounded reward assumption we have 
$$
\E_{\D}\left[\left(Y_n(s^1_1) -v^{\pi}_{}(s^1_1)\right)^{2}\right]\leq 2n^2K\delta(4\eta^2 +2\eta) \leq 2 (4\eta^2 +2\eta) n^{2} A \delta\left(1+\log \left(c_{2} / 2 n A \delta\right)\right)
$$ 
where $c_2>0$ is a constant. Following Lemma 2 of \citep{carpentier2011finite} and setting $\delta = n^{-7/2}$ gives us an upper bounds of the quantity 
\begin{align*}
    \E_{\D}\left[\left(Y_n(s^1_1) -v^{\pi}_{}(s^1_1)\right)^{2} \mathbb{I}\left\{\xi^{C}_\delta\right\}\right] \leq O\left(\dfrac{\log n}{n^{\nicefrac{3}{2}}}\right).
\end{align*}
Note that \citet{carpentier2011finite} uses a similar $\delta = n^{-7/2}$ due to the sub-Gaussian assumption on their reward distribution. Also observe that under the \Cref{assm:bounded} we also have a sub-Gaussian assumption. Hence we can use Lemma 2 of \citet{carpentier2011finite}. Now, using the definition of $Y_n(s^1_1)$ and \Cref{prop:wald-variance} we bound the first term as
\begin{align}
\E_{\D}&\left[\left(Y_n(s^1_1) -v^{\pi}_{}(s^1_1)\right)^{2} \mathbb{I}\{\xi_\delta\}\right]  \overset{(a)}{=} \Var[Y_{n}(s^{1}_{1})]\E[T^K_L(s^1_1)]\nonumber\\
&\leq \sum_a\pi^2(a|s^{1}_{1}) \bigg[\dfrac{ \sigma^2(s^{1}_{1}, a)}{\underline{T}^{(2),K}_L(s^{1}_{1}, a)}\bigg]\E[T^K_L(s^1_1,a)] + \gamma^2\sum_{a}\pi^2(a|s^{1}_{1})\sum_{s^{2}_j}P(s^2_j|s^1_1,a)\Var[Y_{n}(s^{2}_j)]\E[T^K_L(s^2_j,a)]\nonumber\\
&\leq \sum_a\pi^2(a|s^{1}_{1}) \bigg[\dfrac{ \sigma^2(s^{1}_{1}, a)}{\underline{T}^{(2),K}_L(s^{1}_{1}, a)}\bigg]\E[T^K_L(s^1_1,a)] + \gamma^2\sum_{a}\pi^2(a|s^{1}_{1})\sum_{s^{2}_j}P(s^2_j|s^1_1,a)\sum_{a'}\pi^2(a'|s^2_j)\bigg[\dfrac{ \sigma^2(s^{2}_{j}, a')}{\underline{T}^{(2),K}_L(s^{2}_{j}, a')}\bigg]\E[T^K_L(s^2_j,a')] \label{eq:loss-level1} 
\end{align}
where, $(a)$ follows from \Cref{prop:wald-variance}, and  $\underline{T}_{n}(s^{\ell}_i,a)$ is the lower bound on $T^K_{L}(s^{\ell}_i,a)$ on the event $\xi_\delta$.
Note that as $\sum_{a} T^K_{L}(s^{1}_1,a)=n$, we also have $\sum_{a} \E\left[T^K_{L}(s^{1}_1,a)\right]=n$.
Using \cref{eq:loss-level1} and \cref{eq:upper-s2i-level1} for $\pi^2(a|s^{1}_{1}) \sigma^2(s^{1}_{1}, a) / \underline{T}^k_{n}(s^1_1, a)$ (which is equivalent to using a lower bound on $T^K_{L}(s^1_1, a)$ on the event $\xi_\delta$), we obtain
\begin{align}
\sum_a &\pi^2(a|s^{1}_{1}) \bigg[\dfrac{ \sigma^2(s^{1}_{1}, a)}{\underline{T}^{(2),K}_L(s^{1}_{1}, a)}\bigg]\E[T_n(s^1_1)] \leq \sum_{a}\bigg(\left[\frac{B(s^1_1)}{T_L^K(s^1_1)}+ 4d c \frac{\sqrt{\log (H / \delta)}}{T_L^{(\nicefrac{3}{2}),K}(s^1_1) b^{*,\nicefrac{3}{2}}_{\min}(s^1_1)}+\frac{4 A B(s^1_1)}{T^{(2),K}_L(s^1_1)-1}\right]\nonumber\\
 &\qquad + \gamma\pi(a|s^1_1)\sum_{s^2_j} P(s^2_j|s^1_1,a)\sum\limits_{a'}\left[\frac{B(s^{2}_j)}{T_L^K(s^2_j)}+ 4d c \frac{\sqrt{\log (H / \delta)}}{T_L^{(\nicefrac{3}{2}),K}(s^2_j) b^{*,\nicefrac{3}{2}}_{\min}(s^2_j)}+\frac{4 A B(s^{2}_j)}{T^{(2),K}_L(s^2_j)-1}\right]\bigg)^2\E[T^K_L(s^1_1,a)]. \label{eq:bound-level1} 
\end{align}
Finally the R.H.S. of \cref{eq:bound-level1} may be bounded using the fact that $\sum_{a} \mathbb{E}\left[T^K_{L}(s^1_1,a)\right]=n$ as
\begin{align*}
    \sum_a &\pi^2(a|s^{1}_{1}) \bigg[\dfrac{ \sigma^2(s^{1}_{1}, a)}{\underline{T}^{(2),K}_L(s^{1}_{1}, a)}\bigg]\E[T_L^K(s^1_1)] \leq \sum_a\bigg(\left[\frac{B(s^1_1)}{T_L^K(s^1_1)}+ 4d c \frac{\sqrt{\log (H / \delta)}}{T_L^{(\nicefrac{3}{2}),K}(s^1_1) b^{*,\nicefrac{3}{2}}_{\min}(s^1_1)}+\frac{4 A B(s^1_1)}{T^{(2),K}_L(s^1_1)-1}\right]\nonumber\\
 &\qquad + \gamma\pi(a|s^1_1)\sum_{s^2_j} P(s^2_j|s^1_1,a)\sum\limits_{a'}\left[\frac{B(s^{2}_j)}{T_L^{K}(s^2_j)}+ 4d c \frac{\sqrt{\log (H / \delta)}}{T_L^{(\nicefrac{3}{2}),K}(s^2_j) b^{*,\nicefrac{3}{2}}_{\min}(s^2_j)}+\frac{4 A B(s^{2}_j)}{T^{(2),L}_K(s^2_j)-1}\right]\bigg)^2\E[T^K_L(s^1_1,a)]\\
 &\overset{(a)}{\leq} 2\bigg(\left[\frac{B(s^1_1)}{T_L^{K}(s^1_1)}+ 4d c \frac{\sqrt{\log (H / \delta)}}{T_L^{(\nicefrac{3}{2}),K}(s^1_1) b^{*,\nicefrac{3}{2}}_{\min}(s^1_1)}+\frac{4 A B(s^1_1)}{T^{(2),K}_L(s^1_1)-1}\right]\bigg)^2\sum_{a}\E[T^K_L(s^1_1,a)]\nonumber\\
 &\qquad + 2\bigg(\gamma\pi(a|s^1_1)\sum_{s^2_j} P(s^2_j|s^1_1,a)\sum\limits_{a'}\left[\frac{B(s^{2}_j)}{T_L^K(s^2_j)}+ 4d c \frac{\sqrt{\log (H / \delta)}}{T_L^{(\nicefrac{3}{2}),K}(s^2_j) b^{*,\nicefrac{3}{2}}_{\min}(s^2_j)}+\frac{4 A B(s^{2}_j)}{T^{(2),K}_L(s^2_j)-1}\right]\bigg)^2\sum_{a}\E[T^K_L(s^1_1,a)]\\
 &\overset{(b)}{\leq} \widetilde{O}\left(\dfrac{B^2(s^1_1)\sqrt{\log(H/\delta)}}{n^{\nicefrac{3}{2}}b^{*,\nicefrac{3}{2}}_{\min}(s^1_1)} + \gamma\max_{s^2_j,a}\pi(a|s^1_1)P(s^2_j|s^1_1,a)\dfrac{B^2(s^2_j)\sqrt{\log(H/\delta)}}{n^{\nicefrac{3}{2}}b^{*,\nicefrac{3}{2}}_{\min}(s^2_j)} \right)\\
 &\overset{(c)}{=} \widetilde{O}\left(\dfrac{B^2(s^1_1)\sqrt{\log(SAn^{\nicefrac{11}{2}})}}{n^{\nicefrac{3}{2}}b^{*,\nicefrac{3}{2}}_{\min}(s^1_1)} + \gamma\max_{s^2_j,a}\pi(a|s^1_1)P(s^2_j|s^1_1,a)\dfrac{B^2(s^2_j)\sqrt{\log(SAn^{\nicefrac{11}{2}})}}{n^{\nicefrac{3}{2}}b^{*,\nicefrac{3}{2}}_{\min}(s^2_j)} \right)
\end{align*}
where, $(a)$ follows as $(a+b)^2\leq 2(a^2+b^2)$ for any $a,b>0$, in $(b)$ we have $T^K_L(s^1_1) = n$, and the $\widetilde{O}$ hides other lower order terms resulting out of the expansion of the squared terms, and $(c)$ follows by setting $\delta = n^{-7/2}$ and using $H = SAn(n+1)$.
\end{proof}

\section{Regret for a Deterministic $L$-Depth Tree}
\label{app:regret-det-tree-L}

\begin{customtheorem}{2}
Let the total budget be $n=KL$ and $n \geq 4SA$. Then the total regret in a deterministic $L$-depth $\T$ at the end of $K$-th episode when taking actions according to  \eqref{eq:tree-bhat} is given by
\begin{align*}
    \mathcal{R}_n &\leq \widetilde{O}\left(\dfrac{B^2_{s^{1}_1}\sqrt{\log(SAn^{\nicefrac{11}{2}})}}{n^{\nicefrac{3}{2}}b^{*,\nicefrac{3}{2}}_{\min}(s^{1}_1)} \right.
    \left. + \gamma\sum_{\ell=2}^L\max_{s^{\ell}_j,a}\pi(a|s^{1}_1)P(s^{\ell}_j|s^{1}_1,a)\dfrac{B^2_{s^{\ell}_j}\sqrt{\log(SAn^{\nicefrac{11}{2}})}}{n^{\nicefrac{3}{2}}b^{*,\nicefrac{3}{2}}_{\min}(s^{\ell}_j)} \right)
\end{align*}
where, the $\widetilde{O}$ hides other lower order terms and $B_{s^\ell_i}$ is defined in \eqref{eq:B-def} and $b^*_{\min}(s)= \min_{a}b^*(a|s)$.
\end{customtheorem}

\begin{proof}

The proof follows directly by using \Cref{lemma:bound-samples-level2}, \Cref{lemma:bound-samples-level1}, and \Cref{lemma:regret-two-level}. 

\textbf{Step 1 ($T^k_{t}(s^{\ell}_i,a)$ is a stopping time):} This step is same as \Cref{lemma:regret-two-level} as all the arguments hold true even for the $L$ depth deterministic tree.

\textbf{Step 2 (MSE decomposition):} Given the dataset $\D$ of $K$ episodes each of trajectory length $L$, the MSE of the algorithm is 
\begin{align*}
\L_{n} &= \E_{\D}\left[\left(Y_n(s^1_1) -v^{\pi}_{}(s^1_1)\right)^{2}\right] = \E_{\D}\left[\left(Y_n(s^1_1) -v^{\pi}_{}(s^1_1)\right)^{2} \mathbb{I}\{\xi_\delta\}\right]+ \E_{\D}\left[\left(Y_n(s^1_1) -v^{\pi}_{}(s^1_1)\right)^{2} \mathbb{I}\left\{\xi^{C}_\delta\right\}\right]
\end{align*}
where, $n = KL$ is the total budget. Using \Cref{lemma:regret-two-level} we can upper bound the second term as $O\left(n^{-\nicefrac{3}{2}}\log(n)\right)$. 
Using the definition of $Y_n(s^1_1)$ and \Cref{prop:wald-variance} we bound the first term as
\begin{align}
\E_{\D}&\left[\left(Y_n(s^1_1) -v^{\pi}_{}(s^1_1)\right)^{2} \mathbb{I}\{\xi_\delta\}\right]  \overset{(a)}{=} \Var[Y_{n}(s^{1}_{1})]\E[T^K_L(s^1_1)] =\nonumber\\
&\overset{b}{\leq} \sum_a\pi^2(a|s^{1}_{1}) \bigg[\dfrac{ \sigma^2(s^{1}_{1}, a)}{\underline{T}^{(2),K}_L(s^{1}_{1}, a)}\bigg]\E[T^K_L(s^1_1,a)] + \gamma^2\sum_{a}\pi^2(a|s^{1}_{1})\sum_{s^{2}_j}P(s^{2}_j|s^1_1,a)\Var[Y_{n}(s^{2}_{j})]\E[T^K_L(s^2_j)] \nonumber\\
&\overset{(c)}{\leq} \sum_a\pi^2(a|s^{1}_{1}) \bigg[\dfrac{ \sigma^2(s^{1}_{1}, a)}{\underline{T}^{(2),K}_L(s^{1}_{1}, a)}\bigg]\E[T^K_L(s^1_1,a)] \nonumber\\
&\qquad+ \gamma^2\sum_{a}\pi^2(a|s^{1}_{1})\sum_{\ell=2}^L\sum_{s^{\ell}_j}P(s^{\ell}_j|s^1_1,a)\sum_{a'}\pi^2(a'|s^\ell_j)\bigg[\dfrac{ \sigma^2(s^{\ell}_{j}, a')}{\underline{T}^{(2),K}_L(s^{\ell}_{j}, a')}\bigg]\E[T^K_L(s^\ell_j,a')] \label{eq:loss-levelL} 
\end{align}
where, $(a)$ follows from \Cref{prop:wald-variance}, $(b)$ follows from by unrolling the variance for $Y_n(s^1_1)$, and 
where $\underline{T}_{n}(s^{\ell}_i,a)$ is the lower bound on $T^K_{L}(s^{\ell}_i,a)$ on the event $\xi_\delta$. Finally, $(c)$ follows by unrolling the variance for all the states till level $L$ and taking the lower bound of $\underline{T}_{n}(s^{\ell}_i,a)$ for each state-action pair. 

\textbf{Step 2 (MSE at level $L$):} Now we want to upper bound the total MSE in \eqref{eq:loss-levelL}. Using  \cref{eq:upper-s2i} in  \Cref{lemma:bound-samples-level2} we can directly get the MSE upper bound for a state $s^L_i$ as
\begin{align*}
    \sum_{a'}\pi^2(a'|s^L_i)\bigg[\dfrac{ \sigma^2(s^{L}_{i}, a')}{\underline{T}^{(2),K}_L(s^{L}_{i}, a')}\bigg]\E[T^K_L(s^L_i,a')]\leq \widetilde{O}\left(\dfrac{B^2_{s^L_i}\sqrt{\log(SAn^{\nicefrac{11}{2}})}}{n^{\nicefrac{3}{2}}b_{\min}(s^L_i)}\right).
\end{align*}

\textbf{Step 3 (MSE at level $L-1$):} This step follows directly from \cref{eq:upper-s2i-level1} in \Cref{lemma:bound-samples-level1}. We can get the loss upper bound for a state $s^{L-1}_i$ (which takes into account the loss at level $L$ as well) as follows:
\begin{align*}
    \sum_{a'}\bigg(\dfrac{ b^*(a'|s^{L-1}_i)}{\underline{T}^{(2),K}_L(s^{L-1}_{i}, a')}\bigg)\E[T^K_L(s^{L-1}_i,a')]\leq \widetilde{O}\left(\dfrac{B^2_{s^{L-1}_i}\sqrt{\log(SAn^{\nicefrac{11}{2}})}}{n^{\nicefrac{3}{2}}b_{\min}(s^{L-1}_i)} + \gamma\max_{s^L_j,a}\pi(a|s^{L-1}_i)P(s^L_j|s^{L-1}_i,a)\dfrac{B^2_{s^L_j}\sqrt{\log(SAn^{\nicefrac{11}{2}})}}{n^{\nicefrac{3}{2}}b_{\min}(s^L_j)} \right).
\end{align*}

\textbf{Step 4 (MSE at arbitrary level $\ell$):} This step follows by combining the results of step 2 and 3 iteratively from states in level $\ell$ to $L$ under the good event $\xi_{\delta}$. We can get the regret upper bound for a state $s^{\ell}_i$ as
\begin{align*}
    \sum_{a'}\bigg(\dfrac{ b^*(a'|s^{\ell}_i)}{\underline{T}^{(2),K}_L(s^{\ell}_{i}, a')}\bigg)\E[T^K_L(s^{\ell}_i,a')]\leq \widetilde{O}\left(\dfrac{B^2_{s^{\ell}_i}\sqrt{\log(SAn^{\nicefrac{11}{2}})}}{n^{\nicefrac{3}{2}}b_{\min}(s^{\ell}_i)} + \gamma\sum_{\ell'=\ell+1}^L\max_{s^{\ell'}_j,a}\pi(a|s^{\ell'-1}_i)P(s^{\ell'}_j|s^{\ell'-1}_i,a)\dfrac{B^2_{s^{\ell'}_j}\sqrt{\log(SAn^{\nicefrac{11}{2}})}}{n^{\nicefrac{3}{2}}b_{\min}(s^{\ell'}_j)} \right).
\end{align*}

\textbf{Step 4 (Regret at level $1$):} Finally, combining all the steps above we get the regret upper bound for the state $s^{1}_1$ as follows
\begin{align*}
\mathcal{R}_n = \L_n - \L^*_n = 
    \widetilde{O}\left(\dfrac{B^2(s^{1}_1)\sqrt{\log(SAn^{\nicefrac{11}{2}})}}{n^{\nicefrac{3}{2}}b^{*,\nicefrac{3}{2}}_{\min}(s^{1}_1)} + \gamma\sum_{\ell=2}^L\max_{s^{\ell}_j,a}\pi(a|s^{1}_1)P(s^{\ell}_j|s^{1}_1,a)\dfrac{B^2(s^{\ell}_j)\sqrt{\log(SAn^{\nicefrac{11}{2}})}}{n^{\nicefrac{3}{2}}b^{*\nicefrac{3}{2}}_{\min}(s^{\ell}_j)} \right).
\end{align*}

\end{proof}

\begin{remark}
\textbf{(Stochastic MDP extension):} Observe that the \Cref{thm:regret-rev} is quite general as the regret 
\begin{align*}
    \mathcal{R}_n &\leq \widetilde{O}\left(\dfrac{B^2_{s^{1}_1}\sqrt{\log(SAn^{\nicefrac{11}{2}})}}{n^{\nicefrac{3}{2}}b^{*,\nicefrac{3}{2}}_{\min}(s^{1}_1)} \right.
    \left. + \gamma\sum_{\ell=2}^L\max_{s^{\ell}_j,a}\pi(a|s^{1}_1)P(s^{\ell}_j|s^{1}_1,a)\dfrac{B^2_{s^{\ell}_j}\sqrt{\log(SAn^{\nicefrac{11}{2}})}}{n^{\nicefrac{3}{2}}b^{*,\nicefrac{3}{2}}_{\min}(s^{\ell}_j)} \right)
\end{align*}
incorporates the transition probability $P(s'|s,a)$. Hence, the result of \Cref{thm:regret-rev} holds not only for the deterministic case but also for the stochastic setting, when the algorithm is provided with the knowledge of $P(s'|s,a)$ upto some constant scaling. Note that \rev does not perform any exploration to estimate the transition probabilities, and it is not clear how to extend the current UCB based approach  that minimizes MSE to also estimate the $P$. We leave this direction for future works.
\end{remark}


\section{DAG Optimal Sampling}
\label{app:dag-sampling}

\begin{figure}
    \centering
    \includegraphics[scale = 0.27]{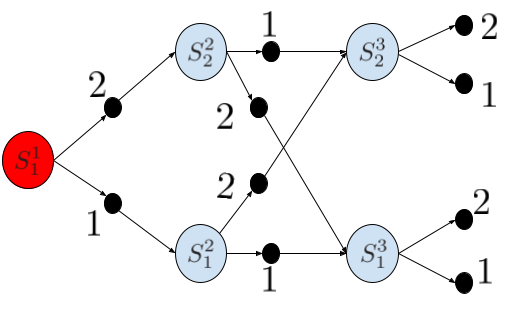}
    \caption{A $3$-depth $2$-Action DAG}
    \label{fig:dag-mdp}
\end{figure}


\begin{customproposition}{3}\textbf{(Restatement)}
Let $\G$ be a $3$-depth, $A$-action DAG defined in \Cref{def:dag-mdp}. The minimal-MSE sampling proportions $b^*(a|s^1_1), b^*(a|s^2_j)$ depend on themselves such that $b(a|s^1_1) \propto f(1/b(a|s^1_1))$ and $ b(a|s^2_j) \propto f(1/b(a|s^2_j))$ where $f(\cdot)$ is a function that hides other dependencies on variances of $s$ and its children.
\end{customproposition}

\begin{proof}
\textbf{Step 1 (Level $3$):} For an arbitrary state $s^3_i$ 
we can calculate the expectation and variance of $Y_n(s^{3}_i)$ as follows:
\begin{align*}
    \E[Y_n(s^{3}_i)] &= \sum_{a}\dfrac{\pi(a|s^{3}_i)}{T_n(s^{3}_i, a)} \sum_{h=1}^{T_n(s^{3}_i, a)} \E[ R_h(s^{3}_i, a)] = \sum_{a} \pi(a|s^{3}_i) \mu(s^{3}_i, a)\\
    \Var[Y_{n}(s^{3}_i)] &= \sum_{a}\dfrac{\pi^2(a|s^{3}_i)}{T_n^2(s^{3}_i, a)} \sum_{h=1}^{T_n(s^{3}_i, a)} \Var[ R_h(s^{3}_i, a)] = \sum_{a}\dfrac{\pi^2(a|s^{3}_i)}{T_n(s^{3}_i, a)} \sigma^2(s^{3}_i, a).
\end{align*}

\textbf{Step 2 (Level $2$):} For the arbitrary state $s^{2}_i$ 
we can calculate the expectation of $Y_{n}(s^{2}_1)$ as follows:
\begin{align*}
    \E[Y_{n}(s^{2}_i)] &= \sum_{a}\dfrac{\pi(a|s^{2}_i)}{T_n(s^{2}_i, a)} \sum_{h=1}^{T_n(s^{2}_i, a)} \E[ R_h(s^{2}_i, a)] + \gamma\sum_{a}\pi(a|s^{2}_i)\sum_{s^{3}_j}P(s^{3}_j|s^2_i, a)\sum_{a'}\dfrac{\pi(a'|s^{3}_j)}{T_n(s^{3}_j, a')}\sum_{h=1}^{T_n(s^{3}_j, a')}\E[R_h(s^{3}_j, a')] \\
    &= \sum_{a} \pi(a|s^{2}_i) \left(\mu(s^{2}_i, a) + \gamma  \sum_{s^{3}_j}P(s^3_j|s^2_i,a)\E[Y_{n}(s^{3}_j)]\right)\\
    \Var[Y_{n}(s^{2}_i)] &= \sum_{a}\dfrac{\pi^2(a|s^{2}_1)}{T_n^2(s^{2}_1, a)} \sum_{h=1}^{T_n(s^{2}_1, a)} \Var[ R_h(s^{2}_1, a)] + \gamma^2\sum_{a}\pi^2(a|s^{2}_1)\sum_{s^{3}_j}P(s^3_j|s^2_1,a)\sum_{a'}\dfrac{\pi^2(a'|s^{3}_j)}{T_n^2(s^{3}_j, a')}\sum_{h=1}^{T_n(s^{3}_j, a')}\Var[R_h(s^{3}_j, a')]\\
    &= \sum_{a}\dfrac{\pi^2(a|s^{2}_1)}{T_n(s^{2}_1, a)} \left(\sigma^2(s^{2}_1, a) + \gamma^2\sum_{s^{3}_j}P(s^3_j|s^2_1,a)\Var[Y_{n}(s^{3}_j)]\right)
\end{align*}

\textbf{Step 3 (Level 1):} Finally for the state $s^{1}_1$ 
we can calculate the expectation and variance of $Y_{n}(s^{1}_1)$ as follows:
\begin{align*}
    \E[Y_{n}(s^{1}_1)] &= \sum_{a}\dfrac{\pi(a|s^{1}_1)}{T_n(s^{1}_1, a)} \sum_{h=1}^{T_n(s^{1}_1, a)} \E[ R_h(s^{1}_1, a)] + \gamma\pi(a|s^{1}_1)\sum_{s^{2}_j}P(s^2_j|s^1_1,a)\sum_{a'}\dfrac{\pi(a'|s^{2}_j)}{T_n(s^{2}_j, a')}\sum_{h=1}^{T_n(s^{2}_j, a')}\E[R_h(s^{2}_j, a')] \\
    &= \sum_{a} \pi(a|s^{1}_1) \left(\mu(s^{1}_1, a) + \gamma  \sum_{s^{2}_j}P(s^2_j|s^1_1,a)\E[Y_{n}(s^{2}_j)]\right)\\
    \Var[Y_{n}(s^{1}_1)] &= \sum_{a}\dfrac{\pi^2(a|s^{1}_1)}{T_n^2(s^{1}_1, a)} \sum_{h=1}^{T_n(s^{1}_1, a)} \Var[ R_h(s^{1}_1, a)] + \gamma^2\sum_{a}\pi^2(a|s^{1}_1)\sum_{s^{2}_j}P(s^2_j|s^1_1,a)\sum_{a'}\dfrac{\pi^2(a'|s^{2}_j)}{T_n^2(s^{2}_j, a')}\sum_{h=1}^{T_n(s^{2}_j, a')}\Var[R_h(s^{2}_j, a')]\\
    &= \sum_{a}\dfrac{\pi^2(a|s^{1}_1)}{T_n(s^{1}_1, a)} \left(\sigma^2(s^{1}_1, a) + \gamma^2\sum_{s^{2}_j}P(s^2_j|s^1_1,a)\Var[Y_{n}(s^{2}_j)]\right)
\end{align*}
Unrolling out the above equation we re-write the equation below:
\begin{align}
    \Var[Y_{n}(s^{1}_1)] &= \sum_{a}\dfrac{\pi^2(a|s^{1}_1)\sigma^2(s^{1}_1, a)}{T_n(s^{1}_1, a)} + \sum_{a}\pi^2(a|s^{1}_1)\sum_{s^2_j}\sum_{a'}\dfrac{\pi^2(a'|s^{2}_j)\sigma^2(s^{2}_j, a')}{T_n(s^2_j, a')} \nonumber\\
    &\qquad+ \sum_{a}\pi^2(a|s^{1}_1)\sum_{s^2_j}\sum_{a'}\pi^2(a'|s^{2}_j)\sum_{s^3_m}\sum_{a^{''}}\dfrac{\pi^2(a^{''}|s^{3}_m)\sigma^2(s^{3}_m, a^{''})}{T_n(s^3_m, a^{''})}\label{eq:dag-var}
\end{align}

Since we follow a path $s^{1}_1 \overset{a}{\rightarrow} s^{2}_j \overset{a'}{\rightarrow} s^{3}_m \overset{a^{''}}{\rightarrow}\text{Terminate}$ for any $a, a', a''\in\A$ and $j,m \in \{1,2,\ldots,A\}$. Hence we have the following constraints
\begin{align}
    & \sum_a T_n(s^{1}_1, a) = n\label{eq:dag1}\\
    &\sum_a T_n(s^{2}_i, a) \overset{(a)}{=} \sum_{a}P(s^2_i|s^1_1,a)T_n(s^{1}_1, a)\label{eq:dag2}\\
    & \sum_a T_n(s^{3}_i, a)\overset{(b)}{=} \sum_{s^2_j}\sum_{a'}P(s^3_i|s^2_j,a')T_n(s^{2}_j, a')\label{eq:dag4}
\end{align}
observe that in $(a)$ in the deterministic case the $\sum_{a}P(s^2_i|s^1_1,a)T_n(s^{1}_1, a)$ is all the possible paths from $s^1_1$ to $s^2_i$ that were taken for $n$ samples over any action $a$. Similarly in $(b)$ in the deterministic case the $\sum_{a'}P(s^3_i|s^2_j,a)T_n(s^{2}_j, a')$ is all the possible paths from $s^2_j$ to $s^3_i$ that were taken for $n$ samples over any action $a'$.

\textbf{Step 4 (Formulate objective):} We want to minimize the variance in \eqref{eq:dag-var} subject to the above constraints. We can show that
\begin{align}
T_n(s^{1}_1, a)/n &= b(a|s^{1}_1).\label{eq:dag6}\\
    \text{and},\qquad b(a|s^2_i) &= \frac{T_n(s^{2}_i, a)}{\sum_{a'} T_n(s^{2}_i, a')}  = \frac{T_n(s^{2}_i, a)}{\sum_{a'}P(s^2_i|s^1_1,a')T_n(s^{1}_1, a')} \overset{(a)}{=} \frac{T_n(s^{2}_i, a)/n}{\sum_{a'}P(s^2_i|s^1_1,a')T_n(s^{1}_1, a')/n}\nonumber\\
    \implies &  T_n(s^{2}_i, a)/n \overset{(b)}{=} 
    b(a|s^{2}_i) \sum_{a'}P(s^2_i|s^1_1,a')b(a'|s^{1}_1),
\end{align}
where, $(a)$ follows from \eqref{eq:dag2}, and $(b)$ follows from \eqref{eq:dag6} and taking into account all the possible paths to reach $s^2_i$ from $s^1_1$. 
For the third level we can show that the proportion 
\begin{align*}
    b(a|s^{3}_i) &= \frac{T_n(s^{3}_i, a)}{\sum_{a'} T_n(s^{3}_i, a')} 
    \overset{(a)}{=} \frac{T_n(s^{3}_i, a)}{\sum_{s^2_j}\sum_{a'}P(s^3_i|s^2_j,a')T_n(s^{2}_j, a')}\\
    &\overset{(b)}{=}\dfrac{T_n(s^{3}_i, a)}{ \sum_{s^2_j}\sum_{a'}P(s^2_j|s^1_1,a')b(a'|s^1_1)\sum_{a^{''}}P(s^3_i|s^2_j,a^{''})b(a^{''}|s^2_j)}\\
    &\overset{}{\implies} T_n(s^{3}_i, a)/n = b(a|s^{3}_i)  \sum_{s^2_j}\sum_{a'}P(s^2_j|s^1_1,a')b(a'|s^1_1)\sum_{a^{''}}P(s^3_i|s^2_j,a^{''})b(a^{''}|s^2_j)
\end{align*}
where, $(a)$ follows from \eqref{eq:dag4}, and $(b)$ follows from \eqref{eq:dag6} and taking into account all the possible paths to reach $s^3_i$ from $s^1_1$. 
Again note that we use $b(a|s)$ to denote the optimization variable and $b^*(a|s)$ to denote the optimal sampling proportion.
Then the optimization problem in \eqref{eq:dag-var} can be restated as,

\begin{align*}
    \min_{\bb} & \sum_{a}\dfrac{\pi^2(a|s^{1}_1)\sigma^2(s^{1}_1, a)}{b(a|s^{1}_1)} + \sum_{a}\pi^2(a|s^{1}_1)\sum_{s^2_j}\sum_{a'}\dfrac{\pi^2(a'|s^{2}_j)\sigma^2(s^{2}_j, a')}{b(a'|s^2_j)\underbrace{\sum_{a_1}P(s^2_j|s^1_1,a_1)b(a_1|s^{1}_1)}_{\text{All possible path to reach $s^2_j$ from $s^1_1$}}} \nonumber\\
    &\qquad+ \sum_{a}\pi^2(a|s^{1}_1)\sum_{s^2_j}\sum_{a'}\pi^2(a'|s^{2}_j)\sum_{s^3_m}\sum_{a^{''}}\dfrac{\pi^2(a^{''}|s^{3}_m)\sigma^2(s^{3}_m, a^{''})}{b(a^{''}|s^{3}_m)\underbrace{\sum_{s^2_j}\sum_{a_1}P(s^2_j|s^1_1,a_1)b(a_1|s^1_1)\sum_{a_2}P(s^3_i|s^2_j,a_2)b(a_2|s^2_j)}_{\text{All possible path to reach $s^3_m$ from $s^1_1$}}}\\
    \quad \textbf{s.t.} \quad & \forall s, \quad \sum_a b(a|s) = 1 \nonumber\\
    &\forall s,a \quad b(a|s) > 0.
\end{align*}
Now introducing the Lagrange multiplier we get that
\begin{align}
    L(\bb,\lambda) &= \min_{\bb} \sum_{a}\dfrac{\pi^2(a|s^{1}_1)\sigma^2(s^{1}_1, a)}{b(a|s^{1}_1)} + \sum_{a}\pi^2(a|s^{1}_1)\sum_{s^2_j}\sum_{a'}\dfrac{\pi^2(a'|s^{2}_j)\sigma^2(s^{2}_j, a')}{b(a'|s^2_j)\sum_{a_1}P(s^2_j|s^1_1,a_1)b(a_1|s^{1}_1)} \nonumber\\
    &\qquad+ \sum_{a}\pi^2(a|s^{1}_1)\sum_{s^2_j}\sum_{a'}\pi^2(a'|s^{2}_j)\sum_{s^3_m}\sum_{a^{''}}\dfrac{\pi^2(a^{''}|s^{3}_m)\sigma^2(s^{3}_m, a^{''})}{b(a^{''}|s^{3}_m)\sum_{s^2_j}\sum_{a_1}P(s^2_j|s^1_1,a_1)b(a_1|s^1_1)\sum_{a_2}P(s^3_i|s^2_j,a_2)b(a_2|s^2_j)}\nonumber\\
    &\qquad + \sum_s\lambda_s\left(\sum_a b(a|s) - 1\right). \label{eq:L-dag-var}
\end{align}
Now we need to solve for the KKT condition. Differentiating \eqref{eq:L-dag-var} with respect to $b(a^{''}|s^{3}_m)$, $b(a^{'}|s^{2}_j)$, $b(a|s^{1}_1)$, and $\lambda_s$ we get

\begin{align}
    \nabla_{b(a^{''}|s^{3}_m)} L(\bb,\mathbf{\lambda}) &= -\sum_{a}\pi^2(a|s^{1}_1)\sum_{s^2_j}\sum_{a'}\pi^2(a'|s^{2}_j)\sum_{s^3_m}\sum_{a^{''}}\dfrac{\pi^2(a^{''}|s^{3}_m)\sigma^2(s^{3}_m, a^{''})}{b^2(a^{''}|s^{3}_m)\sum_{s^2_j}\sum_{a_1}P(s^2_j|s^1_1,a_1)b(a_1|s^1_1)\sum_{a_2}P(s^3_i|s^2_j,a_2)b(a_2|s^2_j)} \nonumber\\ 
    &+ \lambda_{s^3_m}\label{eq:dag-lambda1}\\
    \nabla_{b(a^{'}|s^{2}_j)} L(\bb,\mathbf{\lambda}) &= - \sum_{a}\pi^2(a|s^{1}_1)\sum_{s^2_j}\sum_{a'}\dfrac{\pi^2(a'|s^{2}_j)\sigma^2(s^{2}_j, a')}{b^2(a'|s^2_j)\sum_{a_1}P(s^2_j|s^1_1,a_1)b(a_1|s^{1}_1)} \label{eq:dag-lambda2}\\
    &- \sum_{a}\pi^2(a|s^{1}_1)\sum_{s^2_j}\sum_{a'}\pi^2(a'|s^{2}_j)\sum_{s^3_m}\sum_{a^{''}}\dfrac{\pi^2(a^{''}|s^{3}_m)\sigma^2(s^{3}_m, a^{''})}{b(a^{''}|s^{3}_m)\left(\sum_{s^2_j}\sum_{a_1}P(s^2_j|s^1_1,a_1)b(a_1|s^1_1)\sum_{a_2}P(s^3_i|s^2_j,a_2)b(a_2|s^2_j)\right)^2}\nonumber\\
    &+ \lambda_{s^2_j}\nonumber\\
    \nabla_{b(a|s^{1}_1)} L(\bb,\mathbf{\lambda}) &= -\sum_{a}\dfrac{\pi^2(a|s^{1}_1)\sigma^2(s^{1}_1, a)}{b^2(a|s^{1}_1)} - \sum_{a}\pi^2(a|s^{1}_1)\sum_{s^2_j}\sum_{a'}\dfrac{\pi^2(a'|s^{2}_j)\sigma^2(s^{2}_j, a')}{b(a'|s^2_j)\left(\sum_{a_1}P(s^2_j|s^1_1,a_1)b(a_1|s^{1}_1)\right)^2} \label{eq:dag-lambda3}\\
    &- \sum_{a}\pi^2(a|s^{1}_1)\sum_{s^2_j}\sum_{a'}\pi^2(a'|s^{2}_j)\sum_{s^3_m}\sum_{a^{''}}\dfrac{\pi^2(a^{''}|s^{3}_m)\sigma^2(s^{3}_m, a^{''})}{b(a^{''}|s^{3}_m)\left(\sum_{s^2_j}\sum_{a_1}P(s^2_j|s^1_1,a_1)b(a_1|s^1_1)\sum_{a_2}P(s^3_i|s^2_j,a_2)b(a_2|s^2_j)\right)^2}\nonumber\\
    &+ \lambda_{s^1_1}\nonumber\\
    \nabla_{\lambda_{s}} L(\bb,\mathbf{\lambda}) &= \sum_a b(a|s) - 1. \label{eq:dag-lambda4}
\end{align}
Now to remove $\lambda_{s^3_m}$ from \eqref{eq:dag-lambda1} we first set \eqref{eq:dag-lambda1} to $0$ and show that
\begin{align}
    \lambda_{s^3_m} &= \sum_{a}\pi^2(a|s^{1}_1)\sum_{s^2_j}\sum_{a'}\pi^2(a'|s^{2}_j)\sum_{s^3_m}\sum_{a^{''}}\dfrac{\pi^2(a^{''}|s^{3}_m)\sigma^2(s^{3}_m, a^{''})}{b^2(a^{''}|s^{3}_m)\left(\sum_{s^2_j}\sum_{a_1}P(s^2_j|s^1_1,a_1)b(a_1|s^1_1)\sum_{a_2}P(s^3_i|s^2_j,a_2)b(a_2|s^2_j)\right)^2}\nonumber\\
    \implies & b(a^{''}|s^{3}_m) = \sqrt{\dfrac{1}{\lambda_{s^3_m}} \sum_{a}\pi^2(a|s^{1}_1)\sum_{s^2_j}\sum_{a'}\pi^2(a'|s^{2}_j)\sum_{s^3_m}\sum_{a^{''}}\dfrac{\pi^2(a^{''}|s^{3}_m)\sigma^2(s^{3}_m, a^{''})}{\left(\sum_{s^2_j}\sum_{a_1}P(s^2_j|s^1_1,a_1)b(a_1|s^1_1)\sum_{a_2}P(s^3_i|s^2_j,a_2)b(a_2|s^2_j)\right)^2} }. \label{eq:dag-level-31}
\end{align}
Then setting \eqref{eq:dag-lambda4} to $0$ we have
\begin{align}
    &\sum_{a^{''}}\sqrt{\dfrac{1}{\lambda_{s^3_m}} \sum_{a}\pi^2(a|s^{1}_1)\sum_{s^2_j}\sum_{a'}\pi^2(a'|s^{2}_j)\sum_{s^3_m}\sum_{a^{''}}\dfrac{\pi^2(a^{''}|s^{3}_m)\sigma^2(s^{3}_m, a^{''})}{\left(\sum_{s^2_j}\sum_{a_1}P(s^2_j|s^1_1,a_1)b(a_1|s^1_1)\sum_{a_2}P(s^3_i|s^2_j,a_2)b(a_2|s^2_j)\right)^2} } = 1\nonumber\\
    \implies & \lambda_{s^3_m}  =\sum_{a^{''}}\sqrt{ \sum_{a}\pi^2(a|s^{1}_1)\sum_{s^2_j}\sum_{a'}\pi^2(a'|s^{2}_j)\sum_{s^3_m}\sum_{a^{''}}\dfrac{\pi^2(a^{''}|s^{3}_m)\sigma^2(s^{3}_m, a^{''})}{\left(\sum_{s^2_j}\sum_{a_1}P(s^2_j|s^1_1,a_1)b(a_1|s^1_1)\sum_{a_2}P(s^3_i|s^2_j,a_2)b(a_2|s^2_j)\right)^2} } \label{eq:dag-level-32}
\end{align}
Using \eqref{eq:dag-level-31} and \eqref{eq:dag-level-32} we can show that the optimal sampling proportion is given by
\begin{align*}
    b^*(a^{''}|s^{3}_m) = \dfrac{\pi^2(a^{''}|s^{3}_m)\sigma^2(s^{3}_m, a^{''})}{\sum_{a}\pi^2(a^{}|s^{3}_m)\sigma^2(s^{3}_m, a^{})}
\end{align*}
Similarly we can show that setting \eqref{eq:dag-lambda2} and \eqref{eq:dag-lambda4} setting to $0$ and removing $\lambda_{s^2_j}$ 
\begin{align*}
    b^{*, (2)}(a'|s^2_j) & \propto \sum_{a}\pi^2(a|s^{1}_1)\sum_{s^2_j}\sum_{a'}\dfrac{\pi^2(a'|s^{2}_j)\sigma^2(s^{2}_j, a')}{\sum_{a_1}P(s^2_j|s^1_1,a_1)b^*(a_1|s^{1}_1)} \\
    & + \sum_{a}\pi^2(a|s^{1}_1)\sum_{s^2_j}\sum_{a'}\pi^2(a'|s^{2}_j)\sum_{s^3_m}\sum_{a^{''}}\dfrac{\pi^2(a^{''}|s^{3}_m)\sigma^2(s^{3}_m, a^{''})}{b^*(a^{''}|s^{3}_m)\left(\dfrac{\sum_{s^2_j}\sum_{a_1}P(s^2_j|s^1_1,a_1)b^*(a_1|s^1_1)\sum_{a_2}P(s^3_i|s^2_j,a_2)b^*(a_2|s^2_j)}{b^*(a'|s^2_j)}\right)^2}
\end{align*}
Finally, setting \eqref{eq:dag-lambda3} and \eqref{eq:dag-lambda4} setting to $0$ and removing $\lambda_{s^1_1}$ we have
\begin{align*}
    b^{*, (2)}(a|s^1_1) \propto & \sum_{a}\pi^2(a|s^{1}_1)\sigma^2(s^{1}_1, a) + \sum_{a}\pi^2(a|s^{1}_1)\sum_{s^2_j}\sum_{a'}\dfrac{\pi^2(a'|s^{2}_j)\sigma^2(s^{2}_j, a')}{b^*(a'|s^2_j)} \\
    & + \sum_{a}\pi^2(a|s^{1}_1)\sum_{s^2_j}\sum_{a'}\pi^2(a'|s^{2}_j)\sum_{s^3_m}\sum_{a^{''}}\dfrac{\pi^2(a^{''}|s^{3}_m)\sigma^2(s^{3}_m, a^{''})}{b^*(a^{''}|s^{3}_m)\left(\dfrac{\sum_{s^2_j}\sum_{a_1}P(s^2_j|s^1_1,a_1)b^*(a_1|s^1_1)\sum_{a_2}P(s^3_i|s^2_j,a_2)b^*(a_2|s^2_j)}{b^*(a|s^1_1)}\right)^2}
\end{align*}
This shows the cyclical dependency of $b^*(a|s^1_1)$ and $b^*(a|s^2_j)$. 
\end{proof}

\section{Additional Experimental Details}
\label{app:expt-details}

\subsection{Estimate $B$ in DAG}

Recall that in a DAG $\G$ we have a cyclical dependency following \Cref{prop:dag}. Hence, we do an approximation of the optimal sampling proportion in $\G$ by using the tree formulation from \Cref{thm:L-step-tree}. However, since there are multiple paths to the same state in $\G$ we have to iteratively compute the normalization factor $B$. To do this we use the following \Cref{alg:revar-g}.

\begin{algorithm}
\caption{Estimate $B_0(s)$ for $\G$}
\label{alg:revar-g}
\begin{algorithmic}[1]
\State Initialize $B_L(s)=0$ for all $s\in\S$ 
\For{$t'\in L-1, \ldots, 0$} 
\State $B_{t'}(s) = \sum\limits_{a} \sqrt{\pi^2(a|s)\!\left(\sigma^2(s,\! a) + \gamma^2\sum\limits_{s'}P(s'|s, a) B_{t'+1}^2(s)\right)}$
\EndFor
\State \textbf{Return} $B_0$.
\end{algorithmic}
\end{algorithm}

\subsection{Implementation Details}
In this section we state additional experimental details. We implement the following competitive baselines:

\textbf{(1)} \onp: The \onp{} baseline follows the target probability when sampling actions at each state.

\textbf{(2)} \cbvar: This baseline is a bandit policy which samples an action based only on the statistics of the current state. At every time $t+1$ in episode $k$, \cbvar\ sample an action
\begin{align*}
    I^k_{t+1} = \argmax_{a\in\A}(2\eta + 4\eta^2)\sqrt{\dfrac{2\pi(a|s)\wsigma^{(2),k}_t(s,a)\log(SAn(n+1))}{T^k_t(s,a)}} + \dfrac{7\log(SAn(n+1))}{3T^k_t(s,a)}
\end{align*}
where, $n$ is the total budget. This policy is similar to UCB-variance of \citet{audibert2009exploration} and uses the empirical Bernstein inequality \citep{maurer2009empirical}. However we do not use the mean estimate $\wmu^k_t(s,a)$ of an action so that \cbvar\ explores continuously rather than maximizing the rewards. Also note that to have a fair comparison with \rev we use a large constant $(2\eta+4\eta^2)$ and $\log$ term instead of just $2$ and $\log t$.



\subsection{Ablation study}

In this experiment we show an ablation study of different values of the upper confidence bound constant associated with $\usigma^k_t(s,a)$. Recall from \eqref{eq:tree-ucb} that
\begin{align*}
    \usigma^{k}_{t}(s^{\ell}_{i},a) \!\coloneqq\! \wsigma^k_{t}(s^{\ell}_{i},a) \!+\! 2c\sqrt{\dfrac{\log(SAn(n\!+\!1)/\delta)}{T^k_{t}( s^{\ell}_{i},a)}}
\end{align*}
where, $c$ is the upper confidence bound constant, and $n=KL$. From \Cref{thm:regret-rev} we know that the theoretically correct constant is to use $2\eta + 4\eta^2$. However, since our upper bound is loose because of union bounds over states, actions, episodes and horizon, we ablate the value of $c$ to see its impact on \rev. From \Cref{fig:ablation} we see that too large a value of $c=10$ and we end up doing too much exploration rather than focusing on the state-action pair that reduces variance. However, even with too small values of $c \in \{0, 0.1\}$ we end up doing less exploration and have very bad plug-in estimates of the variance. Consequently this increases the MSE of \rev. The value $c=1$ seems to do relatively well against all the other choices.
\begin{figure}
    \centering
    \includegraphics[scale=0.55]{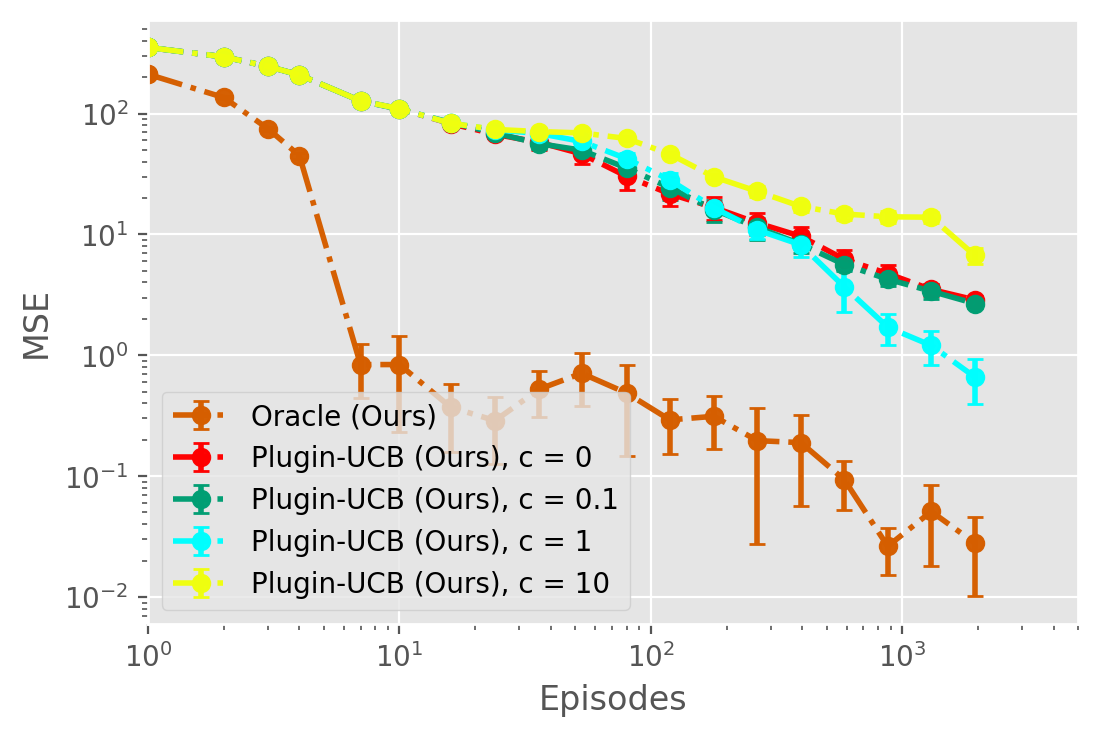}
    \caption{Ablation study of UCB constant}
    \label{fig:ablation}
\end{figure}

\newpage
\section{Table of Notations}
\label{table-notations}

\begin{table}[!tbh]
    \centering
    \begin{tabular}{|p{10em}|p{33em}|}
        \hline\textbf{Notations} & \textbf{Definition} \\\hline
        $s^{\ell}_i$ & State $s$ in level $\ell$ indexed by $i$ \\\hline
        $\pi(a|s^{\ell}_i)$  & Target policy probability for action $a$ in $s^{\ell}_i$ \\\hline
        $b(a|s^{\ell}_i)$  & Behavior policy probability for action $a$ in $s^{\ell}_i$ \\\hline
        $\sigma^2(s^{\ell}_i, a)$  & Variance of action $a$ in $s^{\ell}_i$ \\\hline
        $\wsigma^{(2),k}_{t}(s^{\ell}_i, a)$  & Empirical variance of action $a$ in $s^{\ell}_i$ at time $t$ in episode $k$\\\hline
        $\usigma^{(2),k}_{t}(s^{\ell}_i, a)$  & UCB on variance of action $a$ in $s^{\ell}_i$ at time $t$ in episode $k$\\\hline
        $\mu(s^{\ell}_i, a)$  & Mean of action $a$ in $s^{\ell}_i$\\\hline
        $\wmu^{k}_{t}(s^{\ell}_i, a)$  & Empirical mean of action $a$ in $s^{\ell}_i$ at time $t$ in episode $k$\\\hline
        $\mu^{2}(s^{\ell}_i, a)$  & Square of mean of action $a$ in $s^{\ell}_i$\\\hline
        $\wmu^{(2),k}_{t}(s^{\ell}_i, a)$  & Square of empirical mean of action $a$ in $s^{\ell}_i$ at time $t$ in episode $k$\\\hline
        $T_n(s^{\ell}_i, a)$  & Total Samples of action $a$ in $s^{\ell}_i$ after $n$ timesteps\\\hline
        $T_n(s^{\ell}_i)$  & Total samples of actions in $s^{\ell}_i$ as $\sum_a T_n(s^{\ell}_i, a)$ after $n$ timesteps (State count)\\\hline
        $T^k_t(s^{\ell}_i,a)$ & Total samples of action $a$ taken till episode $k$ time $t$ in $s^{\ell}_i$\\\hline
        $T^k_t(s^{\ell}_i,a,s^{\ell+1}_j)$ & Total samples of action $a$ taken till episode $k$ time $t$ in $s^{\ell}_i$ to transition to $s^{\ell+1}_j$\\\hline
        $P(s^{\ell+1}_j|s^{\ell}_i,a)$  & Transition probability of taking action $a$ in state $s^{\ell}_i$ and transition to state $s^{\ell+1}_j$\\\hline
        $\wP^k_t(s^{\ell+1}_j|s^{\ell}_i,a)$  & Empirical transition probability of taking action $a$ in state $s^{\ell}_i$ and moving to state $s^{\ell+1}_j$ at time $t$ episode $k$\\\hline
        $\wP^{(2),k}_{t}(s^{\ell+1}_j|s^{\ell}_i,a)$  & Empirical square of transition probability of taking action $a$ in state $s^{\ell}_i$ and moving to state $s^{\ell+1}_j$ at time $t$ episode $k$\\\hline
        & $ \sum_a\sqrt{\pi^2(a|s^{\ell}_{i})\sigma^2(s^{\ell}_{i}, a)}, \text{ if } \ell = L$\\ 
        $B(s^{\ell}_{i})\coloneqq\begin{cases}\vspace{3em}\end{cases}$  & $\sum_a \sqrt{\sum\limits_{s^{\ell+1}_j}\pi^2(a|s^{\ell}_{i})\left(\sigma^2(s^{\ell}_{i}, a) + \gamma^2P(s^{\ell+1}_j|s^{\ell}_i, a) B^2(s^{\ell+1}_{j})\right)}, \text{ if } \ell \!\!\neq\!\! L$\\\hline
        & $ \sum_a\sqrt{\pi^2(a|s^{\ell}_{i})\wsigma^{(2),k}_t(s^{\ell}_{i}, a)}, \text{ if } \ell = L$\\ 
        $\wB(s^{\ell}_{i})\coloneqq\begin{cases}\vspace{3em}\end{cases}$  & $\sum_a \sqrt{\sum\limits_{s^{\ell+1}_j}\pi^2(a|s^{\ell}_{i})\left(\wsigma^{(2),k}_t(s^{\ell}_{i}, a) + \gamma^2\wP^{k}_t(s^{\ell+1}_j|s^{\ell}_i, a) \wB^{(2),k}_{t}(s^{\ell+1}_j)\right)}, \text{ if } \ell \!\!\neq\!\! L$\\\hline
    \end{tabular}
    \vspace{1em}
    \caption{Table of Notations}
    \label{tab:my_label}
\end{table}

\end{document}